\documentclass{article}
\PassOptionsToPackage{numbers, compress}{natbib}
\usepackage[preprint]{neurips_2026}
\usepackage[utf8]{inputenc}
\usepackage[T1]{fontenc}
\usepackage{hyperref}
\usepackage{url}
\usepackage{booktabs}
\usepackage{amsfonts}
\usepackage{nicefrac}
\usepackage{microtype}
\usepackage{xcolor}
\usepackage{subcaption}
\usepackage{titletoc}
\usepackage{amsmath,mathtools,bm}
\usepackage{amsthm}
\usepackage[T1]{fontenc}
\usepackage{microtype}
\usepackage{enumitem}
\usepackage{booktabs,tabularx,array, makecell}
\hypersetup{colorlinks=true, linkcolor=blue!50!black, urlcolor=blue!50!black, citecolor=blue!50!black}
\newtheorem{theorem}{Theorem}
\newtheorem{proposition}[theorem]{Proposition}
\newtheorem{lemma}[theorem]{Lemma}
\newtheorem{corollary}[theorem]{Corollary}
\newtheorem{remark}[theorem]{Remark}
\newtheorem{definition}[theorem]{Definition}

\newcommand{\X}{\mathcal{S}}
\newcommand{\A}{\mathcal{A}}

\newcommand{\R}{\mathbb{R}}
\newcommand{\E}{\mathbb{E}}

\title{Quotient-Categorical Representations for Bellman-Compatible Average-Reward Distributional Reinforcement Learning}

\author{Ege C.~Kaya, Aliasghar~Pourghani, Vijay~Gupta, Abolfazl~Hashemi  \\
  Elmore Family School of Electrical and Computer Engineering\\
  Purdue University\\
  West Lafayette, IN 47906 \\
  \texttt{kayae@purdue.edu, apourgha@purdue.edu, gupta869@purdue.edu, abolfazl@purdue.edu}}

\begin{document}
\maketitle

\begin{abstract}
Average-reward reinforcement learning requires estimating the gain and the bias, which is defined only up to an additive constant. This makes direct distributional analogues ill-posed on the real line. We introduce a quotient-space formulation in which state-indexed bias laws are identified up to a common translation, together with a categorical parameterization that respects this symmetry. On this quotient-categorical space, we define a projected average-reward distributional operator and show that it is well-defined, non-expansive in a coordinate Cram\'er metric, and admits fixed points. We then study sampled recursions whose mean-field maps are asynchronous relaxations of this operator. In an idealized centered-reward setting, a one-state temporal-difference update enjoys almost sure convergence together with finite-iteration residual bounds under both i.i.d.\ and Markovian sampling. When the gain is unknown, we augment the recursion with an online gain estimator, and prove non-expansiveness and Markovian convergence of the resulting coupled scheme. Finally, we show that synchronous exact updates are gain-independent at the quotient-law level, isolating a structural contrast between ideal quotient distributions and practical fixed-grid categorical representations.
\end{abstract}

\section{Introduction}

Distributional reinforcement learning (DRL) is well developed in the discounted setting, where the standard contraction arguments and Bellman-compatible return distributions support both analysis and algorithms \citep{bellemare2017distributional, bellemare2023distributional, rowland2018analysis, dabney2018distributional, dabney2018implicit}. The average-reward setting is structurally different. When the discounting is removed, the contraction geometry is lost, and fixed-policy evaluation is no longer described by a discounted return. Instead, average-reward evaluation revolves around two objects: the \emph{gain}, which is the long-run average reward, and the \emph{bias}, which solves the average-reward analogue of the Bellman equation, namely the Poisson equation. The bias is the object that distinguishes states and policies with the same gain, captures transient differential performance pre-stationarity, and underlies average-reward evaluation and policy improvement \citep{puterman, sutton1998reinforcement, mahadevan1996average}. However, it is defined only up to an additive constant. This creates an obstruction for distributional modeling: a distributional bias cannot be represented as an ordinary state-indexed family of laws on $\R$ with a fixed origin, because translating every state's bias by the same constant should not change the object being learned.

We therefore model state-indexed bias laws modulo common translations. The resulting quotient viewpoint preserves the additive-constant symmetry of the Poisson equation, and our categorical construction turns this quotient object into a finite-dimensional representation. This makes it possible to define projected Bellman operators for the distributional bias, compare categorical iterates in a coordinate Cram\'er metric, and write temporal-difference (TD) algorithms whose fixed points are Bellman-compatible projected quotient-categorical bias laws. Our objective is complementary to the average-reward distributional criterion studied, e.g., by \citet{rojas2026differential}. That line of work refines the average-reward criterion itself through limiting per-step reward distributions. We instead refine the bias: the Bellman-compatible object that records state-dependent differential performance. Thus the goal is not merely to describe long-run reward statistics, but to construct a distributional analogue of the scalar bias that can be used in average-reward dynamic-programming and TD-learning arguments.

The remaining challenge is algorithmic. The projected maps we obtain are non-expansive rather than contractive, so discounted DRL contraction arguments do not apply. We show that the sampled categorical updates can be written as TD / stochastic approximation (SA) recursions in the classical sense \citep{robbins1951stochastic, sutton1988learning, tsitsiklis1994asynchronous, borkar1998asynchronous, borkar2008stochastic}, identify their mean-field maps, and prove that these maps are non-expansive and share fixed points with the projected bias operator. This connects the quotient-categorical bias recursion to recent SA results for non-expansive maps \citep{bravo2024stochastic, blaser2026asymptotic}.

\noindent\textbf{Contributions.} We put average-reward distributional bias estimation on firm foundations by defining the correct quotient-categorical target and proving convergence guarantees for TD-style categorical recursions. Our main contributions are as follows.

\begin{enumerate}[leftmargin=*,labelsep=0.2em, align=left, itemsep=2pt]
\item We establish that the quotient space of state-indexed law families by common translation is the natural space for a distributional analogue of the scalar bias.
\item We define an exact quotient-space operator and a quotient-categorical projection algorithm, proving in particular that the projected operator is well-defined, non-expansive, and admits a fixed point.
\item We analyze a centered-reward TD recursion for the quotient-categorical bias distribution, whose mean-field maps are non-expansive asynchronous relaxations of the projected operator, with a.s.\ convergence and finite-iteration residual bounds.
\item We introduce a practical online gain-estimated one-state TD recursion, which is non-expansive and admits a mean-field map with correct fixed points.
\end{enumerate}

\section{Related work}
DRL is built around Bellman-compatible return distributions in discounted MDPs, with categorical and quantile parameterizations providing the dominant practical representations \citep{bellemare2017distributional, rowland2018analysis, dabney2018distributional, dabney2018implicit}. Early theory studied the geometry, statistics, and relative advantages of categorical and quantile DRL \citep{rowland2018analysis, lyle2019comparative, rowland2019statistics}. Recent work has made many advancements by sharpening the statistical case for learning return distributions \citep{wang2023benefits, wang2024more}, providing non-asymptotic efficiency of distributional TD \citep{peng2024statistical, peng2025finite}, near-minimax guarantees in the generative-model regime \citep{rowland2024near}, establishing multivariate distributional dynamic programming \citep{wiltzer2024foundations}, offline distributional policy evaluation guarantees \citep{wu2023distributional}, and convergence analyses for KL-trained categorical updates \citep{kastner2025categorical}. In the average-reward setting, however, most work remains scalar, from early R-learning and average-reward foundations to modern off-policy, robust, and function-approximation results \citep{schwartz1993reinforcement,abounadi2001learning,mahadevan1996average, wan2021learning, wang2023model, vakili2024kernel, zurek2024span}. The closest prior average-reward DRL work is \citep{rojas2026differential}, where the authors study long-run per-step reward and differential return distributions via quantile representations. As opposed to these works, we seek a Bellman-compatible distributional analogue of the bias, which makes the quotient symmetry induced by additive-constant non-uniqueness central rather than incidental. On the algorithmic side, our finite-iteration analysis draws on fixed-point methods for non-expansive maps \citep{krasnosel1955two, mann1953mean, kim2007robustness,bravo2019rates,bravo2024stochastic,blaser2026asymptotic} and complements classical TD / SA analyses for contractive or linear settings \citep{dayan1994td,tsitsiklis1996analysis,bhandari2018finite,dalal2018finite,srikant2019finite,patil2023finite,qu2020finite,de2020fixed,chen2020finite,chen2022finite,chen2024lyapunov}.
\section{Preliminaries}\label{sec:prelim}

We work with a finite average-reward MDP \citep{puterman} $(\X, \A, R, P)$, where $\X = \{1,\dots,m\}$ is the finite set of states and $\A$ is the finite set of actions, $R: \X \times \A \to [0, 1]$ is the deterministic reward function and $P:\X \times \A \to \Delta(\X)$ is the transition kernel. A policy $\pi$ is fixed throughout, and we focus on its evaluation. With $\pi$ fixed, the induced process is a Markov reward process with transition matrix $P^\pi$, where $P^\pi_{ij} = \sum_{a\in\A}P(j \mid i, a) \,\pi(a \mid i)$ \citep{sutton1998reinforcement}. Throughout the paper, we assume that the induced Markov chain $\{S_t\}$ is irreducible and aperiodic, and write $\mu$ for its unique stationary distribution. We denote by $R_{ij}$ the finite-valued random one-step reward conditioned on $(S_t = i, S_{t+1} = j)$. We define the expected one-step reward vector as
\begin{equation}\label{eq:average-reward}
r^\pi_i := \sum_{j\in \X} P^\pi_{ij} \E[R_{ij}],\quad \text{for all } i \in \X.
\end{equation}

The (scalar) \emph{gain} or \emph{average reward} is then
\begin{equation}
\bar r^\pi := \sum_{i\in \X} \mu_i \sum_{j\in\X}P^\pi_{ij} \E[R_{ij}] = \sum_{i \in \X} \mu_i r^\pi_i.
\end{equation}

The \emph{bias} or \emph{relative value function} \citep{puterman} $v^\pi \in \R^m$ is any solution of the Poisson equation \citep{Meyn_Tweedie_Glynn_2009}
\begin{equation}\label{eq:poisson}
v^\pi = r^\pi - \bar r^\pi \mathbf 1 + P^\pi v^\pi.
\end{equation}
The Poisson equation is, in some sense, the analogue of the Bellman equation \citep{bellman1966dynamic} in the average-reward setting, where the discount factor has been set to $1$ and the rewards have been centered. 

The Poisson equation can equivalently be written as $(I- P^\pi) v^\pi = r^\pi - \bar r^\pi \mathbf 1$. Since the null space of $(I-P^\pi)$ contains any constant vector $c\mathbf 1 \in \R^m$, the solutions of \eqref{eq:poisson} are unique up to addition of a common constant. This scalar translation-invariance property of solutions of \eqref{eq:poisson} provides the intuition for the quotient-categorical formulation constructed in the rest of the work.

\section{Quotient-categorical bias estimation}\label{sec:quotient}

In this work, we seek to estimate the distribution of the bias. 
As motivated by the discussion in Section~\ref{sec:prelim}, our first objective is to recover an analogue of the translation-invariance property in the distributional setting. Let $\mathcal F^\X := \{\eta = (\eta_i)_{i \in \X}: \eta_i \in \mathcal P_2(\R)\}$ denote the set of state-indexed collections of probability laws on $\R$ with finite second moment. For $c \in \R$, let $\tau_c(x) := x+c$, and define \emph{common-translation} on $\mathcal F^\X$ by $((\tau_c)_\# \eta)_i := (\tau_c)_\# \eta_i$, where $\#$ denotes pushforward. We also define the equivalence relation $\approx$ on any pair $\eta, \zeta \in \mathcal F^\X$ by
\begin{equation}
\eta \approx \zeta \iff \exists c\in\R \text{ such that } \zeta = (\tau_c)_\#\eta,
\end{equation}
and denote the quotient space by $\mathcal F^\X / \approx$. Let us write $B_{ij} := R_{ij} - \bar r^\pi$ for the centered one-step reward, and $\nu_{ij} := \mathrm{Law}(B_{ij})$. We now define the average-reward distributional operator.
\begin{definition}
The \emph{average-reward distributional operator} $\mathcal T: \mathcal F^\X \to \mathcal F^\X$ is defined componentwise:
\begin{equation}\label{eq:avg-operator}
(\mathcal T \eta)_i := \sum _{j\in \X} P_{ij}\, (\nu_{ij}\,\ast\,\eta_j), \quad \text{for all } i \in \X.
\end{equation}
\end{definition}
\begin{proposition}\label{prop:T}
For every $\eta \in \mathcal F^\X$ and $c \in \R$,
\begin{equation}\label{eq:commute}
\mathcal T\bigl((\tau_c)_\# \eta \bigr) = (\tau_c)_\# \mathcal T\eta.
\end{equation}
Consequently, $\mathcal T$ induces a well-defined operator on the quotient space $\mathcal F^\X / \approx$:
\begin{equation}
\overline {\mathcal T}: \mathcal F^\X / \approx\; \to \mathcal F^\X / \approx, \quad \overline{\mathcal T}\bigl([\eta]\bigr) :=[\mathcal T \eta].
\end{equation}
\end{proposition}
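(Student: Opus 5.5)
The plan is to verify the commutation identity \eqref{eq:commute} componentwise and then derive well-definedness on the quotient directly from it. A preliminary check is that $\mathcal T$ maps $\mathcal F^\X$ into itself. Each $\nu_{ij}$ is the law of a finite-valued random variable, so it has finite second moment. The convolution of two laws in $\mathcal P_2(\R)$ is again in $\mathcal P_2(\R)$: if $X\sim\nu_{ij}$ and $Y\sim\eta_j$ are independent, then $\E[(X+Y)^2]\le 2\E X^2+2\E Y^2<\infty$. A finite convex combination (weights $P_{ij}$, summing to one over $j$) of laws in $\mathcal P_2(\R)$ stays in $\mathcal P_2(\R)$. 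Hence $(\mathcal T\eta)_i\in\mathcal P_2(\R)$ for every $i$.

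\textbf{Step 1: translation commutes with convolution.} For probability laws $\nu,\rho$ on $\R$ and $c\in\R$, we claim $\nu\ast(\tau_c)_\#\rho=(\tau_c)_\#(\nu\ast\rho)$. Take independent $X\sim\nu$ and $Y\sim\rho$. Then $Y+c\sim(\tau_c)_\#\rho$, and it is independent of $X$, so $X+(Y+c)$ has law $\nu\ast(\tau_c)_\#\rho$. The same variable equals $(X+Y)+c$, which has law $(\tau_c)_\#(\nu\ast\rho)$. Alternatively, one can compare $\int f\,d(\cdot)$ for bounded measurable $f$ and use Fubini.

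\textbf{Step 2: translation commutes with mixtures.} Pushforward is linear on measures, so $(\tau_c)_\#\sum_j P_{ij}\rho_j=\sum_j P_{ij}(\tau_c)_\#\rho_j$. This follows by evaluating both sides on any Borel set $A$: each side equals $\sum_j P_{ij}\rho_j(A-c)$.

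Combining the two steps gives
\[
(\mathcal T(\tau_c)_\#\eta)_i=\sum_j P_{ij}\,\nu_{ij}\ast(\tau_c)_\#\eta_j=\sum_j P_{ij}(\tau_c)_\#(\nu_{ij}\ast\eta_j)=(\tau_c)_\#(\mathcal T\eta)_i ,
\]
which is \eqref{eq:commute}.

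\textbf{Step 3: well-definedness on the quotient.} Suppose $\eta\approx\zeta$, so that $\zeta=(\tau_c)_\#\eta$ for some $c$. By \eqref{eq:commute}, $\mathcal T\zeta=(\tau_c)_\#\mathcal T\eta$, which means $\mathcal T\zeta\approx\mathcal T\eta$. Therefore $[\mathcal T\eta]$ does not depend on the choice of representative, and $\overline{\mathcal T}$ is well defined. The argument also uses, implicitly, that $\approx$ is an equivalence relation. This holds because $\tau_0=\mathrm{id}$, $\tau_{-c}\circ\tau_c=\mathrm{id}$, and $\tau_c\circ\tau_d=\tau_{c+d}$.

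None of these steps is a real obstacle. The only point needing care is Step 1, where one must keep the translation attached to a single factor of the convolution rather than to both. The coupling-of-random-variables argument handles this cleanly.
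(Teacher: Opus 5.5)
Your proposal is correct and follows the paper's proof essentially step for step: translation commutes with convolution, pushforward is linear on mixtures, and the commutation identity then makes $[\mathcal T\eta]$ independent of the representative. Your extra checks that $\mathcal T$ preserves $\mathcal P_2(\R)$ and that $\approx$ is an equivalence relation are valid additions that the paper leaves implicit.
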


The quotient formulation is instructive in seeing how the gain is not part of the quotient target itself. If we center rewards by an arbitrary constant $g \in \R$ instead of $\bar r^\pi$, the resulting state-indexed collection of laws differs only by a common translation. The following proposition formalizes this.
\begin{proposition}\label{prop:tg}
Let $g \in \R$. Define the \emph{$g$-centered average-reward distributional operator}
\begin{equation}
(\mathcal T_g \eta)_i := \sum_{j\in\X} P_{ij}(\nu_{ij}^{(g)}\,\ast\, \eta_j),\quad\text{where } \nu_{ij}^{(g)}:= \mathrm{Law}(R_{ij} -g).
\end{equation}
Then, for any $g, g' \in \R$ and $\eta \in \mathcal F^\X$, $[\mathcal T_g \eta] = [\mathcal T_{g'} \eta]$ in $\mathcal F^\X / \approx$.
\end{proposition}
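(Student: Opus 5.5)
The plan is to show directly that $\mathcal T_{g'}\eta = (\tau_{g-g'})_\#\,\mathcal T_g\eta$. Then $\mathcal T_g\eta \approx \mathcal T_{g'}\eta$ with translation constant $c = g-g'$, which is exactly the statement.

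\emph{Step 1: the one-step reward laws differ by a fixed translation.} Since $R_{ij}-g' = (R_{ij}-g) + (g-g')$, we have
\begin{equation*}
\nu^{(g')}_{ij} = (\tau_{g-g'})_\#\,\nu^{(g)}_{ij} \quad \text{for all } i,j\in\X .
\end{equation*}
This is immediate from the definition of pushforward. The shift $c=g-g'$ does not depend on $(i,j)$, and that uniformity is what lets the conclusion hold at the level of the quotient.

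\emph{Step 2: translation passes through convolution and mixtures.} I will use two elementary facts, the same ones that underlie Proposition~\ref{prop:T}:
\begin{itemize}
\item For laws $\alpha,\beta$ on $\R$, $\bigl((\tau_c)_\#\alpha\bigr)\ast\beta = (\tau_c)_\#(\alpha\ast\beta)$. If $X\sim\alpha$ and $Y\sim\beta$ are independent, then $(X+c)+Y = (X+Y)+c$.
\item Pushforward is linear on finite mixtures: $\sum_j P_{ij}(\tau_c)_\#\rho_j = (\tau_c)_\#\sum_j P_{ij}\rho_j$.
\end{itemize}
Applying these with $\alpha=\nu^{(g)}_{ij}$ and $\beta=\eta_j$ gives, for every $i\in\X$,
\begin{equation*}
(\mathcal T_{g'}\eta)_i = \sum_{j} P_{ij}\,(\tau_c)_\#\bigl(\nu^{(g)}_{ij}\ast\eta_j\bigr) = \bigl((\tau_c)_\#\,\mathcal T_g\eta\bigr)_i .
\end{equation*}

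\emph{Step 3: well-posedness.} I also need $\mathcal T_g\eta\in\mathcal F^\X$, so that the classes are taken in the right space. Second moments are preserved because $R_{ij}$ is finite-valued, hence bounded, so convolving with $\nu^{(g)}_{ij}$ keeps finite second moments, and so does taking a finite mixture.

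There is no real obstacle. The only point needing care is that the shift is uniform across all state pairs and is independent of $\eta$, which Step 1 makes explicit. As an alternative route, one could write $\mathcal T_{g'}\eta = (\tau_{g-g'})_\#\mathcal T_g\eta$ by noting that $\mathcal T_g$ equals $\mathcal T$ composed with a global reward shift, and then invoke the commutation relation \eqref{eq:commute}. The direct computation above is cleaner.
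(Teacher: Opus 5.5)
Your proof is correct and follows the same route as the paper. You identify $\nu^{(g')}_{ij} = (\tau_{g-g'})_\#\nu^{(g)}_{ij}$, push the translation through convolution and the $P_{ij}$-mixture, and conclude $(\mathcal T_{g'}\eta)_i = (\tau_{g-g'})_\#(\mathcal T_g\eta)_i$ for every $i$; your Step 3 on preservation of finite second moments is a small check the paper leaves implicit.
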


Guided by these observations, we introduce a categorical representation class which respects the translation-invariance property of the bias law. Let $\Theta = \{ \theta_1 < \cdots < \theta_d\} \subset \R$ be an ordered support with constant stride $\Delta:=\theta_{k+1} - \theta_k >0$ for $k=1,\dots, d-1$.\footnote{It is straightforward to use state-dependent supports and non-uniform stride, we forgo these for simplicity of exposition.} Let $\Delta_d$ denote the $d$-dimensional simplex. For $p \in \Delta^\X_d$ and $c \in \R$, define the representative family of laws
\begin{equation}
\eta_i^{p, c} := \sum_{k=1}^d p_{i, k}\, \delta_{\theta_k +c},\quad \text{for all } i \in \X.
\end{equation}
The state-indexed collection of coefficients $p$ therefore uniquely characterizes a quotient class of categorical distributions $[\eta^p] := [\eta^{p, c}] \in \mathcal F^\X / \approx$, independently of shift $c$. We call this a \emph{quotient-categorical} representation: categorical because
each state law is represented on the finite grid $\Theta$, and quotient because
the common shift $c$ is not part of the state, but only a choice of representative.

For each $c \in \R$, define the $c$-shifted support $\Theta_c := \{\theta +c : \theta \in \Theta\}$, and let $\Pi^{\Theta_c}_{\mathrm C}$ denote the standard linear-interpolation categorical projection \citep{bellemare2017distributional} onto $\Theta_c$, applied over all states. We will show that this projection is shift-equivariant: first shifting a distribution by $c$ and then projecting onto the $c$-shifted categorical support is equivalent to projecting it onto the unshifted support and then shifting by $c$.
\begin{lemma}[Projection shift-equivariance]\label{lem:proj-shift-equiv}
For every $\eta \in \mathcal P_2(\R)$ and $c \in \R$,
\begin{equation}
\Pi^{\Theta_c}_{\mathrm C} \bigl( (\tau_c)_\# \eta \bigr) = (\tau_c)_\# (\Pi^{\Theta}_{\mathrm C} \eta).
\end{equation}
\end{lemma}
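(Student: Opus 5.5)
The approach is to write the linear-interpolation projection explicitly through its action on Dirac masses and then extend by linearity and mixtures. For a support $\Theta=\{\theta_1<\dots<\theta_d\}$ with stride $\Delta$, the Cramér projection of $\delta_x$ is
\[
\Pi^{\Theta}_{\mathrm C}\delta_x=\sum_{k=1}^d h_k(x)\,\delta_{\theta_k},
\]
where $h_k$ are the hat functions. Explicitly, $h_k(x)=\bigl(1-|x-\theta_k|/\Delta\bigr)_+$ for interior $k$. At the boundaries the hat is saturated: $h_1(x)=1$ for $x\le\theta_1$ and $h_d(x)=1$ for $x\ge\theta_d$. For a general law $\eta\in\mathcal P_2(\R)$, the projection is
\[
\Pi^{\Theta}_{\mathrm C}\eta=\sum_k \Bigl(\int h_k\,d\eta\Bigr)\delta_{\theta_k}.
\]
The same formula holds on $\Theta_c$, with atoms $\theta_k+c$ and hat functions $h^{c}_k$.

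The key step is the identity $h^{c}_k(x+c)=h_k(x)$ for all $x$ and $k$. This holds because each $h^c_k$ depends on $x$ only through the differences $x-(\theta_j+c)$ and on the stride, which is unchanged by the shift. The boundary cases also match, since $x+c\le\theta_1+c$ if and only if $x\le\theta_1$, and likewise at $\theta_d$.

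With this identity, a change of variables for the pushforward gives
\[
\int h^c_k\,d\bigl((\tau_c)_\#\eta\bigr)=\int h^c_k(x+c)\,\eta(dx)=\int h_k\,d\eta .
\]
Hence
\[
\Pi^{\Theta_c}_{\mathrm C}\bigl((\tau_c)_\#\eta\bigr)=\sum_k\Bigl(\int h_k\,d\eta\Bigr)\delta_{\theta_k+c}.
\]
The right-hand side is exactly $(\tau_c)_\#\bigl(\Pi^{\Theta}_{\mathrm C}\eta\bigr)$, because $(\tau_c)_\#\delta_{\theta_k}=\delta_{\theta_k+c}$ and pushforward is linear on mixtures.

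The only step needing care is the boundary convention. Mass outside $[\theta_1,\theta_d]$ is assigned to the endpoint atoms, and this clipping must be verified to commute with translation, which reduces to the equivalences noted above. The stated result is per-state; the lemma for the state-indexed family then follows by applying it componentwise.

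Alternatively, one could characterize $\Pi_{\mathrm C}$ as the Cramér-metric nearest point on the grid. The Cramér distance between two laws is invariant under translating both of them. Translating therefore maps the set of laws supported on $\Theta$ bijectively onto the set supported on $\Theta_c$ and preserves all distances. It follows that the unique minimizer is transported as well, which gives the same conclusion.
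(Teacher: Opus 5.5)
Your proposal is correct and follows the paper's own argument: the paper notes that translating both the law and the support by $c$ leaves the linear-interpolation weights and the edge clipping unchanged, so the coefficients are the same. Your identity $h^{c}_k(x+c)=h_k(x)$, followed by the change of variables for the pushforward, is a precise write-up of that same observation, and your Cram\'er nearest-point alternative is also valid.
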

This same fact is illustrated in Figure~\ref{fig:translation_fixed_support}. Since every representative $\eta^{p, c}$ is supported on a $c$-shifted copy $\Theta_c$ of the same support $\Theta$, its Cram\'er geometry is determined entirely by the cumulative masses over the support. For $p \in \Delta_d$, let $F_p(\theta_k) := \sum_{j=1}^k p_j$ for $ k=1,\dots,d-1$. We define, for $p, q \in \Delta_d$, the coordinate Cram\'er metric by
\begin{equation}
\ell^\Theta_{\mathrm C}(p, q)^2 := \Delta \sum_{k=1}^{d-1}\bigl(F_p(\theta_k) - F_q(\theta_k)\bigr)^2.
\end{equation}
The statewise extension of this is the so-called supremum-Cram\'er metric, where for $p, q \in \Delta^\X_d$,
\begin{equation}
\ell^\Theta_{\mathrm C, \infty}(p, q) := \max_{i \in \X}\ell^\Theta_{\mathrm C}(p_i, q_i).
\end{equation}

The next proposition shows that these coordinate metrics coincide exactly with the Cram\'er metrics on the corresponding representative families.
\begin{proposition}\label{prop:cramer}
For every $p, q \in \Delta^\X_d$ and $c \in \R$,
\begin{equation}
\ell_{\mathrm C}(\eta^{p, c}_i, \eta^{q, c}_i) = \ell^\Theta_{\mathrm C}(p_i, q_i),\quad\text{for all } i \in \X.
\end{equation}
Consequently,
\begin{equation}
\ell_{\mathrm C, \infty}(\eta^{p, c}, \eta^{q, c}) = \ell^\Theta_{\mathrm C, \infty}(p, q),\quad\text{for all } c \in \R.
\end{equation}
\end{proposition}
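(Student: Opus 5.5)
The plan is to compute the Cram\'er distance directly from its definition, $\ell_{\mathrm C}(\eta,\zeta)^2 = \int_{\R} (F_\eta(x) - F_\zeta(x))^2\,dx$, where $F_\eta$ denotes the CDF. Fix a state $i$ and a shift $c$. Both $\eta^{p,c}_i$ and $\eta^{q,c}_i$ are supported on the shifted grid $\Theta_c = \{\theta_1+c<\dots<\theta_d+c\}$. Their CDFs are therefore right-continuous step functions that jump only at the points $\theta_k + c$.

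I will partition $\R$ into three kinds of regions. On $(-\infty, \theta_1 + c)$ both CDFs vanish. On $[\theta_d + c, \infty)$ both equal $1$. So the integrand is zero off the bounded interval $[\theta_1+c, \theta_d+c)$. On each cell $[\theta_k + c, \theta_{k+1} + c)$, for $k = 1,\dots,d-1$, the CDF of $\eta^{p,c}_i$ is constant and equal to $\sum_{j\le k} p_{i,j} = F_{p_i}(\theta_k)$. The same holds for $q$.

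Each such cell has length exactly $\Delta$, by the constant-stride assumption. The integral therefore splits as
\begin{equation*}
\int_{\R}(F_{\eta^{p,c}_i}-F_{\eta^{q,c}_i})^2\,dx = \sum_{k=1}^{d-1} \Delta\,\bigl(F_{p_i}(\theta_k)-F_{q_i}(\theta_k)\bigr)^2 = \ell^\Theta_{\mathrm C}(p_i,q_i)^2 .
\end{equation*}
Taking square roots gives the first claim. The value does not depend on $c$, because translation changes neither the cell lengths nor the constant CDF values. Equivalently, the Cram\'er distance is invariant under the common pushforward $(\tau_c)_\#$, by the substitution $x \mapsto x - c$.

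For the second claim, I take the supremum-Cram\'er metric to be $\ell_{\mathrm C,\infty}(\eta,\zeta) = \max_{i\in\X}\ell_{\mathrm C}(\eta_i,\zeta_i)$. Maximizing the statewise identity over $i \in \X$ then gives the result directly, for every $c$.

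There is no substantive obstacle here; the proof is a bookkeeping computation. The one step that needs care is the indexing of the cells and endpoints. The CDF value on $[\theta_k+c,\theta_{k+1}+c)$ must match the definition $F_p(\theta_k)$ with $k \le d-1$. The last atom contributes nothing, since both CDFs equal $1$ beyond it. Only the $d-1$ interior cells, each of width $\Delta$, contribute, which accounts for the factor $\Delta$ in the coordinate metric.
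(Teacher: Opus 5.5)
Your proposal is correct and follows the paper's proof essentially verbatim: both note that the CDF difference is constant on each cell $[\theta_k+c,\theta_{k+1}+c)$ with value $F_{p_i}(\theta_k)-F_{q_i}(\theta_k)$, sum over the $d-1$ cells of width $\Delta$, and then take square roots and the maximum over states. You also explicitly check the two unbounded regions where the integrand vanishes, which the paper leaves implicit.
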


Proposition~\ref{prop:cramer} shows that the representative families $\eta^{p, c}$ faithfully encode the relevant Cram\'er geometry in the coefficient vectors $p \in \Delta^\X_d$, regardless of the shift $c$, as long as the same $c$ is chosen for both representatives. We now use this representation to define the projected analogue of the $g$-centered operator $\mathcal T_g$ on categorical quotient coordinates. Proposition~\ref{prop:tg} previously showed that, at the quotient level, $\mathcal T_g$ does not depend on the choice of $g$: changing $g$ only changes the representative of the output, but lands in the same equivalence class. After categorical projection, however, the representative matters, since the projection is performed relative to a fixed shifted support $\Theta_c$.

\begin{definition}
\label{defn:proj-operator}
Let $g \in \R$ and $p \in \Delta^\X_d$ and choose $c \in \R$ arbitrarily. Applying $\mathcal T_g$ to the representative family $\eta^{p, c}$ and projecting each component back onto $\Theta_c$ yields another categorical family on $\Theta_c$. We define $\mathcal G_g(p) = q \in \Delta^\X_d$ to be the unique element that satisfies
\begin{equation}
\Pi^{\Theta_c}_{\mathrm C}(\mathcal T_g \eta^{p, c})_i = \eta^{q, c}_i,\quad \text{for all } i \in \X.
\end{equation}
\end{definition}
We note that this definition is independent of the choice of the representative shift $c$. Indeed, since by definition $\eta^{p, c'} = (\tau_{c'-c})_\#\eta^{p, c}$, the same argument as in Proposition~\ref{prop:T} gives
\begin{equation}
\mathcal T_g \eta^{p, c'} = (\tau_{c' -c})_\# \mathcal T_g \eta^{p, c}.
\end{equation}

\begin{figure}[t]
    \centering
    \includegraphics[width=\textwidth]{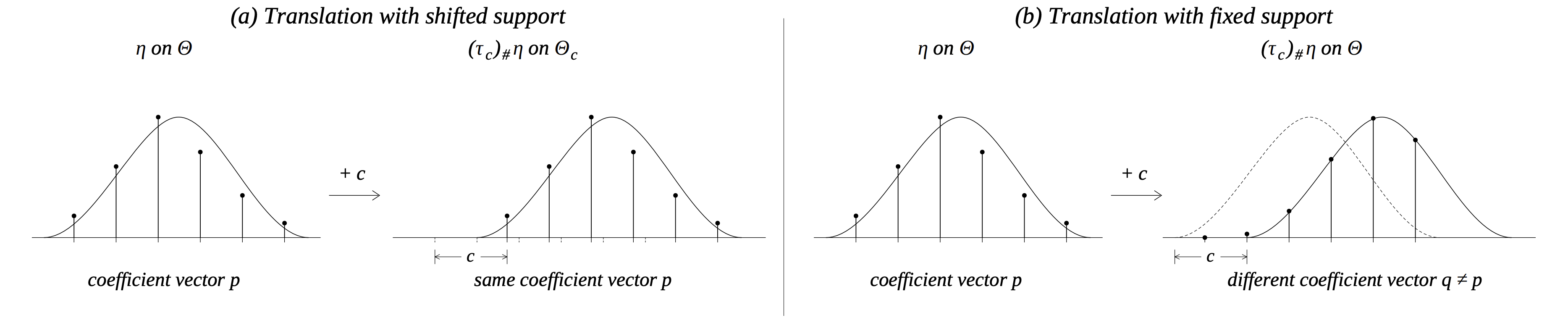}
    \caption{
    The left panel illustrates Lemma~\ref{lem:proj-shift-equiv}: translating both the law and support preserves the coefficient vector $p$. 
    The right panel illustrates that projecting a translated law back onto the same support generally gives different coefficients $q\neq p$.
    }
    \label{fig:translation_fixed_support}
\end{figure}

The application of Lemma~\ref{lem:proj-shift-equiv} shows that projection onto $\Theta_{c'}$ or onto $\Theta_c$ produces categorical families with the same coefficient vector $q$. Therefore, $\mathcal G_g$ is a well-defined operator on $\Delta^\X_d$ for any $g \in \R$. Lastly, we define $\mathcal G := \mathcal G_{\bar r^\pi}$ as the ``correctly-centered'' projected operator.

We now state the main theorem of this section, establishing the non-expansiveness of $\mathcal G_g$ in the coordinate Cram\'er metric. 

\begin{theorem}[Non-expansiveness of $\mathcal G_g$]\label{thm:gg-nonexp}
For every $g \in \R$, and every $p, q \in \Delta^\X_d$,
\begin{equation}
\ell_{\mathrm C, \infty}^\Theta \bigl(\mathcal G_g(p), \mathcal G_g(q)\bigr) \le \ell_{\mathrm C, \infty}^\Theta(p, q).
\end{equation}
In particular, the correctly-centered operator $\mathcal G$ is non-expansive.
\end{theorem}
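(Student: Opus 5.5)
Fix $g\in\R$, $p,q\in\Delta^\X_d$ and $c=0$, which is allowed because $\mathcal G_g$ does not depend on the shift. By Proposition~\ref{prop:cramer}, the coordinate Cram\'er distance between $\mathcal G_g(p)_i$ and $\mathcal G_g(q)_i$ equals the Cram\'er distance $\ell_{\mathrm C}$ between $\Pi^{\Theta}_{\mathrm C}(\mathcal T_g\eta^{p,0})_i$ and $\Pi^{\Theta}_{\mathrm C}(\mathcal T_g\eta^{q,0})_i$. Likewise $\ell^\Theta_{\mathrm C}(p_j,q_j)=\ell_{\mathrm C}(\eta^{p,0}_j,\eta^{q,0}_j)$. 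So the claim reduces to a chain of inequalities for the ordinary Cram\'er distance on $\mathcal P(\R)$, which I handle state by state.

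\textbf{Step 1 (projection).} I will use the standard fact \citep{rowland2018analysis} that $\Pi^{\Theta}_{\mathrm C}$ is non-expansive in $\ell_{\mathrm C}$; in fact it is the orthogonal projection in the Hilbert geometry of CDF differences. This gives
\[
\ell_{\mathrm C}\bigl(\Pi^\Theta_{\mathrm C}\mu,\Pi^\Theta_{\mathrm C}\mu'\bigr)\le \ell_{\mathrm C}(\mu,\mu').
\]
There is one subtlety: $\mathcal T_g\eta^{p,0}$ may put mass outside $[\theta_1,\theta_d]$, where the projection clips to the endpoints. I will check that clipping only shrinks $\int (F-F')^2$. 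On the interior, the projected CDF at $\theta_k$ equals the average of $F$ over $[\theta_k,\theta_{k+1}]$ against a hat weight. The bound then follows from Jensen's inequality applied cellwise, and the tails are simply discarded.

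\textbf{Step 2 (convolution).} For each pair $(i,j)$, convolution with the fixed law $\nu^{(g)}_{ij}$ is non-expansive in $\ell_{\mathrm C}$. Indeed, $F_{\nu*\mu}(x)-F_{\nu*\mu'}(x)=\int (F_\mu-F_{\mu'})(x-y)\,\nu(dy)$. Applying Jensen and Fubini, together with the translation invariance of Lebesgue measure, gives $\ell_{\mathrm C}(\nu*\mu,\nu*\mu')\le\ell_{\mathrm C}(\mu,\mu')$.

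\textbf{Step 3 (mixture).} The CDF of the mixture $\sum_j P_{ij}\mu_j$ is $\sum_jP_{ij}F_{\mu_j}$. By Minkowski's inequality in $L^2$, $\ell_{\mathrm C}\bigl(\sum_jP_{ij}\mu_j,\sum_jP_{ij}\mu'_j\bigr)\le\sum_jP_{ij}\ell_{\mathrm C}(\mu_j,\mu'_j)\le\max_j\ell_{\mathrm C}(\mu_j,\mu'_j)$. Chaining Steps 1--3 and taking the maximum over $i$ yields $\ell^\Theta_{\mathrm C,\infty}(\mathcal G_g(p),\mathcal G_g(q))\le\ell^\Theta_{\mathrm C,\infty}(p,q)$. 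Setting $g=\bar r^\pi$ gives the claim for $\mathcal G$.

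\textbf{Main obstacle.} I expect Step 1 to be the hard part: stating the projection's non-expansiveness carefully, including the boundary clipping, and confirming that the projected distance computed on $\Theta$ matches the coordinate formula. A further concern is that the differences here are differences of probability measures, so $F_\mu-F_{\mu'}$ vanishes at $\pm\infty$ and all integrals are finite given finite first moments. I would first try to cite the Cram\'er-projection Pythagorean identity from \citet{rowland2018analysis}. If that does not cover the clipping, I would prove it directly through the hat-function averaging representation.
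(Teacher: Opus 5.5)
Your proposal is correct and follows the paper's proof. Both fix $c=0$, transfer to $\ell_{\mathrm C}$ via Proposition~\ref{prop:cramer}, show $\mathcal T_g$ is non-expansive by Jensen and translation invariance, and invoke the Cram\'er non-expansiveness of $\Pi^\Theta_{\mathrm C}$ from \citet{rowland2018analysis}; the only difference is that the paper does convolution and $P_{ij}$-mixing in one squared Jensen step, obtaining $\ell_{\mathrm C}((\mathcal T_g\eta)_i,(\mathcal T_g\zeta)_i)^2\le\sum_j P_{ij}\,\ell_{\mathrm C}(\eta_j,\zeta_j)^2$, where you use Jensen then Minkowski. One correction if you write out the direct clipping argument: $F_{\Pi^\Theta_{\mathrm C}\mu}(\theta_k)=\Delta^{-1}\int_{\theta_k}^{\theta_{k+1}}F_\mu(x)\,\mathrm dx$ is a uniform box average, not a hat-weighted one, and this weight of height $\Delta^{-1}$ is what makes the cellwise Jensen bounds sum to $\int_{\theta_1}^{\theta_d}(F_\mu-F_{\mu'})^2\,\mathrm dx\le\ell_{\mathrm C}(\mu,\mu')^2$ without losing a constant.
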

\begin{proof}[Proof sketch]
Fix a common representative shift $c$ and compare the representative families $\eta^{p, c}$ and $\eta^{q, c}$. For each state $i$, the $i$th component of $\mathcal T_g$ is a $P_{ij}$-weighted average of convolved laws $\nu_{ij}^{(g)} \, \ast\, \eta_j$. Convolution with a fixed law is non-expansive in the Cram\'er metric, and the subsequent $P_{ij}$-averaging is also non-expansive. Thus $\mathcal T_g$ is non-expansive in $\ell_{\mathrm C, \infty}$. Composing with the standard non-expansive categorical projection and then using Proposition~\ref{prop:cramer} yields the claim. 
\end{proof}

\begin{corollary}[Existence of fixed points]\label{corol:fixed-points}
Let $g \in \R$. $\mathrm{fix}(\mathcal G_g)$ is nonempty.
\end{corollary}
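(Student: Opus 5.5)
Since $\Delta_d^\X$ is a product of finite-dimensional simplices, it is a nonempty, compact, convex subset of $\R^{m\times d}$. The natural route is therefore Brouwer's fixed-point theorem: it suffices to show that $\mathcal G_g:\Delta_d^\X\to\Delta_d^\X$ is continuous. Theorem~\ref{thm:gg-nonexp} gives exactly this once we check that the coordinate Cram\'er metric $\ell^\Theta_{\mathrm C,\infty}$ induces the Euclidean topology on $\Delta_d^\X$.

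\textbf{Step 1: $\ell^\Theta_{\mathrm C,\infty}$ is a genuine metric equivalent to the Euclidean one.} For a single state, the map $p\mapsto (F_p(\theta_1),\dots,F_p(\theta_{d-1}))$ is linear. Restricted to the simplex, it is injective, because $p_k = F_p(\theta_k)-F_p(\theta_{k-1})$ for $k<d$ (with $F_p(\theta_0):=0$) and $p_d = 1-F_p(\theta_{d-1})$. Hence $\ell^\Theta_{\mathrm C}(p,q)=\sqrt{\Delta}\,\|A(p-q)\|_2$ for a linear map $A$ that is injective on the tangent space $\{x:\sum_k x_k=0\}$. By finite-dimensionality, there are constants $a,b>0$ with $a\|p-q\|_2\le \ell^\Theta_{\mathrm C}(p,q)\le b\|p-q\|_2$. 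Taking the maximum over the finitely many states gives the same equivalence for $\ell^\Theta_{\mathrm C,\infty}$ on $\Delta_d^\X$.

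\textbf{Step 2: continuity and conclusion.} By Theorem~\ref{thm:gg-nonexp}, $\mathcal G_g$ is $1$-Lipschitz in $\ell^\Theta_{\mathrm C,\infty}$. By Step 1, it is therefore Lipschitz, and hence continuous, in the Euclidean norm. By Definition~\ref{defn:proj-operator} and the well-definedness argument after it, $\mathcal G_g$ maps $\Delta_d^\X$ into itself. The projection $\Pi^{\Theta_c}_{\mathrm C}$ always returns a probability vector on $\Theta_c$, including when mass falls outside the grid, since that mass is clipped to the endpoints. Brouwer's theorem then yields $p^\star\in\Delta_d^\X$ with $\mathcal G_g(p^\star)=p^\star$.

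There is no real obstacle; the only point needing care is Step 1. Non-expansiveness by itself does not give fixed points in general (Banach's theorem does not apply here), so the argument rests on compactness and convexity of the simplex together with the topological equivalence of the metrics. An alternative that avoids Brouwer would use Krasnosel'skii--Mann iterates on the compact convex set. Brouwer is the most direct choice.
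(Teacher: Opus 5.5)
Your proof is correct and matches the paper's argument: Brouwer's theorem on the nonempty compact convex set $\Delta^\X_d$, applied to the continuous map $\mathcal G_g$. Your Step 1 shows that $\ell^\Theta_{\mathrm C,\infty}$ is equivalent to the Euclidean metric, so the non-expansiveness in Theorem~\ref{thm:gg-nonexp} gives Euclidean continuity; this fills in a detail the paper only asserts.
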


\section{A centered stochastic approximation recursion}\label{sec:centered}
In the previous section, we defined the $g$-centered projected operator $\mathcal G_g$ and established its non-expansiveness in the coordinate Cram\'er metric. In particular, $\mathcal G$ is a non-expansive operator with at least one fixed point. We now turn from this exact projected backup to an SA thereof, based on samples from interaction with the MDP. Customarily, SA in RL comes in the form of a TD update, where we observe only one transition at a time and thus asynchronously modify only one state block.

A one-state sample is defined to be a tuple $y = (s, b, s') \in \mathcal Y := \X \times [-1, 1] \times \X$.\footnote{Note that rewards lying in $[0, 1]$ implies that centered rewards lie in $[-1, 1]$.} For $p \in \Delta^\X_d$, we define $H(p ,y) = q \in \Delta^\X_d$, the one-sample backup map, by
\begin{equation}\label{eq:one-sample-backup}
q_u = \begin{cases}
p_u, &u\ne s,\\
\tilde q, &u=s,
\end{cases}
\end{equation}
where $\tilde q \in \Delta_d$ is the unique coefficient vector satisfying $\Pi^\Theta_{\mathrm C}\bigl( (\tau_b)_\# \eta ^{p_{s'}, 0}\bigr) = \eta^{\tilde q, 0}$. The asynchronous recursion is thus the well-known stochastic Krasnosel'ski\u{\i}--Mann (SKM) iteration \citep{krasnosel1955two, mann1953mean} 
\begin{equation}\label{eq:KM}
p_{k+1} = p_k + \alpha_k \bigl(H(p_k, Y_k) - p_k\bigr),\quad k\ge0.
\end{equation}

The next proposition rewrites the $g$-centered projected operator $\mathcal G_g$ blockwise in categorical coefficients, which is the form needed to derive asynchronous SA updates. Before the statement, we introduce a slight abuse of notation. For $u \in \Delta_d$ and $c \in \R$, we write $
\eta^{u, c} := \sum_{k=1}^d u_k \delta_{\theta_k +c}$.
In particular, $(\eta^{p, c})_i = \eta^{p_i, c}$ for $p \in \Delta^\X_d$.
\begin{proposition}\label{prop:g-block-form}
Let $g \in \R$, $p \in \Delta^\X_d$, and $i \in \X$. The $i$th block of $\mathcal G_g(p)$ is the unique vector $q_i \in \Delta_d$ such that
\begin{equation}
\eta^{q_i, 0} = \Pi^\Theta_{\mathrm C}\Bigl(\sum_{j \in \X} P_{ij}(\nu_{ij}^{(g)} \,\ast\, \eta^{p_j, 0})\Bigr).
\end{equation}
\end{proposition}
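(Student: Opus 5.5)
The plan is to unwind Definition~\ref{defn:proj-operator} with the choice $c=0$, which is allowed because the definition does not depend on the representative shift. With $c=0$ we have $\Theta_0=\Theta$, and $\mathcal G_g(p)=q$ is characterized by
\begin{equation*}
\Pi^{\Theta}_{\mathrm C}(\mathcal T_g \eta^{p,0})_i = \eta^{q,0}_i \quad \text{for all } i\in\X .
\end{equation*}

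First I will observe that $\Pi^{\Theta}_{\mathrm C}$ acts statewise, since it is applied componentwise over all states. Hence the $i$th component of the left-hand side is $\Pi^\Theta_{\mathrm C}\bigl((\mathcal T_g\eta^{p,0})_i\bigr)$. By the definition of $\mathcal T_g$ and the notational convention $(\eta^{p,0})_j=\eta^{p_j,0}$, this component equals
\begin{equation*}
\Pi^\Theta_{\mathrm C}\Bigl(\sum_{j\in\X}P_{ij}\bigl(\nu^{(g)}_{ij}\ast\eta^{p_j,0}\bigr)\Bigr).
\end{equation*}
The right-hand side, $\eta^{q,0}_i$, equals $\eta^{q_i,0}$ by the same convention. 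This gives the displayed identity for the $i$th block.

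For uniqueness, I will note that the map $u\mapsto \eta^{u,0}$ from $\Delta_d$ to laws on $\Theta$ is injective, because the atoms $\theta_1<\cdots<\theta_d$ are distinct. I also need the projected law to be an element of this image. This holds because the linear-interpolation projection sends any law in $\mathcal P_2(\R)$ to a probability measure supported on $\Theta$: mass below $\theta_1$ or above $\theta_d$ is sent to the endpoints, and interior mass is split between the neighbouring atoms. So $q_i$ exists and is unique.

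The only delicate point is the independence from $c$, which justifies taking $c=0$. This was already argued after Definition~\ref{defn:proj-operator}, via Proposition~\ref{prop:T}-style commutation and Lemma~\ref{lem:proj-shift-equiv}, so I will simply invoke it. Beyond that, the argument is bookkeeping.
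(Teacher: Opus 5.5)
Your proof is correct and is essentially the paper's argument: the paper also fixes $c=0$ and reads the displayed identity directly off the definitions of $\mathcal G_g$ and $\mathcal T_g$, and your injectivity check of $u\mapsto\eta^{u,0}$ supplies the existence and uniqueness that the paper leaves implicit. The paper's proof then goes further, using linearity of $\Pi^\Theta_{\mathrm C}$ on mixtures to derive the coefficient form $\mathcal G_g(p)_i=\sum_{j\in\X}P_{ij}\E[L_B p_j]$ with $B\sim\nu^{(g)}_{ij}$; this is not part of the statement, but Proposition~\ref{prop:mean-field-fix} and Lemma~\ref{lem:appC-2} cite this proposition for exactly that formula, so you should add the mixture-linearity step if your proof is meant to support them.
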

The next lemma simply states that the one-sample backup map is also a non-expansion. 
\begin{lemma}\label{lem:one-sample-nonexp}
For every $y \in \mathcal Y$ and $p, q \in \Delta^\X_d$,
\begin{equation}
\ell^\Theta_{\mathrm C, \infty}\bigl( H(p, y), H(q, y)\bigr) \le \ell^\Theta_{\mathrm C, \infty}(p, q).
\end{equation}
\end{lemma}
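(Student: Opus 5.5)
The plan is to reduce the claim to a single-block statement and then chain together three non-expansive operations: translation, projection, and the identification of coefficient and Cramér distances. Fix $y=(s,b,s')$ and write $H(p,y)=p'$ and $H(q,y)=q'$. For every block $u\neq s$ we have $p'_u=p_u$ and $q'_u=q_u$. Hence $\ell^\Theta_{\mathrm C}(p'_u,q'_u)=\ell^\Theta_{\mathrm C}(p_u,q_u)\le \ell^\Theta_{\mathrm C,\infty}(p,q)$. It therefore suffices to show
\begin{equation*}
\ell^\Theta_{\mathrm C}(\tilde p,\tilde q)\le \ell^\Theta_{\mathrm C}(p_{s'},q_{s'}),
\end{equation*}
where $\tilde p,\tilde q$ are the updated $s$-blocks. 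This is enough because the right-hand side is bounded by the supremum over states. Taking the maximum over $u$ then gives the lemma.

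For the $s$-block, Proposition~\ref{prop:cramer} (applied with $c=0$) gives $\ell^\Theta_{\mathrm C}(\tilde p,\tilde q)=\ell_{\mathrm C}(\eta^{\tilde p,0},\eta^{\tilde q,0})$. By definition, this equals
\begin{equation*}
\ell_{\mathrm C}\bigl(\Pi^\Theta_{\mathrm C}(\tau_b)_\#\eta^{p_{s'},0},\ \Pi^\Theta_{\mathrm C}(\tau_b)_\#\eta^{q_{s'},0}\bigr).
\end{equation*}
Next, I invoke non-expansiveness of the linear-interpolation categorical projection $\Pi^\Theta_{\mathrm C}$ in the Cramér metric. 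This is the standard fact used in the proof of Theorem~\ref{thm:gg-nonexp}, namely that $\Pi^\Theta_{\mathrm C}$ is an orthogonal projection in the Hilbert geometry of CDF differences \citep{rowland2018analysis}. It bounds the quantity above by $\ell_{\mathrm C}\bigl((\tau_b)_\#\eta^{p_{s'},0},(\tau_b)_\#\eta^{q_{s'},0}\bigr)$.

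Translation by $b$ is an isometry for $\ell_{\mathrm C}$. The CDFs of both laws are shifted by the same $b$, and the change of variables $x\mapsto x-b$ in $\int (F-G)^2\,dx$ leaves the integral unchanged. Equivalently, this is the Dirac-convolution case of the convolution non-expansiveness used in Theorem~\ref{thm:gg-nonexp}. The bound thus becomes $\ell_{\mathrm C}(\eta^{p_{s'},0},\eta^{q_{s'},0})$, which equals $\ell^\Theta_{\mathrm C}(p_{s'},q_{s'})$ by Proposition~\ref{prop:cramer} again.

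The only delicate point is the projection step. The shifted laws are supported on $\Theta_b$, not on $\Theta$, and mass may fall outside $[\theta_1,\theta_d]$, where the projection clips it to the endpoints. Non-expansiveness still holds there, because the projected CDF on each cell $[\theta_k,\theta_{k+1})$ is the average of the original CDF against a fixed kernel, with the boundary cells handled by clipping. Jensen's inequality then gives the bound cell by cell. I would cite this standard result rather than rederive it.
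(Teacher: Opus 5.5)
Your proposal is correct and follows essentially the same route as the paper: reduce to the updated block $s$, then chain Proposition~\ref{prop:cramer}, non-expansiveness of $\Pi^\Theta_{\mathrm C}$ (as in \citet{rowland2018analysis}), and translation invariance of $\ell_{\mathrm C}$. Your extra remark, that edge clipping followed by cell-averaging of the CDF keeps the projection non-expansive even when the shifted mass leaves $[\theta_1,\theta_d]$, correctly justifies a point the paper leaves implicit.
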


We will now work on creating an asynchronous relaxation of $\mathcal G$ through the use of the one-sample backup map $H$. Let $\rho \in \Delta(\X)$ such that $\rho_{\min}:=\min_{i \in \X} \rho_i >0$. We define
\begin{equation}
D_\rho := \mathrm{diag}(\rho_1, \dots, \rho_m),\qquad h_\rho(p):= p + D_\rho\bigl(\mathcal G(p) - p\bigr).
\end{equation}
$h_\rho$ is the \emph{mean-field map} of the one-sample stochastic update under state-sampling law $\rho$, in the standard SA sense. Of particular interest will be when $\rho=\mu$, the stationary distribution of the Markov chain under $\pi$. We have the following favorable results on this averaged map.

\begin{proposition}\label{prop:mean-field-fix}
Suppose $Y = (S, B, S')$ is sampled such that $S \sim \rho$ and
\begin{equation}
\mathbb P(S'= j \mid S = i) = P_{ij}, \qquad B \mid (S=i, S'=j) \sim \nu_{ij}\quad \text{for all } i, j \in \X.
\end{equation}
Then, $\E\bigl[H(p, Y)\bigr] = h_\rho(p)$ for all $p \in \Delta^\X_d$, $h_\rho$ is non-expansive in $\ell^\Theta_{\mathrm C, \infty}$, and $\mathrm{fix}(h_\rho) = \mathrm{fix}(\mathcal G)$. In particular, the same conclusions hold with $\rho = \mu$ under the standing irreducibility assumption.
\end{proposition}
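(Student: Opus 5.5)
The proof splits into three claims, handled in order: the mean identity, non-expansiveness of $h_\rho$, and equality of fixed-point sets.

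\textbf{Mean identity.} I condition on $S$. Fix $p$ and a state $u$. By \eqref{eq:one-sample-backup}, the $u$th block of $H(p,Y)$ equals $p_u$ on $\{S\neq u\}$ and equals $\tilde q(p_{S'},B)$ on $\{S=u\}$. Here $\tilde q(v,b)$ denotes the coefficient vector of $\Pi^\Theta_{\mathrm C}\bigl((\tau_b)_\#\eta^{v,0}\bigr)$. Hence
\begin{equation*}
\E[H(p,Y)_u] = (1-\rho_u)\,p_u + \rho_u\,\E\bigl[\tilde q(p_{S'},B)\mid S=u\bigr].
\end{equation*}
It remains to show that the conditional expectation equals $\mathcal G(p)_u$, as characterized in Proposition~\ref{prop:g-block-form} with $g=\bar r^\pi$. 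The key fact is that $\Pi^\Theta_{\mathrm C}$ is affine on mixtures of probability measures: the coefficient of $\theta_k$ is $\int h_k\,d\eta$ for the hat functions $h_k$, with clamping at the endpoints. Also, $v\mapsto\eta^{v,0}$ is linear in $v$. Therefore
\begin{equation*}
\E\bigl[\eta^{\tilde q(p_{S'},B),0}\mid S=u\bigr] = \Pi^\Theta_{\mathrm C}\Bigl(\sum_j P_{uj}\int (\tau_b)_\#\eta^{p_j,0}\,\nu_{uj}(db)\Bigr).
\end{equation*}
The inner integral is exactly $\nu_{uj}\ast\eta^{p_j,0}$. Since coefficient vectors are read off linearly, this gives $\E[H(p,Y)_u]=p_u+\rho_u(\mathcal G(p)_u-p_u)=h_\rho(p)_u$.

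\textbf{Non-expansiveness.} For each $u$, I write $h_\rho(p)_u=(1-\rho_u)p_u+\rho_u\mathcal G(p)_u$. The coordinate Cramér metric is a weighted Euclidean norm of CDF differences, and CDFs are linear in $p$. So $\ell^\Theta_{\mathrm C}$ is induced by a norm on differences, and the triangle inequality gives
\begin{align*}
\ell^\Theta_{\mathrm C}\bigl(h_\rho(p)_u,h_\rho(q)_u\bigr) &\le (1-\rho_u)\,\ell^\Theta_{\mathrm C}(p_u,q_u)+\rho_u\,\ell^\Theta_{\mathrm C}\bigl(\mathcal G(p)_u,\mathcal G(q)_u\bigr)\\
&\le (1-\rho_u)\,\ell^\Theta_{\mathrm C,\infty}(p,q)+\rho_u\,\ell^\Theta_{\mathrm C,\infty}(p,q),
\end{align*}
where the last step uses Theorem~\ref{thm:gg-nonexp}. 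Taking the maximum over $u$ proves the claim. An alternative route is to average Lemma~\ref{lem:one-sample-nonexp} via Jensen, since the norm is convex; the direct route above is simpler.

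\textbf{Fixed points.} The equation $h_\rho(p)=p$ is equivalent to $D_\rho(\mathcal G(p)-p)=0$. Since every $\rho_u>0$, $D_\rho$ is invertible, so this holds if and only if $\mathcal G(p)=p$. The case $\rho=\mu$ follows because irreducibility gives $\mu_i>0$ for every state.

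\textbf{Main obstacle.} The delicate step is justifying that $\Pi^\Theta_{\mathrm C}$ commutes with mixtures over the continuous variable $B$ as well as over $S'$. I would handle this by using the explicit hat-function formula $\int h_k\,d\eta$, which makes the projection linear in $\eta$ under integration. Measurability then follows because $B$ is finite-valued.
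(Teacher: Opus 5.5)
Your proposal is correct and follows essentially the same route as the paper. The mean identity comes blockwise from linearity of the categorical projection on mixtures; you re-derive this inline via the hat-function formula, where the paper cites Proposition~\ref{prop:g-block-form}. Non-expansiveness comes from writing $h_\rho(p)_u=(1-\rho_u)p_u+\rho_u\mathcal G(p)_u$, applying the triangle inequality in cumulative coordinates, and invoking Theorem~\ref{thm:gg-nonexp}. The equality of fixed-point sets follows from $\rho_{\min}>0$. Your remark that irreducibility forces $\mu_i>0$ for every state supplies the one detail the paper leaves implicit for the case $\rho=\mu$.
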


The next corollary will be useful in translating residual bounds for $h_\rho$ into residual bounds for $\mathcal G$.
\begin{corollary}\label{corol:residual-conversion}
For every $p \in \Delta^\X_d$,
\begin{equation}
\rho_{\min} \ell^\Theta_{\mathrm C, \infty}\bigl(p, \mathcal G(p)\bigr) \le \ell^\Theta_{\mathrm C, \infty}\bigl(p, h_\rho(p)\bigr) \le \ell^\Theta_{\mathrm C, \infty}\bigl(p, \mathcal G(p)\bigr).
\end{equation}
\end{corollary}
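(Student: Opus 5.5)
The plan is a blockwise comparison, using the fact that $h_\rho$ is a convex combination of $p$ and $\mathcal G(p)$ separately in each state block. Write $h_\rho(p)_i = p_i + \rho_i(\mathcal G(p)_i - p_i) = (1-\rho_i)p_i + \rho_i\,\mathcal G(p)_i$. Since $\rho_i\in(0,1]$, this is a convex combination in $\Delta_d$. The statement then reduces to a per-state identity for the coordinate Cram\'er metric under such combinations.

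The key step is to show that $F$ is affine in its argument: $F_{(1-\lambda)u+\lambda v}(\theta_k) = (1-\lambda)F_u(\theta_k)+\lambda F_v(\theta_k)$ for $u,v\in\Delta_d$, $\lambda\in[0,1]$. This is immediate because $F_u(\theta_k)=\sum_{j\le k}u_j$ is linear in $u$. Consequently
\[
F_{p_i}(\theta_k) - F_{h_\rho(p)_i}(\theta_k) = \rho_i\bigl(F_{p_i}(\theta_k) - F_{\mathcal G(p)_i}(\theta_k)\bigr),
\]
and summing squares over $k$ with the factor $\Delta$ gives the exact identity
\[
\ell^\Theta_{\mathrm C}\bigl(p_i, h_\rho(p)_i\bigr) = \rho_i\,\ell^\Theta_{\mathrm C}\bigl(p_i, \mathcal G(p)_i\bigr), \qquad i\in\X .
\]

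Taking the maximum over $i$ finishes both bounds. For the upper bound, $\rho_i\le 1$ gives $\max_i \rho_i a_i \le \max_i a_i$, where $a_i := \ell^\Theta_{\mathrm C}(p_i,\mathcal G(p)_i)\ge 0$. For the lower bound, $\rho_i\ge\rho_{\min}$ gives $\max_i \rho_i a_i \ge \rho_{\min}\max_i a_i$, because the maximum of $\rho_i a_i$ is at least its value at the maximizing index of $a_i$.

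I expect no real obstacle. The only point needing care is that the combination is taken in the simplex coordinates, so $h_\rho(p)_i\in\Delta_d$ and the coordinate metric applies. It also matters that the metric is computed directly from cumulative sums, so no projection or shift is involved. Proposition~\ref{prop:cramer} is therefore not needed.
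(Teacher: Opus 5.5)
Your proposal is correct and matches the paper's proof essentially step for step. Both derive the exact blockwise identity $\ell^\Theta_{\mathrm C}\bigl(p_i, h_\rho(p)_i\bigr) = \rho_i\,\ell^\Theta_{\mathrm C}\bigl(p_i, \mathcal G(p)_i\bigr)$ from linearity of the cumulative masses, then take the maximum over states using $\rho_{\min} \le \rho_i \le 1$.
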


We state the two main SA results for the centered-reward recursion approximating $\mathcal G$. The i.i.d.\ regime is the relevant model for settings such as replay-buffer sampling or a generative simulator, where we can approximately draw independent one-step samples from a fixed sampling law. The Markovian regime captures online asynchronous TD-learning along a single trajectory of the policy-induced chain. In both settings, the quantity we control is the mean-field residual $\ell^\Theta_{\mathrm C, \infty}\bigl(p_k, h_{\rho}(p_k)\bigr)$. This is the standard quantity in non-expansive SA \citep{mann1953mean, bravo2019rates, bravo2024stochastic, blaser2026asymptotic, kim2007robustness}, where fixed points need not be unique.

\begin{theorem}[Centered recursion i.i.d.\ convergence]\label{thm:centered-iid}
Suppose the samples $Y_k = (S_k, B_k, S'_k)$ are i.i.d., that $S_k \sim \rho$ with $\rho_{\min} >0$ and that 
\begin{equation}
\mathbb P (S'_k = j \mid S_k = i) = P_{ij}, \qquad B_k \mid (S_k =i, S'_k=j) \sim \nu_{ij}\quad \text{for all } i, j \in \X.
\end{equation}
The following two conclusions hold for the centered SKM recursion \eqref{eq:KM}. First, if $\alpha_k = (k+1)^{-a}$ with $\frac{2}{3} < a \le 1$, then $p_k$ converges a.s.\ to a random fixed point $p^\star \in \mathrm{fix}(h_\rho) = \mathrm{fix}(\mathcal G)$. Second, fix any $a_1 \in \bigl(\frac{2}{3}, 1\bigr)$ and set $\varepsilon_{a_1} := (3a_1 -2)/6$. There is an explicit two-phase step size schedule and constants $C_\mathrm{iid}, \kappa_{a_1}>0$ such that, for all $k \ge 1$,
\begin{equation}
\E\Bigl[\ell^\Theta_{\mathrm C, \infty}\bigl(p_k, \mathcal G (p_k)\bigr) \Bigr] \le \frac{C_{\mathrm{iid}}}{\rho_{\min}} (k+1)^{-1/6} \min\left\{ \kappa_{a_1}\log(k+1), (k+1)^{\varepsilon_{a_1}}\right\},
\end{equation}
where $C_{\mathrm{iid}}$ depends only on $a_1$, $\lvert \X \rvert$, and $\mathrm{diam}_{\ell^\Theta_{\mathrm C,\infty}}(\Delta^\X_d) = \sqrt{\theta_d - \theta_1}$.
The full fixed-exponent rate profile is presented in Appendix~\ref{app:B}. In particular, choosing $a = \frac{2}{3} + \epsilon$ for $\epsilon >0$ gives a.s.\ convergence with expected residual decaying at rate $O(k^{-1/6+\epsilon/2})$.
\end{theorem}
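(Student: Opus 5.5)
The plan is to recast \eqref{eq:KM} as a stochastic Krasnosel'ski\u{\i}--Mann iteration for the mean-field map $h_\rho$ and apply the non-expansive SA theory of \citet{bravo2024stochastic} (and \citet{blaser2026asymptotic} for the fixed-exponent rates).

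\textbf{Step 1: decomposition.} Write
\begin{equation}
p_{k+1} = p_k + \alpha_k\bigl(h_\rho(p_k) - p_k + U_{k+1}\bigr),\qquad U_{k+1} := H(p_k, Y_k) - h_\rho(p_k).
\end{equation}
Let $\mathcal F_k := \sigma(Y_0,\dots,Y_{k-1})$. Then $p_k$ is $\mathcal F_k$-measurable, and $Y_k$ is independent of $\mathcal F_k$. By Proposition~\ref{prop:mean-field-fix}, $\E[U_{k+1}\mid\mathcal F_k] = 0$, so $(U_k)$ is a martingale difference sequence.

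\textbf{Step 2: bounded noise.} The iterates remain in $\Delta^\X_d$, since it is convex and both $H$ and $h_\rho$ map into it. Hence $U_{k+1}$ is bounded uniformly in the relevant norm, with a bound set by the diameter $\sqrt{\theta_d-\theta_1}$. The metric $\ell^\Theta_{\mathrm C,\infty}$ is induced by a norm $\|\cdot\|$ on the linear coordinates $x_{i,k}=F_{p_i}(\theta_k)$: a weighted $\ell_2$ norm within each block and a max over blocks. The KM iteration is linear in $p$, and $p\mapsto F_p$ is linear, so the recursion is an SKM iteration in a finite-dimensional normed space for the map $h_\rho$. By Proposition~\ref{prop:mean-field-fix} this map is $\|\cdot\|$-non-expansive, and it has a fixed point by Corollary~\ref{corol:fixed-points}, since $\mathrm{fix}(h_\rho)=\mathrm{fix}(\mathcal G)$. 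The norm is not Euclidean, so I will use the general-normed-space results of \citet{bravo2024stochastic}, which require only $\sup_k \E\|U_{k+1}\|^2<\infty$. This holds deterministically here. The dimension enters only through norm-equivalence constants depending on $|\X|$ and $d$, and these should be absorbed into the stated constants.

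\textbf{Step 3: almost sure convergence.} With $\alpha_k=(k+1)^{-a}$, the a.s.\ convergence theorem of \citet{bravo2024stochastic} for bounded martingale noise needs $\sum\alpha_k(1-\alpha_k)=\infty$ and a summability condition of the form $\sum_k \alpha_k^2\tau_k<\infty$ or similar. Here $\tau_k=\sum_{j\le k}\alpha_j(1-\alpha_j)$ and $\alpha_k^2\tau_k\sim k^{1-3a}$, which is summable exactly when $a>2/3$. This explains the threshold. Their result then gives $p_k\to p^\star\in\mathrm{fix}(h_\rho)$ a.s.

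\textbf{Step 4: rates.} For the residual, I will invoke their rate bound
\begin{equation}
\E\|p_k-h_\rho(p_k)\|\lesssim \frac{\mathrm{diam}}{\sqrt{\tau_k}}+\sigma\Bigl(\sum_{j\le k}\alpha_j^2\,\cdots\Bigr)^{1/2}.
\end{equation}
I will optimize it with a two-phase schedule: a constant or slowly decaying step size first, then $(k+1)^{-a_1}$. The exponent bookkeeping gives $k^{-1/6}$ times either a $\log$ factor or $k^{\varepsilon_{a_1}}$. Finally, Corollary~\ref{corol:residual-conversion} turns this into a bound on $\ell^\Theta_{\mathrm C,\infty}(p_k,\mathcal G(p_k))$ at the cost of the factor $1/\rho_{\min}$.

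\textbf{Main obstacle.} I expect the hard part to be Step 4: matching the precise form of the cited non-expansive SKM rate theorem in a non-Hilbert norm, and carrying out the exponent calculus that produces $\varepsilon_{a_1}=(3a_1-2)/6$ and the $\min\{\kappa_{a_1}\log,\,k^{\varepsilon}\}$ factor. I would begin by working out the error bound explicitly for $\alpha_k=(k+1)^{-a}$ and then splice the two phases together.
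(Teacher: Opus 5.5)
Your Steps 1--3 follow the paper's proof. The paper also identifies $h_\rho$ as the mean-field map, uses compactness of $\Delta^\X_d$ to get bounded martingale-difference noise, and invokes \citet[Theorem~2.5 and Example~2.7]{bravo2024stochastic} for a.s.\ convergence when $a>2/3$. Its fixed-exponent profile comes from \citet[Theorem~4.4]{bravo2024stochastic`}; the Markovian results of \citet{blaser2026asymptotic} would only give the weaker $a>4/5$, $k^{-1/10}$ profile.

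The gap is in Step~4: your two-phase schedule runs in the wrong order and ends with the wrong exponent. Since $-(1-a_1)/2=-1/6+\varepsilon_{a_1}$, the branch $(k+1)^{-1/6}(k+1)^{\varepsilon_{a_1}}$ is exactly the supercritical rate for $\alpha_k=(k+1)^{-a_1}$. The branch $(k+1)^{-1/6}\log(k+1)$ is the rate at the critical exponent $2/3$. Let $T$ be the last $k$ with $(k+1)^{\varepsilon_{a_1}}\le\kappa_{a_1}\log(k+1)$. For every $k>T$ the minimum equals $\kappa_{a_1}\log(k+1)$, so the claimed bound demands eventual decay $(k+1)^{-1/6}\log(k+1)$. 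With a tail $\alpha_k=(k+1)^{-a_1}$, the inexact KM bound cannot do better than $(k+1)^{-1/6+\varepsilon_{a_1}}$, because its leading term $\mathrm{diam}/\sqrt{\pi\tau_k}$ is already of that order. Since $(k+1)^{\varepsilon_{a_1}}/\log(k+1)\to\infty$, no constant rescues the statement. The paper uses the reverse schedule: $\alpha_k=(k+1)^{-a_1}$ for $k\le T$, where the $(k+1)^{\varepsilon_{a_1}}$ branch is the smaller one, and $\alpha_k=(k+1)^{-2/3}$ for $k>T$.

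Carrying out the critical tail also needs an ingredient your Step~4 display leaves unspecified. The averaged noise $\overline U_k=(1-\alpha_k)\overline U_{k-1}+\alpha_kU_k$ satisfies $\E\|\overline U_k\|\le M_2\sqrt{\alpha_k}$ with $M_2=\sqrt{|\X|}\sqrt{\theta_d-\theta_1}$. This follows from Euclidean orthogonality in cumulative coordinates and the fact that the block-sup norm is dominated by the Euclidean norm. That comparison costs only $\sqrt{|\X|}$, not a $d$-dependent constant as you suggest; a $d$-dependence would contradict the stated dependence of $C_{\mathrm{iid}}$.

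The noise bound rests on $v_k:=\sum_{t\le k}\alpha_t^2\prod_{r=t+1}^k(1-\alpha_r)^2\le\alpha_k$. This is proved by induction from $\alpha_k^{-1}-\alpha_{k-1}^{-1}\le 1$, which must also be checked at the switch, where the step size jumps up. Inserting it into the inexact KM estimate of \citet[Theorem~2.10]{bravo2024stochastic} produces the convolution term $\sum_{t<k}\alpha_t^{3/2}/\sqrt{\tau_k-\tau_t}$. For $t>T$ one has $\alpha_t^{3/2}=(t+1)^{-1}$ and $\tau_k-\tau_t\ge c\bigl((k+1)^{1/3}-(t+1)^{1/3}\bigr)$. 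The substitution $u=(x+1)^{1/3}$ then gives $(k+1)^{-1/6}\log(k+1)$.

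The finitely many first-phase terms $t\le T$ must be bounded separately. The paper compares them with their value at $T+1$ and splits on whether $k+1\ge 8(T+2)$. This is how a $T$-dependent, hence $a_1$-dependent, factor enters $C_{\mathrm{iid}}$.
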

\begin{theorem}[Centered recursion Markovian convergence]\label{thm:centered-markov}
Let $(S_k)_{k\ge 0}$ be an irreducible and aperiodic Markov chain with stationary distribution $\mu$, and let $Y_k = (S_k, B_k, S_{k+1})$ be the associated centered one-sample process. The following two conclusions hold for the centered SKM recursion \eqref{eq:KM}. First, if $\alpha_k = (k+1)^{-a}$ with $\frac{4}{5} < a \le 1$, then $p_k$ converges a.s.\ to a random fixed point $p^\star \in \mathrm{fix}(h_\mu)=\mathrm{fix}(\mathcal G)$. Second, fix any $a_1 \in \bigl(\frac{4}{5}, 1\bigr)$ and set $\varepsilon_{a_1}:=(5a_1-4)/10$. There is an explicit two-phase step size schedule and constants $C_{\mathrm{mk}}, \kappa_{a_1} >0$ such that, for all $k \ge 1$, 
\begin{equation}
\E\Bigl[\ell^\Theta_{\mathrm C, \infty}\bigl(p_k, \mathcal G (p_k)\bigr) \Bigr] \le \frac{C_{\mathrm{mk}}}{\mu_{\min}}(k+1)^{-1/10} \min \left\{\kappa_{a_1}\log(k+1), (k+1)^{\varepsilon_{a_1}} \right\},
\end{equation}
where $C_{\mathrm{mk}}$ depends on $a_1$, $\mathrm{diam}_{\ell^\Theta_{\mathrm C,\infty}}(\Delta^\X_d)=\sqrt{\theta_d - \theta_1}$, and a finite-state Poisson constant for $(Y_k)$, equivalently expressable through the norm of its fundamental matrix. The full fixed-exponent rate profile is presented in Appendix~\ref{app:B}. In particular, choosing $a = \frac{4}{5}+\epsilon$ for  $\epsilon >0$ gives a.s.\ convergence with expected residual decaying at rate $O(k^{-1/10+\epsilon/2})$.
\end{theorem}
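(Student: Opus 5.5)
We would prove Theorem~\ref{thm:centered-markov} by reducing it to the known theory of stochastic Krasnosel'ski\u{\i}--Mann iterations for non-expansive maps driven by Markovian noise. The plan parallels the i.i.d.\ case (Theorem~\ref{thm:centered-iid}), with the martingale-difference noise replaced by a Poisson-equation decomposition.

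\textbf{Step 1: rewrite the recursion.} Write \eqref{eq:KM} as
\begin{equation*}
p_{k+1} = p_k + \alpha_k\bigl(h_\mu(p_k) - p_k + U_k\bigr), \qquad U_k := H(p_k, Y_k) - h_\mu(p_k).
\end{equation*}
The following structural facts are already available:
\begin{itemize}
\item $h_\mu$ is non-expansive in $\ell^\Theta_{\mathrm C,\infty}$ and $\mathrm{fix}(h_\mu)=\mathrm{fix}(\mathcal G)\neq\emptyset$, by Proposition~\ref{prop:mean-field-fix} and Corollary~\ref{corol:fixed-points}.
\item Each $H(\cdot,y)$ is non-expansive, by Lemma~\ref{lem:one-sample-nonexp}.
\item The iterates remain in the compact convex set $\Delta^\X_d$, whose diameter is $\sqrt{\theta_d-\theta_1}$. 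Hence $U_k$ is uniformly bounded.
\end{itemize}
A convenient step is to work in the Euclidean embedding $p\mapsto (\sqrt{\Delta}\,F_{p_i}(\theta_k))_{i,k}$. In these coordinates $\ell^\Theta_{\mathrm C,\infty}$ is a max of Euclidean block norms, and the update is affine in $p$ for fixed $y$, since projection and translation act linearly on coefficients.

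\textbf{Step 2: handle the Markovian noise (the main obstacle).} The chain $Y_k=(S_k,B_k,S_{k+1})$ is a finite-state, uniformly ergodic chain whose stationary marginal is exactly the sampling law of Proposition~\ref{prop:mean-field-fix} with $\rho=\mu$. We would therefore solve a Poisson equation for each fixed $p$:
\begin{equation*}
\hat H(p,y) - (Q\hat H)(p,y) = H(p,y) - h_\mu(p),
\end{equation*}
with $\hat H = \sum_t Q^t(H - h_\mu)$. Because $H$ is affine in $p$ and $Y$ is finite-state, $\hat H$ is bounded and Lipschitz in $p$, with constant controlled by the fundamental-matrix norm. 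This yields the standard split
\begin{equation*}
U_k = M_{k+1} + \bigl(Q\hat H(p_k,Y_k) - Q\hat H(p_{k+1},Y_{k+1})\bigr) + \bigl(Q\hat H(p_{k+1},Y_{k+1}) - Q\hat H(p_k,Y_{k+1})\bigr),
\end{equation*}
where $M_{k+1}$ is a martingale difference. The middle term telescopes. By Lipschitzness, the last term is bounded by $O(\alpha_k)$, because $\|p_{k+1}-p_k\|=O(\alpha_k)$.

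\textbf{Step 3: conclude via the non-expansive SKM framework.} We would then apply the SKM results for non-expansive maps with Markovian or perturbed noise \citep{bravo2024stochastic, blaser2026asymptotic}. Their hypotheses require bounded iterates and a noise of the form (martingale plus telescoping plus a summable or $\alpha_k$-small perturbation). For the almost-sure convergence part, these yield convergence to a random fixed point under $\sum\alpha_k=\infty$ and a summability condition. The extra Markovian terms contribute roughly $\sum \alpha_k^2\sqrt{\tau_k}$-type quantities. Balancing these against the $\mathrm{fix}$-distance Lyapunov argument is what forces $a>4/5$ rather than $2/3$; the precise threshold would be matched to the theorem cited.

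For the rate part, the same references give a residual bound for $\ell^\Theta_{\mathrm C,\infty}(p_k,h_\mu(p_k))$ under a two-phase schedule: $\alpha_k=(k+1)^{-a_1}$ followed by a tuned tail. This produces $(k+1)^{-1/10}\min\{\kappa_{a_1}\log(k+1),(k+1)^{\varepsilon_{a_1}}\}$. Finally, Corollary~\ref{corol:residual-conversion} with $\rho=\mu$ converts this into the bound on $\ell^\Theta_{\mathrm C,\infty}(p_k,\mathcal G(p_k))$, at the cost of the factor $1/\mu_{\min}$.

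The hardest part is Step 2. One must verify that the norm mismatch between the max-of-block-Euclidean metric and the Hilbert/normed setting of the cited results costs only $|\X|$-dependent constants. One must also check that the Lipschitz constant of $\hat H$ stays finite uniformly over the continuous reward component, using that $B$ takes finitely many values.
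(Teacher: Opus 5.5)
For the first conclusion and the final conversion, your route is the paper's. The paper checks four things: $(Y_k)$ has finite feasible support (deterministic rewards); it is irreducible and aperiodic there (follow the state chain from $j$ to $i'$, then realize $(b',j')$, a check you skip when asserting ergodicity); $H(\cdot,y)$ and $h_\mu$ are non-expansive; and $\mathrm{fix}(h_\mu)=\mathrm{fix}(\mathcal G)\neq\emptyset$. It then cites \citet{blaser2026asymptotic} and finishes with Corollary~\ref{corol:residual-conversion}. There is also a small indexing slip: your Poisson split as written decomposes $H(p_k,Y_{k+1})-h_\mu(p_k)$, not $U_k$. For $U_k$, the martingale term should be $\hat H(p_k,Y_k)-Q\hat H(p_k,Y_{k-1})$, with the other two terms shifted by one.

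The gap is in the second conclusion. You attribute the two-phase bound to the cited works, but they only give rates for a single polynomial schedule. The paper proves the two-phase bound itself in Proposition~\ref{prop:centered-markov-two-phase}, and none of that argument appears in your proposal. The missing steps are the following.

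(i) Extract from the proof of \citet[Theorem~3.1]{blaser2026asymptotic} the schedule-agnostic inexact-KM inequality (Lemma~\ref{lem:markov-inexact-km}):
\[
\E\bigl[\|x_k-h(x_k)\|\bigr]\le A_0\sigma(\tau_k)\Bigl(1+\sum_{t=2}^k\alpha_t\omega_{t-1}\Bigr)+A_1\sum_{t=2}^k\alpha_t\sigma(\tau_k-\tau_t)\omega_{t-1}+A_1\omega_k,\qquad \omega_k\le A_2\tau_k\sqrt{\alpha_{k+1}}.
\]
Its Poisson reduction needs monotone steps with bounded adjacent ratios. This is why the tail is $\gamma_T(k+1)^{-4/5}$ with $\gamma_T=(T+1)^{-a_1}(T+2)^{4/5}$, which gives $\alpha_{T+1}=\alpha_T$.

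(ii) Put the tail exactly at the critical exponent $4/5$. The Markovian quantity that matters is $\alpha_t\omega_{t-1}\asymp\alpha_t^{3/2}\tau_t\asymp t^{1-5a/2}$, which is summable iff $a>4/5$. This produces both the threshold and the logarithm. Your heuristic $\sum_k\alpha_k^2\sqrt{\tau_k}$ is already finite for $a>3/5$, so it does not explain $4/5$.

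(iii) At the critical tail, $\tau_k\asymp(k+1)^{1/5}$ and $\omega_k=O((k+1)^{-1/5})$. Hence $\sum_t\alpha_t\omega_{t-1}=O(\log(k+1))$. After the substitution $u=(x+1)^{1/5}$, the convolution term is $O((k+1)^{-1/10}\log(k+1))$, and the finitely many pre-transition terms are absorbed into the constant.

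(iv) For $k\le T$, only $(k+1)^{-a_1}$ has been used, so the polynomial bound gives $(k+1)^{-(1-a_1)/2}=(k+1)^{-1/10}(k+1)^{\varepsilon_{a_1}}$. Take $T$ to be the last integer with $(k+1)^{\varepsilon_{a_1}}\le\kappa_{a_1}\log(k+1)$, where $\kappa_{a_1}=2^{\varepsilon_{a_1}}/\log 2$, so that the minimum switches branches exactly at $T$.

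As it stands, your argument delivers the fixed-exponent profile but not the stated two-phase bound.
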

\begin{proof}[Proof sketch for Theorems~\ref{thm:centered-iid} and~\ref{thm:centered-markov}]
By Lemma~\ref{lem:one-sample-nonexp}, the one-sample backup map is non-expansive. By Proposition~\ref{prop:mean-field-fix} and Corollary~\ref{corol:fixed-points}, the mean-field map is non-expansive with a non-empty set of fixed points. Compactness of $\Delta^\X_d$ yields bounded iterates and bounded martingale noise, so the i.i.d.\ recursion falls under the bounded-noise non-expansive SA results of \citet{bravo2024stochastic}. For the Markovian regime, finiteness of $\X$ and $\A$, together with the deterministic and bounded reward function $R(s, a)$ imply that the feasible support of $Y_k$ is finite. Moreover, $(Y_k)_{k\ge0}$ is itself an irreducible and aperiodic Markov chain, inherited from $(S_k)_{k \ge 0}$, so the fixed-exponent results of \citet{blaser2026asymptotic} apply. The two-phase residual bounds are proved separately in Appendix~\ref{app:B}.
\end{proof}

The finite-iteration bounds above use a two-phase step size refinement. A fixed supercritical exponent step size schedule gives a.s.\ convergence and a residual bound free of logarithmic factors, but loses a polynomial factor relative to the last-iterate-optimal exponent. The critical exponent schedule gives the best rate up to a logarithmic factor, but pays that logarithmic factor in the transient window. The two-phase schedule gives a single finite-time bound that tracks the smaller of these two bounds. The exact schedules and proofs are given in Appendix~\ref{app:B}.

\section{A coupled recursion for when the gain is unknown}\label{sec:uncentered}

The centered recursion isolates the non-expansive structure, but it is not directly implementable unless the gain $\bar r^\pi$ is known. In practice, we observe uncentered rewards $R$, not $B:= R - \bar r^\pi$. 

\begin{proposition}\label{prop:counterexample}
In general, replacing the centered reward $B = R - \bar r^\pi$ by the raw reward $R$ in the one-sample backup \eqref{eq:one-sample-backup} does not yield an SA of $\mathcal G$. 
\end{proposition}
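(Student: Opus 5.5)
The claim is existential ("in general"), so the plan is to build one explicit counterexample. It suffices to find a finite MDP, a grid $\Theta$, a sampling law $\rho$ and one point $p\in\Delta^\X_d$ such that the raw-reward mean-field map differs from $h_\rho$ at $p$. Equivalently, we want the raw-reward recursion's mean-field map to have a fixed-point set different from $\mathrm{fix}(\mathcal G)$. Write $H^{R}$ for the one-sample backup \eqref{eq:one-sample-backup} with $b$ replaced by the raw reward $r$. Repeating the computation behind Proposition~\ref{prop:mean-field-fix} with $\nu_{ij}$ replaced by $\nu^{(0)}_{ij}=\mathrm{Law}(R_{ij})$ gives
\begin{equation}
\E\bigl[H^{R}(p,Y)\bigr] = p + D_\rho\bigl(\mathcal G_0(p)-p\bigr).
\end{equation}
So the raw update is a stochastic approximation of $\mathcal G_0$, not of $\mathcal G=\mathcal G_{\bar r^\pi}$. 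Since $D_\rho$ is invertible, the two mean-field maps agree at $p$ exactly when $\mathcal G_0(p)=\mathcal G(p)$. The task therefore reduces to exhibiting a case with $\mathcal G_0\neq\mathcal G$, and preferably $\mathrm{fix}(\mathcal G_0)\neq\mathrm{fix}(\mathcal G)$.

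I would take the simplest possible example: a single state ($m=1$, $P_{11}=1$) with deterministic reward $R\equiv 1$, so $\bar r^\pi=1$ and $\nu_{11}=\delta_0$. Then $\mathcal T_{\bar r^\pi}$ is the identity. Projecting a law already supported on $\Theta$ returns it unchanged, so $\mathcal G(p)=p$ for every $p$, and $\mathrm{fix}(\mathcal G)=\Delta_d$.

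For $\mathcal G_0$, take a grid with $\Delta=1$ and $d\ge 2$, and the point mass $p=e_1$ at $\theta_1$. The shift by $+1$ lands on $\theta_2$, so $\mathcal G_0(e_1)=e_2\neq e_1=\mathcal G(e_1)$. More generally, $\mathcal G_0$ translates every $p$ one grid step to the right, with mass at $\theta_d$ clipped back onto $\theta_d$ by the projection's boundary behavior. Its only fixed point is $e_d$, so $\mathrm{fix}(\mathcal G_0)=\{e_d\}\subsetneq\Delta_d$.

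The SKM iterates driven by raw rewards therefore converge to $e_d$ from any start. This is an artifact of where the grid boundary sits, not a solution of $\mathcal G$ selected from $\mathrm{fix}(\mathcal G)$ in a meaningful way. Along the way it gives a concrete $p$ (for example $e_1$) at which the mean fields disagree.

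The main point needing care is the boundary behavior of $\Pi^\Theta_{\mathrm C}$: mass outside $[\theta_1,\theta_d]$ is assigned to the nearest endpoint. I would state this convention explicitly and compute $\mathcal G_0$ on $e_1$ directly from it. To avoid relying on boundaries at all, a non-integer shift works just as well. With $\Delta=1$ and $R\equiv 1/2$, we get $\mathcal G_0(e_1)=\tfrac12 e_1+\tfrac12 e_2\neq e_1$, computed by interior linear interpolation only.

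I would close with a remark tying this to Proposition~\ref{prop:tg}. At the quotient level, $\mathcal T_0$ and $\mathcal T_{\bar r^\pi}$ agree. After projection onto a fixed grid, however, the representative matters, so the discrepancy comes entirely from the fixed-grid categorical projection.
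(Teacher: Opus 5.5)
Your proposal is correct and uses essentially the paper's argument: a one-state deterministic MDP where $\mathcal G=\mathrm{Id}$, while the raw-reward backup sends a point mass $p$ to $L_r p\neq p$, so the raw update is not an unbiased sample of $\mathcal G$ (your identification of the raw mean field as $p+D_\rho(\mathcal G_0(p)-p)$ just makes explicit what the paper leaves implicit). One caveat on your fixed-point remark: because $\mathcal G=\mathrm{Id}$ here, the raw limit $e_d$ still lies in $\mathrm{fix}(\mathcal G)=\Delta_d$, so this example shows only the mean-field mismatch, not convergence outside $\mathrm{fix}(\mathcal G)$; showing the latter would need a multi-state example with $\mathrm{fix}(\mathcal G_0)\not\subseteq\mathrm{fix}(\mathcal G)$.
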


We now replace centered rewards by uncentered ones, and augment the quotient-categorical update with an online estimator $g$ of the gain $\bar r^\pi$, on the product space $\mathcal Z := \Delta^\X_d \times [0, 1]$. The goals of this section are threefold: first, to show that the one-sample augmented update is non-expansive. Second, to identify its mean-field map and its fixed points. Third, to relate the resulting augmented residual to the two quantities of actual interest, namely, the projected-operator residual $\ell^\Theta_{\mathrm C, \infty}\bigl(p, \mathcal G(p)\bigr)$, and the gain error $\lvert g - \bar r^\pi\rvert$. For $g \in \R$, define the gain-parameterized one-sample backup map $H_g:\Delta^\X_d \times \mathcal Y \to \Delta^\X_d$ as follows: For an uncentered sample $y=(s, r, s')$, let $H_g(p, y) = q$, where
\begin{equation}\label{eq:one-sample-backup-gain}
q_u = \begin{cases}
p_u, &u\ne s,\\
\tilde q, &u=s,
\end{cases}
\end{equation}
where $\tilde q \in \Delta_d$ is the unique coefficient vector satisfying $\Pi^\Theta_{\mathrm C}\bigl( (\tau_{r-g})_\# \eta ^{p_{s'}, 0}\bigr) = \eta^{\tilde q, 0}$. The corresponding mean-field map is defined in terms of the stationary state distribution $\mu$ as $h^{(g)}_\mu(p) := p + D_\mu(\mathcal G_g\bigl(p)-p\bigr)$. Throughout the section, fix $\lambda \ge \Delta^{-1/2}$, and define the product metric
\begin{equation}\label{eq:product-metric}
d_\lambda\bigl((p, g), (q, g')\bigr) := \ell^\Theta_{\mathrm C, \infty}(p, q) + \lambda \lvert g- g'\rvert.
\end{equation}
We will now be simultaneously using a coupled SKM recursions with the same step size, one for the categorical coefficients, and the other for the gain:
\begin{equation}\label{eq:coupled-skm}
\begin{gathered}
p_{k+1} = p_k + \alpha_k\bigl(H_{g_k}(p_k, Y_k) - p_k\bigr),\\
g_{k+1} = g_k + \alpha_k(R_k - g_k),
\end{gathered}
\end{equation}
where $Y_k = (S_k, R_k, S_{k+1})$. Intuitively, the coupled recursion can be seen as keeping an online estimate of the gain through an exponential moving average, and using that estimate to actively center sampled rewards. We now show that the one-sample augmented update is non-expansive in $d_\lambda$.
\begin{proposition}\label{prop:uncentered-nonexpansion}
For every uncentered sample $y = (s, r, s') \in \mathcal Y$, and all $(p, g), (q, g') \in \mathcal Z$,
\begin{equation}
d_\lambda\bigl( (H_g(p, y), r), (H_{g'}(q, y), r)\bigr) \le d_\lambda\bigl((p, g), (q, g')\bigr).
\end{equation}
\end{proposition}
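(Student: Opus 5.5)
The plan is to reduce the claim to a blockwise estimate plus a Lipschitz bound for the projected shift map in the shift parameter. The gain coordinates on the left both equal $r$, so the left-hand side is simply $\ell^\Theta_{\mathrm C,\infty}\bigl(H_g(p,y),H_{g'}(q,y)\bigr)$. It therefore suffices to show, for every block $u\in\X$,
\begin{equation*}
\ell^\Theta_{\mathrm C}\bigl(H_g(p,y)_u,H_{g'}(q,y)_u\bigr)\le \ell^\Theta_{\mathrm C,\infty}(p,q)+\lambda\lvert g-g'\rvert .
\end{equation*}
For $u\neq s$ the blocks are $p_u$ and $q_u$, so the bound is immediate.

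For the updated block $u=s$, write $a:=r-g$ and $b:=r-g'$. The two new blocks are the coefficients of $\Pi^\Theta_{\mathrm C}\bigl((\tau_a)_\#\eta^{p_{s'},0}\bigr)$ and $\Pi^\Theta_{\mathrm C}\bigl((\tau_b)_\#\eta^{q_{s'},0}\bigr)$. By Proposition~\ref{prop:cramer}, the coordinate metric equals the Cram\'er distance of these projected laws. I will insert the intermediate law $\Pi^\Theta_{\mathrm C}\bigl((\tau_a)_\#\eta^{q_{s'},0}\bigr)$ and apply the triangle inequality, which gives two terms.

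The first term compares the two projected laws at the same shift $a$. Because the categorical projection is a Cram\'er non-expansion (as already used in Theorem~\ref{thm:gg-nonexp}), this term is at most $\ell_{\mathrm C}\bigl((\tau_a)_\#\eta^{p_{s'},0},(\tau_a)_\#\eta^{q_{s'},0}\bigr)$. A common translation leaves the Cram\'er distance unchanged, since both CDFs are shifted identically. By Proposition~\ref{prop:cramer}, the term is therefore at most $\ell^\Theta_{\mathrm C}(p_{s'},q_{s'})\le\ell^\Theta_{\mathrm C,\infty}(p,q)$.

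The second term compares the same law $\eta:=\eta^{q_{s'},0}$ projected after two different shifts, and this is the main obstacle. The naive bound $\ell_{\mathrm C}\bigl((\tau_a)_\#\eta,(\tau_b)_\#\eta\bigr)\le\sqrt{\lvert a-b\rvert}$ is not linear in $\lvert a-b\rvert$, so the projection must be exploited. I will study the cumulative coordinates $F_j(t)$, $j=1,\dots,d-1$, of $\Pi^\Theta_{\mathrm C}\bigl((\tau_t)_\#\eta\bigr)$ as functions of $t$.

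Under linear interpolation, an atom at $x+t$ lying in $[\theta_j,\theta_{j+1}]$ contributes $(\theta_{j+1}-x-t)/\Delta$ to $F_j$. Atoms below $\theta_j$ contribute $1$, atoms above $\theta_{j+1}$ contribute $0$, and clipping at the endpoints is constant in $t$. Hence $F_j$ is Lipschitz and piecewise linear, with $\lvert F_j'(t)\rvert\le m_j(t)/\Delta$ almost everywhere. Here $m_j(t)$ is the $\eta$-mass falling in cell $j$ after shifting by $t$, so $\sum_j m_j(t)\le 1$ and hence $\sum_j m_j(t)^2\le1$.

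Minkowski's integral inequality then gives
\begin{equation*}
\Bigl(\Delta\sum_j\bigl(F_j(a)-F_j(b)\bigr)^2\Bigr)^{1/2}\le\Bigl\lvert\int_b^a\Bigl(\Delta\sum_j F_j'(t)^2\Bigr)^{1/2}dt\Bigr\rvert\le\Delta^{-1/2}\lvert a-b\rvert .
\end{equation*}
Since $\lvert a-b\rvert=\lvert g-g'\rvert$ and $\lambda\ge\Delta^{-1/2}$, the second term is at most $\lambda\lvert g-g'\rvert$. Combining the two terms and taking the maximum over $u$ proves the proposition.

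The only delicate point is checking the derivative bound carefully at cell boundaries and at the clipped endpoints. There $F_j$ is continuous and piecewise linear, so the almost-everywhere derivative bound suffices for the integral argument.
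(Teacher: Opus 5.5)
Your proof is correct and follows the paper's argument. The paper also reduces to the updated block and splits it by the triangle inequality into a same-shift term, handled by projection non-expansiveness and translation invariance, and a same-law, two-shift term bounded by $\Delta^{-1/2}\lvert g-g'\rvert$ (its Lemma~\ref{lem:appC-1}), before invoking $\lambda\ge\Delta^{-1/2}$; the only cosmetic difference is that it proves the shift-Lipschitz bound by decomposing into point masses and bounding each atom's cumulative-coordinate path, whereas you bound the derivative of the full cumulative vector through the cell masses $m_j(t)$, with the same constant.
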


The next proposition identifies the mean-field map associated with the stationary distribution $\mu$. 

\begin{proposition}\label{prop:mean-field-online}
Let $Y= (S, R, S')$ be a one-step sample from the stationary distribution of the policy-induced chain, i.e.,
\begin{equation}
S \sim \mu,\qquad \mathbb P(S'=j \mid S=i) = P_{ij},\qquad R \mid (S=i, S'=j) \sim \mathrm{Law}(R_{ij})
\end{equation}
for all $i, j \in \X$. Then
\begin{equation}
\E\bigl[\bigl(H_g(p, Y), R\bigr)\bigr] = \bigl(h^{(g)}_\mu(p), \bar r^\pi\bigr)\quad \text{for all } (p,g) \in \mathcal Z.
\end{equation}
Moreover, the map $(p, g) \mapsto\bigl(h^{(g)}_\mu(p), \bar r^\pi \bigr)$ is non-expansive in $d_\lambda$, and $\mathrm{fix}\bigl((p, g) \mapsto\bigl(h^{(g)}_\mu(p), \bar r^\pi \bigr)\bigr) = \{(p, \bar r^\pi): p \in \mathrm{fix}(\mathcal G)\}$.
\end{proposition}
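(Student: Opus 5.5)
The proof has three parts, handled in order: the expectation identity, non-expansiveness in $d_\lambda$, and the description of the fixed-point set.

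\textbf{Expectation identity.} The second coordinate is immediate. Under $S\sim\mu$ and the stated conditional laws,
\[
\E[R]=\sum_i\mu_i\sum_j P_{ij}\E[R_{ij}]=\bar r^\pi .
\]
For the first coordinate, condition on $S=i$. Block $u\neq i$ of $H_g(p,Y)$ equals $p_u$. Block $i$ is the coefficient vector $\tilde q$ of $\Pi^\Theta_{\mathrm C}\bigl((\tau_{R-g})_\#\eta^{p_{S'},0}\bigr)$. The key fact is that $\Pi^\Theta_{\mathrm C}$ is affine under mixtures: the coefficients it produces are linear in the input law, since each atom is split between neighbouring grid points, with atoms outside the grid clipped to the end points. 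Averaging over $(S',R)$ therefore gives the coefficients of
\[
\Pi^\Theta_{\mathrm C}\Bigl(\sum_j P_{ij}\,\nu^{(g)}_{ij}\ast\eta^{p_j,0}\Bigr),
\]
because $\E_R[(\tau_{R-g})_\#\eta]=\nu^{(g)}_{ij}\ast\eta$. By Proposition~\ref{prop:g-block-form}, this is block $i$ of $\mathcal G_g(p)$. Averaging over $S\sim\mu$ then gives
\[
\E[H_g(p,Y)]_u=(1-\mu_u)p_u+\mu_u\,\mathcal G_g(p)_u ,
\]
which is $h^{(g)}_\mu(p)$. This is the same argument as in Proposition~\ref{prop:mean-field-fix}, now with $\nu^{(g)}$ in place of $\nu$.

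\textbf{Non-expansiveness.} The second coordinate of the map is constant, so
\[
d_\lambda\bigl(\text{image of }(p,g),\ \text{image of }(q,g')\bigr)=\ell^\Theta_{\mathrm C,\infty}\bigl(h^{(g)}_\mu(p),h^{(g')}_\mu(q)\bigr).
\]
Fix a state $u$. Blockwise, $h^{(g)}_\mu(p)_u$ is a convex combination of $p_u$ and $\mathcal G_g(p)_u$. Since $\ell^\Theta_{\mathrm C}$ is a weighted Euclidean norm of CDF differences, convexity gives
\[
\ell^\Theta_{\mathrm C}\bigl(h^{(g)}_\mu(p)_u,h^{(g')}_\mu(q)_u\bigr)\le(1-\mu_u)\,\ell^\Theta_{\mathrm C}(p_u,q_u)+\mu_u\,\ell^\Theta_{\mathrm C}\bigl(\mathcal G_g(p)_u,\mathcal G_{g'}(q)_u\bigr).
\]
It therefore suffices to show
\[
\ell^\Theta_{\mathrm C,\infty}\bigl(\mathcal G_g(p),\mathcal G_{g'}(q)\bigr)\le\ell^\Theta_{\mathrm C,\infty}(p,q)+\lambda|g-g'| .
\]
Split with the triangle inequality into $\mathcal G_g(p)$ versus $\mathcal G_g(q)$, which is bounded by Theorem~\ref{thm:gg-nonexp}, and $\mathcal G_g(q)$ versus $\mathcal G_{g'}(q)$. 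This second term is the main obstacle: we need a shift bound for it.

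To bound the shift term, note that $\nu^{(g')}_{ij}\ast\eta=(\tau_{g-g'})_\#(\nu^{(g)}_{ij}\ast\eta)$. The Cram\'er distance between a law and its translate by $t$ is
\[
\Bigl(\int\bigl(F(x)-F(x-t)\bigr)^2\,dx\Bigr)^{1/2}\le\Bigl(\int|F(x)-F(x-t)|\,dx\Bigr)^{1/2}=|t|^{1/2},
\]
using $0\le|F(x)-F(x-t)|\le1$. Mixing over $j$ and projecting with $\Pi^\Theta_{\mathrm C}$ are both non-expansive, so this bound survives, and Proposition~\ref{prop:cramer} transfers it to coordinates. The result is a bound of $|g-g'|^{1/2}$, not a linear one.

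To convert this to the form $\lambda|g-g'|$, I would use that the relevant discrepancy comes from CDF differences sampled on the grid. Concretely, $\Delta\sum_k\bigl(F(\theta_k)-F(\theta_k-t)\bigr)^2$ is at most $\Delta\cdot\max_k|\cdot|\cdot\sum_k|\cdot|$. Here the sum is controlled by about $|t|/\Delta+1$ after projection, which gives a bound on the square of order $|t|$ times a constant. I would aim to show
\[
\ell^\Theta_{\mathrm C}\bigl(\Pi(\tau_t)_\#\eta,\ \Pi\eta\bigr)\le\Delta^{-1/2}|t|\quad\text{when }|t|\le\Delta ,
\]
by a direct computation: the projection moves mass linearly in $t$ between adjacent atoms, so each CDF coordinate changes by at most $|t|/\Delta$ and the total variation across coordinates is bounded. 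For larger $|t|$, I would apply this in increments of at most $\Delta$ and use the triangle inequality. This is exactly where the hypothesis $\lambda\ge\Delta^{-1/2}$ should enter. Since $g,g'\in[0,1]$, $|t|$ is bounded, which may also be used. This step is the one I expect to require the most care.

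\textbf{Fixed points.} A fixed point $(p,g)$ must satisfy $g=\bar r^\pi$, because the second coordinate of the map is the constant $\bar r^\pi$. With $g=\bar r^\pi$ we have $h^{(g)}_\mu=h_\mu$, and Proposition~\ref{prop:mean-field-fix} with $\rho=\mu$ (so $\mu_{\min}>0$ by irreducibility) gives $\mathrm{fix}(h_\mu)=\mathrm{fix}(\mathcal G)$. This yields exactly the claimed set $\{(p,\bar r^\pi):p\in\mathrm{fix}(\mathcal G)\}$.
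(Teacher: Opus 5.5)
Your proposal is correct and follows essentially the same route as the paper. Both use linearity of the projection for the expectation identity. Both reduce non-expansiveness, through the blockwise convex-combination and triangle-inequality argument, to the shift bound $\ell^\Theta_{\mathrm C,\infty}(\mathcal G_g(p),\mathcal G_{g'}(q))\le \ell^\Theta_{\mathrm C,\infty}(p,q)+\Delta^{-1/2}|g-g'|$ (the paper's Lemmas~\ref{lem:appC-1}--\ref{lem:appC-2}), and both read off $g=\bar r^\pi$ for the fixed points.

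One simplification: the paper proves the shift bound per atom, observing that the cumulative vector of $L_t e_m$ is piecewise affine in $t$ with derivative of Euclidean norm at most $\Delta^{-1}$. This gives the linear bound for every $|t|$ at once, so you do not need your incremental step for $|t|>\Delta$ or the preliminary $|t|^{1/2}$ estimate.
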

\begin{corollary}\label{corol:residual-conv-online}
For every $p, g \in \mathcal Z$,
\begin{equation}
\mu_{\min}\ell^\Theta_{\mathrm C, \infty}\bigl(p, \mathcal G(p)\bigr) \le d_\lambda \bigl(\bigl(p, g\bigr), \bigl(h^{(g)}_\mu(p), \bar r^\pi\bigr) \bigr)\quad\text{and}\quad \lvert g- \bar r^\pi \rvert\le \lambda^{-1} d_\lambda \bigl(\bigl(p, g\bigr), \bigl(h^{(g)}_\mu(p), \bar r^\pi\bigr) \bigr).
\end{equation}
\end{corollary}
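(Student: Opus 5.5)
The second inequality is immediate from the definition \eqref{eq:product-metric}, because $d_\lambda\bigl((p,g),(h^{(g)}_\mu(p),\bar r^\pi)\bigr)=\ell^\Theta_{\mathrm C,\infty}\bigl(p,h^{(g)}_\mu(p)\bigr)+\lambda\lvert g-\bar r^\pi\rvert\ge\lambda\lvert g-\bar r^\pi\rvert$. So the work is the first inequality. I would prove it in three steps: a residual conversion for the $g$-centered map, a Lipschitz bound of $\mathcal G_g$ in $g$, and a triangle inequality that combines them.

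\emph{Step 1 (residual conversion for $\mathcal G_g$).} I would repeat the argument of Corollary~\ref{corol:residual-conversion} with $\mathcal G$ replaced by $\mathcal G_g$. The $i$th block of $h^{(g)}_\mu(p)$ is $p_i+\mu_i(\mathcal G_g(p)_i-p_i)$. The map $u\mapsto F_u(\theta_k)$ is linear, so the CDF differences of this block from $p_i$ are exactly $\mu_i$ times those of $\mathcal G_g(p)_i$ from $p_i$. Hence
\[
\ell^\Theta_{\mathrm C}\bigl(p_i,h^{(g)}_\mu(p)_i\bigr)=\mu_i\,\ell^\Theta_{\mathrm C}\bigl(p_i,\mathcal G_g(p)_i\bigr)\ge\mu_{\min}\,\ell^\Theta_{\mathrm C}\bigl(p_i,\mathcal G_g(p)_i\bigr).
\]
Taking the maximum over $i$ gives $\mu_{\min}\ell^\Theta_{\mathrm C,\infty}(p,\mathcal G_g(p))\le\ell^\Theta_{\mathrm C,\infty}(p,h^{(g)}_\mu(p))$.

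\emph{Step 2 (Lipschitz dependence on the gain).} This is the main obstacle, and it is the reason for the condition $\lambda\ge\Delta^{-1/2}$. The goal is $\ell^\Theta_{\mathrm C,\infty}(\mathcal G_g(p),\mathcal G_{g'}(p))\le\Delta^{-1/2}\lvert g-g'\rvert$. The naive Cramér bound for a shift by $\delta$ is only $\sqrt{\lvert\delta\rvert}$, which is too weak, so I would work through the projection instead.

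First, $\nu^{(g)}_{ij}=(\tau_{g'-g})_\#\nu^{(g')}_{ij}$, so $(\mathcal T_g\eta)_i=(\tau_{\delta})_\#(\mathcal T_{g'}\eta)_i$ with $\delta=g'-g$. By Proposition~\ref{prop:g-block-form}, it therefore suffices to compare $\Pi^\Theta_{\mathrm C}(\xi)$ with $\Pi^\Theta_{\mathrm C}((\tau_\delta)_\#\xi)$ for a probability law $\xi$.

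Second, for the linear-interpolation projection, each cumulative mass $F(\theta_k)$ of $\Pi^\Theta_{\mathrm C}\xi$ equals $\int\phi_k\,d\xi$. Here $\phi_k$ is piecewise linear, equal to $1$ below $\theta_k$ and $0$ above $\theta_{k+1}$, with slope $-1/\Delta$ on $[\theta_k,\theta_{k+1}]$.

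Third, shifting by $\delta$ changes $F(\theta_k)$ by some $a_k$ with $\lvert a_k\rvert\le(\lvert\delta\rvert/\Delta)\,\xi(W_k)$, where $W_k$ is a window of width $\Delta+\lvert\delta\rvert$ around $[\theta_k,\theta_{k+1}]$. If $\lvert\delta\rvert\le\Delta$ these windows overlap boundedly; in general one can argue directly, since $\sum_k\lvert a_k\rvert\le\int\sum_k\lvert\phi_k(x)-\phi_k(x-\delta)\rvert\,d\xi(x)\le\lvert\delta\rvert/\Delta$, because the $\phi_k$ are monotone and their slopes have disjoint supports.

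Finally, $\Delta\sum_k a_k^2\le\Delta(\sum_k\lvert a_k\rvert)^2\le\delta^2/\Delta$. This gives $\ell^\Theta_{\mathrm C}\le\Delta^{-1/2}\lvert\delta\rvert\le\lambda\lvert g-g'\rvert$ blockwise, hence also in $\ell^\Theta_{\mathrm C,\infty}$. I expect this estimate may already be what underlies Proposition~\ref{prop:uncentered-nonexpansion}, in which case I would cite it.

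\emph{Step 3 (combine).} Apply Step 2 with $g'=\bar r^\pi$, together with the triangle inequality, Step 1, and $\mu_{\min}\le1$:
\[
\mu_{\min}\ell^\Theta_{\mathrm C,\infty}(p,\mathcal G(p))\le\mu_{\min}\ell^\Theta_{\mathrm C,\infty}(p,\mathcal G_g(p))+\mu_{\min}\lambda\lvert g-\bar r^\pi\rvert\le\ell^\Theta_{\mathrm C,\infty}(p,h^{(g)}_\mu(p))+\lambda\lvert g-\bar r^\pi\rvert.
\]
The right-hand side is exactly $d_\lambda\bigl((p,g),(h^{(g)}_\mu(p),\bar r^\pi)\bigr)$, which proves the first inequality.
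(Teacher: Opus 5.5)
Your proof is correct and follows the paper's argument. Like the paper, you combine the blockwise identity $\ell^\Theta_{\mathrm C}(p_i,h^{(g)}_\mu(p)_i)=\mu_i\,\ell^\Theta_{\mathrm C}(p_i,\mathcal G_g(p)_i)$, the gain-Lipschitz bound $\ell^\Theta_{\mathrm C,\infty}(\mathcal G_g(p),\mathcal G(p))\le\Delta^{-1/2}\lvert g-\bar r^\pi\rvert$ (Lemma~\ref{lem:appC-2} with $p=q$, built on Lemma~\ref{lem:appC-1}), the triangle inequality, and $\lambda\ge\Delta^{-1/2}$; your hat-function derivation of the Lipschitz bound is the same computation as Lemma~\ref{lem:appC-1}, since the cumulative-mass vector of a projected shifted atom there is exactly $(\phi_k(\theta_m+t))_k$.
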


We now establish the non-asymptotic convergence of the coupled SKM recursion \eqref{eq:coupled-skm}. 
\begin{theorem}[Coupled recursion Markovian convergence]\label{thm:online-main}
Let $(S_k)_{k\ge 0}$ be an irreducible and aperiodic Markov chain with stationary distribution $\mu$, let $Y_k = (S_k, R_k, S_{k+1})$ be the associated uncentered one-sample process, and let $\lambda \ge \Delta^{-1/2}$. First, if $\alpha_k = (k+1)^{-a}$ with $\frac{4}{5} < a \le 1$, then the coupled SKM recursion \eqref{eq:coupled-skm} converges a.s.\ to a random limit $(p^\star, \bar r^\pi)$ such that $p^\star \in \mathrm{fix}(\mathcal G)$. Second, fix any $a_1 \in \bigl(\frac{4}{5}, 1\bigr)$ and set $\varepsilon_{a_1}:=(5a_1-4)/10$. There is an explicit two-phase step size schedule and constants $C^{\mathcal Z}_{\mathrm{mk}}, \kappa_{a_1} >0$ such that, for all $k \ge 1$, 
\begin{equation}
\begin{gathered}
\E\bigl[\ell^\Theta_{\mathrm C, \infty}\bigl(p_k, \mathcal G(p_k)\bigr)\bigr] \le \frac{C^{\mathcal Z}_{\mathrm{mk}}}{\mu_{\min}}(k+1)^{-1/10}\min \left\{\kappa_{a_1} \log (k+1), (k+1)^{\varepsilon_{a_1}} \right\},\\ 
\E[\lvert g_k - \bar r^\pi \rvert] \le \frac{C^{\mathcal Z}_{\mathrm {mk}}}{\lambda}(k+1)^{-1/10}\min \left\{\kappa_{a_1} \log (k+1), (k+1)^{\varepsilon_{a_1}} \right\},
\end{gathered}
\end{equation}
where $C_{\mathrm{mk}}^{\mathcal Z}$ depends on $a_1$, $\mathrm{diam}_{d_\lambda}(\mathcal Z) = \sqrt{\theta_d - \theta_1} + \lambda$, 
and on the corresponding finite-state Poisson constant for the uncentered chain $(Y_k)$. Similarly to Theorem~\ref{thm:centered-markov}, the fixed-exponent choice $a = \frac{4}{5}+\epsilon$ for  $\epsilon >0$ gives a.s.\ convergence with expected residual and gain error decaying at rate $O(k^{-1/10+\epsilon/2})$.

\begin{remark}
Appendix~\ref{sec:synchronous} shows that full synchronous sampled backups are gain-free at the exact quotient-law level: changing the centering constant only induces a common translation. However, this does not remove the need for online gain estimation in the practical fixed-grid categorical recursion, where projection can change the coefficients.
\end{remark}

\end{theorem}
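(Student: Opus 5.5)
The plan is to recast the coupled recursion \eqref{eq:coupled-skm} as a single stochastic Krasnosel'ski\u{\i}--Mann iteration on the compact product space $\mathcal Z = \Delta^\X_d\times[0,1]$ with the metric $d_\lambda$, and then run the Markovian argument of Theorem~\ref{thm:centered-markov} on it. Write $z_k = (p_k, g_k)$ and $\mathbf H(z, y) := \bigl(H_g(p, y), r\bigr)$ for $y = (s, r, s')$. Then \eqref{eq:coupled-skm} reads $z_{k+1} = z_k + \alpha_k(\mathbf H(z_k, Y_k) - z_k)$, and both components share the same step size. Since $r\in[0,1]$ and $g_0\in[0,1]$, each $g_k$ is a convex combination of points of $[0,1]$, so the iterates remain in $\mathcal Z$.

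The structural hypotheses needed by the non-expansive SA results of \citet{blaser2026asymptotic} come from the preceding propositions.
\begin{enumerate}[leftmargin=*,labelsep=0.4em]
\item Proposition~\ref{prop:uncentered-nonexpansion} shows that $\mathbf H(\cdot, y)$ is non-expansive in $d_\lambda$ for every $y$. This is where $\lambda\ge\Delta^{-1/2}$ enters: a shift of the reward centering by $|g-g'|$ moves the projected law by at most $\Delta^{-1/2}|g-g'|$ in $\ell^\Theta_{\mathrm C}$.
\item Proposition~\ref{prop:mean-field-online} identifies the stationary mean field as $\bar h(z) = \bigl(h^{(g)}_\mu(p), \bar r^\pi\bigr)$. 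It also shows that $\bar h$ is non-expansive and that $\mathrm{fix}(\bar h) = \mathrm{fix}(\mathcal G)\times\{\bar r^\pi\}$.
\item This fixed-point set is nonempty by Corollary~\ref{corol:fixed-points}.
\item The driving process $(Y_k)$ is an irreducible, aperiodic chain on a finite set, because rewards are deterministic functions of finitely many state--action pairs. It therefore admits a bounded solution of the Poisson equation for $y\mapsto \mathbf H(z,y)-\bar h(z)$, uniformly in $z$, with a constant controlled by the fundamental matrix of $(Y_k)$.
\item Compactness of $\mathcal Z$ bounds the noise by $\mathrm{diam}_{d_\lambda}(\mathcal Z) = \sqrt{\theta_d-\theta_1}+\lambda$.
\end{enumerate}
With these in hand, the fixed-exponent results yield a.s.\ convergence of $z_k$ to a random point of $\mathrm{fix}(\bar h)$ when $\alpha_k=(k+1)^{-a}$ with $4/5<a\le 1$. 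That point has the form $(p^\star,\bar r^\pi)$ with $p^\star\in\mathrm{fix}(\mathcal G)$, which is the first claim.

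For the rates, I would reuse the two-phase schedule of Appendix~\ref{app:B} word for word, now in the metric $d_\lambda$. This bounds $\E[d_\lambda(z_k,\bar h(z_k))]$ by $C^{\mathcal Z}_{\mathrm{mk}}(k+1)^{-1/10}\min\{\kappa_{a_1}\log(k+1),(k+1)^{\varepsilon_{a_1}}\}$. Taking expectations in the two inequalities of Corollary~\ref{corol:residual-conv-online} then splits this bound into the two stated estimates: dividing by $\mu_{\min}$ gives the projected residual, and dividing by $\lambda$ gives the gain error.

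I expect the main obstacle to be checking the hypotheses of the Markovian non-expansive SA framework when the map depends on a state-dependent noise kernel that is non-smooth in $z$. The Poisson-equation correction term $\mathbf H(z_{k+1},\cdot)-\mathbf H(z_k,\cdot)$ must be controlled by $d_\lambda(z_{k+1},z_k)\le \alpha_k\,\mathrm{diam}_{d_\lambda}(\mathcal Z)$. This requires Lipschitz continuity in $z$ of the Poisson solution $\hat{\mathbf H}(z,y) = \sum_t (P_Y^t - \Pi_Y)\mathbf H(z,\cdot)(y)$. I would obtain it by combining the uniform-in-$y$ non-expansiveness from Proposition~\ref{prop:uncentered-nonexpansion} with the geometric ergodicity of the finite chain $(Y_k)$. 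The Lipschitz constant is then the norm of the fundamental matrix. That step is also what forces the $4/5$ exponent threshold and the $-1/10$ rate.
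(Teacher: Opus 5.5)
Your proposal is correct and follows the paper's proof essentially step for step. You recast \eqref{eq:coupled-skm} as a single SKM iteration on $(\mathcal Z, d_\lambda)$, take the hypotheses of \citet{blaser2026asymptotic} from Propositions~\ref{prop:uncentered-nonexpansion} and~\ref{prop:mean-field-online} together with Corollary~\ref{corol:fixed-points}, rerun the two-phase argument of Proposition~\ref{prop:centered-markov-two-phase} with only the diameter and the Poisson constant changed, and split the product residual using Corollary~\ref{corol:residual-conv-online}. Your additional checks are invariance of $\mathcal Z$ under the recursion and Lipschitz continuity of the Poisson solution via the fundamental matrix; these are the same ingredients the paper packages into Lemma~\ref{lem:markov-inexact-km}.
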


\section{Experimental results}
We present two fixed-policy evaluation experiments: a simple five-state MDP to validate the theoretical claims of the paper, and a continuous-state experiment based on \texttt{Pendulum-v1} \citep{gym} in Appendix~\ref{sec:pendulum}, as a proof-of-concept of neural function approximation. 
The goal of the experiments is to empirically validate the theoretical results proposed in the paper regarding the quotient-categorical formulation and SA recursions. We use a fixed categorical support $\Theta$ with uniform spacing and evaluate all quantities using the supremum-Cramér metric in both of the experiments.

\noindent\textbf{Five-state MDP.}
We first consider a simple five-state MDP with deterministic rewards in $[0,1]$ and an irreducible and aperiodic transition matrix. Hence, the chain has a unique stationary distribution $\mu$, and the gain is given by $\bar r^\pi=\mu^\top r$. We compare four recursions: (i) the exact KM iteration for $\mathcal G$, (ii) the centered SKM recursion with i.i.d.\ one-step samples, (iii) the centered SKM recursion along a Markovian trajectory, and (iv) the coupled uncentered SKM recursion. For each method, we report the projected-operator residual $\ell^\Theta_{\mathrm C, \infty}(p_k,\mathcal G(p_k))$ and the corresponding mean-field residual $\ell^\Theta_{\mathrm C, \infty}(p_k,h_\rho(p_k))$. In the coupled case, we additionally report the gain error $|g_k-\bar r^\pi|$. The results in Figure~\ref{fig:mrp_main} match the predictions of the theory. In Figure~\ref{fig:mrp_main}(a), the exact KM residual $\ell^\Theta_{\mathrm C, \infty}(p_k,\mathcal G(p_k))$ decreases toward zero, illustrating the non-expansiveness and fixed-point structure of $\mathcal G$ established in Theorem~\ref{thm:gg-nonexp} and Corollary~\ref{corol:fixed-points}. The centered i.i.d.\ SKM recursion exhibits the same qualitative convergence behavior, conforming to Theorem~\ref{thm:centered-iid}. The plotted mean-field residual $\ell^\Theta_{\mathrm C, \infty}(p_k,h_\rho(p_k))$ is the residual controlled directly by the SA result, and Corollary~\ref{corol:residual-conversion} translates this control to the projected-operator residual $\ell^\Theta_{\mathrm C, \infty}(p_k,\mathcal G(p_k))$. The Markovian SKM curve decreases more slowly and shows a wider variability band, which is consistent with Theorem~\ref{thm:centered-markov}, where temporal dependence gives a weaker finite-time rate than in the i.i.d.\ setting. Figure~\ref{fig:mrp_main}(b) shows that the coupled gain-estimated recursion decreases both the categorical residual and the gain error $|g_k-\bar r|$, as predicted by Theorem~\ref{thm:online-main}. In particular, Corollary~\ref{corol:residual-conv-online} shows that the product residual controls both $\ell^\Theta_{\mathrm C, \infty}(p_k,\mathcal G(p_k))$ and $|g_k-\bar r|$. Finally, Figure~\ref{fig:mrp_main}(c) isolates the role of centering in the fixed-grid representation. When the fixed gain is set to $g=0$, the recursion targets $\mathcal G_0$ rather than the correctly-centered $\mathcal G$. Consequently, its residual with respect to $\mathcal G$ does not vanish, while the correctly-centered baseline continues to converge. This is consistent with the distinction between gain invariance at the quotient-law level and the dependence of fixed-grid categorical coefficients on the chosen representative.

\begin{figure}[t]
\centering
\includegraphics[width=\textwidth]{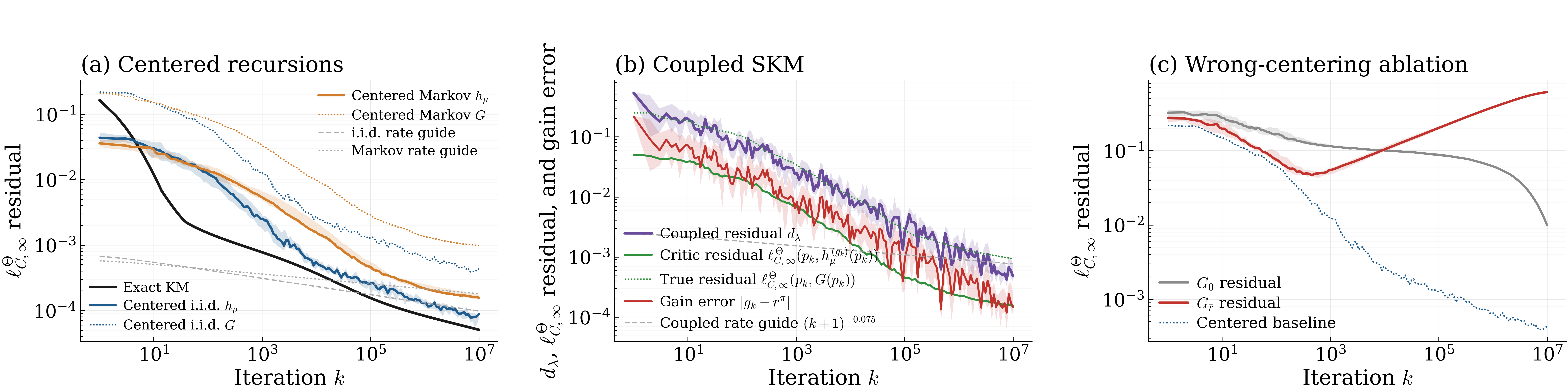}
\caption{
\textbf{(a)} Centered exact and stochastic recursions validate the non-expansiveness and convergence predictions of Theorems~\ref{thm:gg-nonexp}, \ref{thm:centered-iid}, and \ref{thm:centered-markov}.
\textbf{(b)} Both the product residual and gain error decrease, validating Theorem~\ref{thm:online-main}.
\textbf{(c)} The wrong-centering ablation shows that using $g=0$ as the centering targets $\mathcal G_0$ rather than the correctly centered operator $\mathcal G$. The rate guide curves plot the theorem rates without their exact multiplicative constants, intended to show only the rate of decay.
}
\label{fig:mrp_main}
\end{figure}

\begin{figure}[t]
\centering
\includegraphics[width=\textwidth]{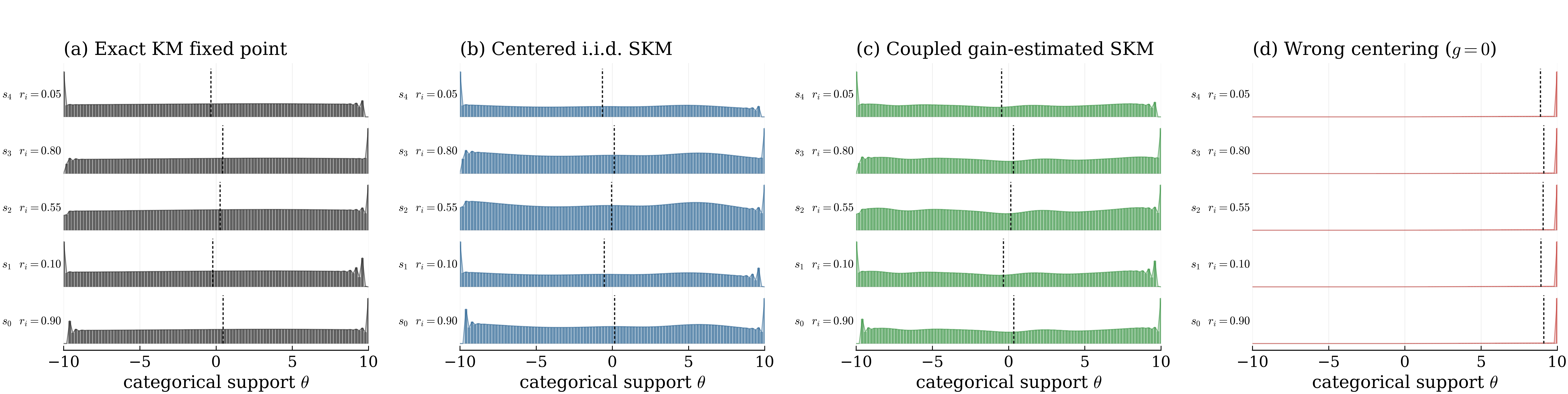}
\caption{
Categorical bias laws learned in the five-state MRP.
\textbf{(a)} The exact KM fixed point gives the reference categorical fixed-point laws for the projected operator $\mathcal G$.
\textbf{(b)} The centered i.i.d.\ SKM recovers categorical laws closely aligned with the exact KM solution.
\textbf{(c)} The coupled gain-estimated SKM recovers a similar normalized solution while learning the gain online.
\textbf{(d)} The wrong-centering ablation produces a visibly degenerate fixed-grid solution, illustrating the effect of using $\mathcal G_0$ instead of $\mathcal G$.
Dashed vertical lines show the corresponding means.
}
\label{fig:mrp_distributions}
\end{figure}

Figure~\ref{fig:mrp_distributions} visualizes the learned quotient-categorical bias laws for each state in the five-state MRP. Each row corresponds to one state, and each panel shows the categorical law supported on the fixed grid $\Theta$, together with its mean indicated by a dashed vertical line. The exact KM fixed point, the centered i.i.d.\ SKM recursion, and the coupled SKM recursion recover closely aligned normalized categorical laws across states. Thus, after quotient normalization by a common shift, the stochastic recursions produce the same categorical fixed-point the exact projected iteration. This is consistent with Theorems~\ref{thm:centered-iid}, \ref{thm:centered-markov}, and~\ref{thm:online-main}, which establish convergence to $\mathrm{fix}(\mathcal G)$ under i.i.d.\ centered sampling, Markovian centered sampling, and Markovian online gain-estimated sampling, respectively.

In contrast, the wrong-centering ablation produces categorical laws that are visibly misaligned relative to the correctly centered solutions, even though the same fixed support $\Theta$ is used. This illustrates the distinction between quotient-level gain invariance and fixed-grid categorical effects. At the exact quotient-law level, changing the centering constant only induces a common translation, as Proposition~\ref{prop:tg} and Proposition~\ref{prop:sync-gain-free}. However, after projection onto a fixed categorical support, the coefficients depend on the chosen representative. Consequently, using an incorrect fixed gain $g=0$ takes the recursion toward the fixed point of $\mathcal G_0$ rather than the correctly centered operator $\mathcal G_{\bar r}$, which explains the shifted solution which has collapsed on the largest atom, presented in Figure~\ref{fig:mrp_distributions}.

\section{Conclusion}

We introduced a quotient-categorical formulation for average-reward distributional bias estimation, resolving the additive-constant ambiguity of the scalar bias at the level of state-indexed laws. This viewpoint leads to a projected operator that is well-defined and non-expansive in a coordinate Cram\'er metric, and to TD-style SA schemes with a.s.\ convergence and finite-iteration residual guarantees. When the gain is unknown, augmenting the categorical recursion with an online gain estimate preserves the same non-expansive structure and yields joint control of the projected residual and gain error. We empirically corroborated these predictions on a five-state MDP, and included a continuous-state neural function approximation experiment in Appendix~\ref{sec:pendulum}. The same machinery should also extend to multivariate categorical representations by replacing the one-dimensional cumulative embedding and projection with their multivariate analogues, following the construction of \citet{kaya2026finiteiterationtheoryasynchronouscategorical}. Our theory focuses on finite-state fixed-policy evaluation with fixed categorical supports. Theoretical analysis of richer function approximation and control settings is a natural next step.

\newpage
\bibliography{refs}

\newpage
\appendix
\section*{Appendix Table of Contents}
\addcontentsline{toc}{section}{Appendix}
\markboth{Appendix}{Appendix}
\startcontents[appendix]
\printcontents[appendix]{l}{1}{\setcounter{tocdepth}{3}}

\newpage
\section{Proofs of Section~\ref{sec:quotient}}\label{app:A}

\subsection{Proof of Proposition~\ref{prop:T}}
\begin{proof}[Proof of Proposition~\ref{prop:T}]
Fix $\eta \in \mathcal F^\X$, $c \in \R$, and $i \in \X$. For each $j$, convolution commutes with common translation:
\begin{equation}
\nu_{ij}\,\ast\,\bigl((\tau_c)_\# \eta_j \bigr) = (\tau_c)_\#(\nu_{ij}\,\ast\,\eta_j).
\end{equation}
Since pushforward by the same translation is linear on mixtures,
\begin{equation}
\begin{aligned}
\bigl(\mathcal T\bigl((\tau_c)_\#\eta\bigr)\bigr)_i &= \sum_{j \in \X}P_{ij} \,\nu_{ij}\,\ast\,\bigl((\tau_c)_\#\eta_j\bigr)\\
&=\sum_{j\in\X} P_{ij}\,(\tau_c)_\# (\nu_{ij}\,\ast\,\eta_j)\\
&=(\tau_c)_\# \sum_{j \in \X} P_{ij} (\nu_{ij}\,\ast\,\eta_j) = \bigl((\tau_c)_\# \mathcal T \eta \bigr)_i.
\end{aligned}
\end{equation}
This proves \eqref{eq:commute}. Therefore, if $\eta \approx \zeta$, then $\mathcal T \eta\approx \mathcal T \zeta$, so $\overline{\mathcal T}([\eta]) := [\mathcal T \eta]$ is independent of the representative.
\end{proof}

\subsection{Proof of Proposition~\ref{prop:tg}}
\begin{proof}[Proof of Proposition~\ref{prop:tg}]
Fix $i \in \X$. Since $R_{ij} - g' = (R_{ij} -g) + (g - g')$, we have
\begin{equation}
\nu^{(g')}_{ij} = (\tau_{g-g'})_\# \nu^{(g)}_{ij}.
\end{equation}
Using the shift-equivariance of translation and convolution:
\begin{equation}
\nu^{(g')}_{ij}\,\ast\,\eta_j = \bigl((\tau_{g-g'})_\#\nu^{(g)}_{ij} \bigr)\,\ast\,\eta_j = (\tau_{g-g'})_\#(\nu^{(g)}_{ij}\,\ast\,\eta_j).
\end{equation}
Summing over $j$ therefore yields
\begin{equation}
(\mathcal T_{g'}\eta)_i = (\tau_{g-g'})_\#(\mathcal T_g \eta)_i.
\end{equation}
Hence, $\mathcal T_{g'}\eta$ and $\mathcal T_g \eta$ differ by a common translation and thus determine the same equivalence class.
\end{proof}

\subsection{Proof of Lemma~\ref{lem:proj-shift-equiv}}
\begin{proof}[Proof of Lemma~\ref{lem:proj-shift-equiv}]
Write $\Pi^\Theta_{\mathrm C} \eta = \sum_{k=1}^d p_k \delta_{\theta_k}$. The coefficients $p_k$ are determined by the usual linear interpolation weights that split mass between neighboring support points in $\Theta$, together with edge clipping outside the interval $[\theta_1, \theta_d]$. Replacing $\eta$ by $(\tau_c)_\#\eta$ and the support $\Theta$ by $\Theta_c$ translates every relevant location by the same scalar $c$, so the interpolation weights are unchanged. Consequently, the projected coefficients are the same, and the projected law is translated by $c$.
\end{proof}

\subsection{Proof of Proposition~\ref{prop:cramer}}
\begin{proof}[Proof of Proposition~\ref{prop:cramer}]
Fix a state $i \in \X$ and a common shift $c \in \R$. Since both $\eta^{p, c}_i$ and $\eta^{q, c}_i$ are supported on the same ordered grid $\Theta_c$, the difference of their cdfs is constant on each interval $[\theta_k +c, \theta_{k+1} +c)$, and equals $\sum_{j=1}^k (p_{i, j} - q_{i,j})$. Therefore
\begin{equation}
\begin{aligned}
\ell_{\mathrm C}(\eta^{p, c}_i, \eta^{q, c}_i)^2 &= \sum_{k=1}^{d-1}(\theta_{k+1}-\theta_k)\Bigl(\sum_{j=1}^k(p_{i, j} - q_{i,j}) \Bigr)^2 \\
&=\Delta \sum_{k=1}^{d-1}\bigl(F_{p_i}(\theta_k) - F_{q_i}(\theta_k) \bigr)^2 = \ell^\Theta_{\mathrm C}(p_i, q_i)^2.
\end{aligned}
\end{equation}
Taking square roots proves the statewise identity, and the block-supremum identity follows by taking the maximum over the states.
\end{proof}

\subsection{Proof of Theorem~\ref{thm:gg-nonexp}}
We first state two lemmas.
\begin{lemma}\label{lem:applemma1}
For every $g \in \R$ and all $\eta, \zeta \in \mathcal F^\X$,
\begin{equation}
\ell_{\mathrm C, \infty}(\mathcal T_g\eta, \mathcal T_g\zeta) \le \ell_{\mathrm C, \infty}(\eta, \zeta).
\end{equation}
\end{lemma}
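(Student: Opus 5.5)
The plan is to work directly with cumulative distribution functions, using $\ell_{\mathrm C}(\alpha,\beta)^2 = \int_{\R}(F_\alpha(x)-F_\beta(x))^2\,dx$ on $\mathcal P_2(\R)$. The estimate splits into two statewise facts. The first is that convolution with a fixed law does not increase $\ell_{\mathrm C}$. The second is that mixtures with common weights give a distance bounded by the maximum of the componentwise distances. Combining them state by state and taking the maximum over $i$ gives the lemma.

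\emph{Step 1: convolution.} Fix $i,j$ and write $\nu := \nu^{(g)}_{ij}$. For any $\alpha,\beta\in\mathcal P_2(\R)$ the identity
\begin{equation}
F_{\nu\ast\alpha}(x) = \int_{\R} F_\alpha(x-b)\,\nu(db)
\end{equation}
gives $F_{\nu\ast\alpha}(x)-F_{\nu\ast\beta}(x) = \int (F_\alpha-F_\beta)(x-b)\,\nu(db)$. Jensen's inequality (or Cauchy--Schwarz against the probability measure $\nu$) then yields
\begin{equation}
\bigl(F_{\nu\ast\alpha}(x)-F_{\nu\ast\beta}(x)\bigr)^2 \le \int (F_\alpha-F_\beta)^2(x-b)\,\nu(db).
\end{equation}
Integrate in $x$ and apply Tonelli. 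Lebesgue measure is translation invariant, so for each fixed $b$ the inner integral equals $\ell_{\mathrm C}(\alpha,\beta)^2$. Therefore $\ell_{\mathrm C}(\nu\ast\alpha,\nu\ast\beta)\le \ell_{\mathrm C}(\alpha,\beta)$. One must also check that $\nu\ast\alpha$ stays in $\mathcal P_2(\R)$. This holds because $\nu$ is the law of a finite-valued variable, hence has bounded support.

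\emph{Step 2: mixtures.} For weights $P_{ij}\ge 0$ summing to one, the cdf of $\sum_j P_{ij}\alpha_j$ is $\sum_j P_{ij}F_{\alpha_j}$. Hence $\sum_j P_{ij}\alpha_j$ and $\sum_j P_{ij}\beta_j$ have cdf difference $\sum_j P_{ij}(F_{\alpha_j}-F_{\beta_j})$. Minkowski's inequality in $L^2(\R)$ gives
\begin{equation}
\ell_{\mathrm C}\Bigl(\sum_j P_{ij}\alpha_j,\sum_j P_{ij}\beta_j\Bigr) \le \sum_j P_{ij}\,\ell_{\mathrm C}(\alpha_j,\beta_j) \le \max_j \ell_{\mathrm C}(\alpha_j,\beta_j).
\end{equation}
The mixture also stays in $\mathcal P_2$.

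\emph{Combination.} Set $\alpha_j=\nu^{(g)}_{ij}\ast\eta_j$ and $\beta_j=\nu^{(g)}_{ij}\ast\zeta_j$. Step 2 followed by Step 1 gives
\begin{equation}
\ell_{\mathrm C}\bigl((\mathcal T_g\eta)_i,(\mathcal T_g\zeta)_i\bigr)\le \max_j \ell_{\mathrm C}(\eta_j,\zeta_j) = \ell_{\mathrm C,\infty}(\eta,\zeta).
\end{equation}
Taking the maximum over $i$ proves the claim.

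The main technical point is to ensure all the integrals are finite, so that Tonelli and Minkowski are legitimate. The Cramér distance between two $\mathcal P_2$ laws is finite, since each has a finite first moment and $\int|F_\alpha-F_\beta|^2\le\int|F_\alpha-F_\beta|<\infty$. If $\ell_{\mathrm C}(\alpha,\beta)=\infty$ were ever allowed, the inequality would hold trivially. Every other step is elementary.
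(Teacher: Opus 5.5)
Your proof is correct and takes essentially the same route as the paper. Both write the state-$i$ cdf of $\mathcal T_g\eta$ as an average of translated cdfs, apply Jensen, remove the translation by a change of variables, and take the maximum over states. The only difference is cosmetic: the paper applies Jensen once to the joint average over $j$ and $R_{ij}$, obtaining the bound $\sum_j P_{ij}\,\ell_{\mathrm C}(\eta_j,\zeta_j)^2$, whereas you handle the convolution by Jensen and the mixture by Minkowski.
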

\begin{proof}
Fix $i \in \X$. The output law at state $i$ is a mixture of shifted successor laws, so
\begin{equation}
F_{(\mathcal T_g \eta)_i}(x) = \sum_{j\in\X} P_{ij}\E\bigl[F_{\eta_j}(x-
R_{ij}+g) \bigr],
\end{equation}
and similarly for $\zeta$. Jensen's inequality then gives
\begin{equation}
\ell_{\mathrm C}\bigl((\mathcal T_g\eta)_i, (\mathcal T_g\zeta)_i \bigr)^2 \le \sum_{j \in \X}P_{ij}\E\Bigl[\int_\R\bigl(F_{\eta_j}(x-R_{ij}+g) - F_{\zeta_j}(x-R_{ij}+g)\bigr)^2 \mathrm dx \Bigr].
\end{equation}
A change of variables gets rid of the translation, yielding
\begin{equation}
\ell_{\mathrm C}\bigl((\mathcal T_g\eta)_i, (\mathcal T_g\zeta)_i \bigr)^2 \le \sum_{j \in \X} P_{ij} \ell_{\mathrm C}(\eta_j, \zeta_j)^2 \le \ell_{\mathrm C, \infty}(\eta, \zeta)^2.
\end{equation}
Taking the maximum of the left-hand side over states and taking square roots proves the claim.
\end{proof}

\begin{lemma}\label{lem:applemma2}
For each $c \in \R$, the scalar categorical projection $\Pi^{\Theta_c}_\mathrm C$ is non-expansive in the Cram\'er metric. Consequently, its statewise product is non-expansive in $\ell_{\mathrm C, \infty}$.
\end{lemma}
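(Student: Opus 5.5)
By Lemma~\ref{lem:proj-shift-equiv} and the translation invariance of the Cram\'er metric, it suffices to treat $c=0$. Indeed, $\ell_{\mathrm C}((\tau_c)_\#\eta,(\tau_c)_\#\zeta)=\ell_{\mathrm C}(\eta,\zeta)$ by a change of variables in $\int (F_\eta - F_\zeta)^2$. Combining this with $\Pi^{\Theta_c}_{\mathrm C}\circ(\tau_c)_\# = (\tau_c)_\#\circ\Pi^{\Theta}_{\mathrm C}$, we get
\[
\ell_{\mathrm C}\bigl(\Pi^{\Theta_c}_{\mathrm C}\eta,\Pi^{\Theta_c}_{\mathrm C}\zeta\bigr)=\ell_{\mathrm C}\bigl(\Pi^{\Theta}_{\mathrm C}(\tau_{-c})_\#\eta,\ \Pi^{\Theta}_{\mathrm C}(\tau_{-c})_\#\zeta\bigr).
\]
So non-expansiveness of $\Pi^\Theta_{\mathrm C}$ transfers to every $\Pi^{\Theta_c}_{\mathrm C}$.

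For $\Pi^\Theta_{\mathrm C}$ itself I would use the Hilbert-space characterisation of \citet{rowland2018analysis}. Map each law to its cdf, so that $\ell_{\mathrm C}$ becomes the $L^2(\R)$ distance. The first step is to identify the cdf of $\Pi^\Theta_{\mathrm C}\eta$. On each cell $[\theta_k,\theta_{k+1})$ it equals the constant
\[
F_{\Pi\eta}(\theta_k)=\Delta^{-1}\int_{\theta_k}^{\theta_{k+1}}F_\eta(x)\,\mathrm dx .
\]
To check this, note that a point mass at $x\in[\theta_k,\theta_{k+1}]$ places weight $(\theta_{k+1}-x)/\Delta$ on $\theta_k$. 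Its cdf at $\theta_k$ is therefore $(\theta_{k+1}-x)/\Delta$, which is exactly the cell-average of $\mathbf 1\{y\ge x\}$. Mass below $\theta_1$ sent to $\theta_1$ and mass above $\theta_d$ sent to $\theta_d$ are handled the same way. The identity then extends to general $\eta$ by linearity in $\eta$ (Fubini).

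Consequently, on $[\theta_1,\theta_d)$ the map $F_\eta\mapsto F_{\Pi\eta}$ is the orthogonal $L^2$-projection onto functions that are piecewise constant on the cells. Outside $[\theta_1,\theta_d)$ the difference $F_{\Pi\eta}-F_{\Pi\zeta}$ vanishes, since both cdfs equal $0$ to the left of $\theta_1$ and $1$ to the right of $\theta_d$. Hence
\[
\|F_{\Pi\eta}-F_{\Pi\zeta}\|_{L^2(\R)}=\|Q(F_\eta-F_\zeta)|_{[\theta_1,\theta_d)}\|_{L^2}\le\|F_\eta-F_\zeta\|_{L^2(\R)},
\]
where $Q$ is that projection. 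Orthogonal projections are contractions and restriction only decreases the norm, which gives the scalar claim. The statewise claim follows by applying the scalar bound in each component and taking the maximum over $i\in\X$ in $\ell_{\mathrm C,\infty}$.

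The main obstacle is the careful verification of the cell-average formula, in particular the boundary clipping and points lying exactly on grid atoms. I would do this first for Dirac measures, where it is an explicit computation, and then integrate against $\eta$. The finite second moment guarantees that all the integrals involved are finite.
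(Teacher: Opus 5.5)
Your proof is correct and follows the paper's route. The paper simply invokes the non-expansiveness of the Cram\'er projection from \citet{rowland2018analysis} and then takes the maximum over states; you additionally supply the argument behind that citation, namely that $F_{\Pi^{\Theta}_{\mathrm C}\eta}$ is the cell-average of $F_\eta$ on $[\theta_1,\theta_d)$ (an orthogonal $L^2$ projection), and that $F_{\Pi^{\Theta}_{\mathrm C}\eta}-F_{\Pi^{\Theta}_{\mathrm C}\zeta}$ vanishes outside this interval. Your reduction to $c=0$ via Lemma~\ref{lem:proj-shift-equiv} is valid but unnecessary, since the same cell-averaging argument applies verbatim to the shifted grid $\Theta_c$.
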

\begin{proof}
This is the standard scalar categorical Cram\'er projection property, see \citet{rowland2018analysis}. Applying it statewise and taking the supremum over the blocks yield the claim.
\end{proof}
\begin{proof}[Proof of Theorem~\ref{thm:gg-nonexp}]
Fix $g \in \R$. To prove well-definedness, fix $p \in \Delta^\X_d$, and two representatives $\eta^{p, c}$ and $\eta^{p, c'}$. By Proposition~\ref{prop:tg},
\begin{equation}
\mathcal T_g \eta^{p, c'} = (\tau_{c'-c})_\# \mathcal T_g \eta^{p, c}.
\end{equation}
Applying Lemma~\ref{lem:proj-shift-equiv} statewise shows that the projected law family with shift $c'$ is the common-translation of the projected law family with shift $c$, so both representatives yield the same probability coefficients.

For non-expansiveness of $\mathcal G_g$ fix the representative shift $c = 0$. Then
\begin{equation}
\eta^{\mathcal G_g(p), 0} = \Pi^\Theta_\mathrm C(\mathcal T_g \eta^{p, 0}), \qquad \eta^{\mathcal G_g(q), 0} =\Pi^\Theta_\mathrm C(\mathcal T_g \eta^{q, 0}).
\end{equation}
Using Proposition~\ref{prop:cramer} and Lemmas \ref{lem:applemma1} and \ref{lem:applemma2},
\begin{equation}
\begin{aligned}
\ell^\Theta_{\mathrm C, \infty}\bigl(\mathcal G_g(p), \mathcal G_g(q)\bigr) &= \ell_{\mathrm C, \infty}(\eta^{\mathcal G_g(p), 0}, \eta^{\mathcal G_g(q), 0}) \\
&\le \ell_{\mathrm C, \infty}(\mathcal T_g \eta^{p,0}, \mathcal T_g \eta^{q, 0}) \\
&\le \ell_{\mathrm C, \infty}(\eta^{p, 0}, \eta^{q, 0}) = \ell^\Theta_{\mathrm C, \infty}\bigl(p, q).
\end{aligned}
\end{equation}
\end{proof}

\subsection{Proof of Corollary~\ref{corol:fixed-points}}
\begin{proof}[Proof of Corollary~\ref{corol:fixed-points}]
$\Delta^\X_d$ is a nonempty compact convex subset of a finite-dimensional Euclidean space, and $\mathcal G_g$ is continuous. Brouwer's fixed-point theorem \citep{aliprantis2006infinite} therefore guarantees the existence of fixed points.
\end{proof}

\newpage
\section{Proofs of Section~\ref{sec:centered}}\label{app:B}
For the convenience of proofs, let us write $L_b : \Delta_d \to \Delta_d$ for the unique coefficient map satisfying
\begin{equation}
\Pi^\Theta_\mathrm C \bigl((\tau_b)_\# \eta^{p,0} \bigr) = \eta^{L_b p,0},\qquad p \in \Delta_d, b\in\R.
\end{equation}

\subsection{Proof of Proposition~\ref{prop:g-block-form}}
\begin{proof}[Proof of Proposition~\ref{prop:g-block-form}]
Fix $g \in \R$ and $i \in \X$ and consider the representative shift $c=0$. By definition of $\mathcal G_g$,
\begin{equation}
\eta^{\mathcal G_g(p), 0}_i = \Pi^\Theta_\mathrm C\Bigl(\sum_{j \in \X} P_{ij}(\nu^{(g)}_{ij}\,\ast\,\eta^{p, 0}_j) \Bigr).
\end{equation}
Because scalar categorical projection is linear on mixtures,
\begin{equation}
\eta^{\mathcal G_g(p), 0}_i = \sum_{j \in \X}P_{ij}\Pi^\Theta_\mathrm C (\nu^{(g)}_{ij}\,\ast\,\eta^{p, 0}_j).
\end{equation}
Now, fix $j \in \X$ and let $B\sim \nu^{(g)}_{ij}$. Using the definition of $L_b$,
\begin{equation}
\Pi^\Theta_\mathrm C(\delta_b \,\ast\,\eta^{p, 0}_j) = \eta^{L_bp_j, 0}\qquad \text{for every } b \in \R.
\end{equation}
Taking expectation over $B$,
\begin{equation}
\Pi^\Theta_\mathrm C(\nu^{(g)}_{ij} \,\ast\,\eta^{p, 0}_j) = \E\bigl[\eta^{L_Bp_j,0} \bigr].
\end{equation}
Since each law inside the expectation is categorical on the same support $\Theta$, taking expectation simply averages the coefficient vectors, so
\begin{equation}
\E\bigl[\eta^{L_Bp_j,0} \bigr] = \eta^{\E[L_B p_j],0}.
\end{equation}
Comparing coefficients on the common support $\Theta$ shows that
\begin{equation}
\mathcal G_g(p)_i = \sum_{j \in \X} P_{ij} \E[L_B p_j], \qquad B \sim \nu^{(g)}_{ij}.
\end{equation}
\end{proof}

\subsection{Proof of Lemma~\ref{lem:one-sample-nonexp}}
\begin{proof}[Proof of Lemma~\ref{lem:one-sample-nonexp}]
Let $y=(s,b,s')$. For blocks $u \ne s$, the sampled update leaves the block unchanged. For the updated block,
\begin{equation}
\ell^\Theta_\mathrm C\bigl(H(p, y)_s, H(q, y)_s\bigr) = \ell^\Theta_\mathrm C(L_b p_{s'}, L_b q_{s'}).
\end{equation}
Using Proposition~\ref{prop:cramer} together with projection non-expansiveness and translation invariance of $\ell_\mathrm C$ gives
\begin{equation}
\ell^\Theta_\mathrm C(L_b p_{s'}, L_b q_{s'}) \le \ell^\Theta_\mathrm C(p_{s'}, q_{s'}).
\end{equation}
Taking the maximum over all state blocks proves the result.
\end{proof}

\subsection{Proof of Proposition~\ref{prop:mean-field-fix}}
\begin{proof}[Proof of Proposition~\ref{prop:mean-field-fix}]
The expectation identity follows blockwise from Proposition~\ref{prop:g-block-form}. $\mathrm{fix}(\mathcal G) \subseteq \mathrm{fix}(h_\rho)$ follows trivially from the definition of $h_\rho$. Conversely, if $h_\rho(p) = p$, then $\rho_i(\mathcal G(p)_i - p_i) = 0$ for every $i \in \X$. $\rho_{\min}>0$ implies that $\mathcal G(p)_i = p_i$ for every $i \in \X$, hence $p \in \mathrm{fix}(\mathcal G)$.

For the non-expansiveness statement, fix $i \in \X$. Since
\begin{equation}
h_\rho(p)_i = (1-\rho_i)p_i + \rho_i\mathcal G(p)_i,\qquad h_\rho(q)_i = (1-\rho_i)q_i + \rho_i \mathcal G(q)_i,
\end{equation}
we have for each $k = 1, \dots, d-1$,
\begin{equation}
F_{h_\rho(p)_i}(\theta_k) - F_{h_\rho(q)_i}(\theta_k) = (1 - \rho_i)\bigl(F_{p_i}(\theta_k) -F_{q_i}(\theta_k)\bigr) + \rho_i \bigl(F_{\mathcal G(p)_i}(\theta_k) -  F_{\mathcal G(q)_i}(\theta_k)\bigr).
\end{equation}
Through the triangle inequality with the vector of cumulative differences with $(d-1)$ elements,
\begin{equation}
\begin{aligned}
\ell^\Theta_\mathrm C\bigl(h_\rho(p)_i, h_\rho(q)_i \bigr) &\le (1-\rho_i) \ell^\Theta_\mathrm C(p_i, q_i) + \rho_i \ell^\Theta_\mathrm C\bigl(\mathcal G(p)_i, \mathcal G(q) _i\bigr) \\
&\le \ell^\Theta_{\mathrm C, \infty}(p, q),
\end{aligned}
\end{equation}
where the last inequality uses Theorem~\ref{thm:gg-nonexp}. Taking the maximum over $i \in \X$ proves non-expansiveness.
\end{proof}

\subsection{Proof of Corollary~\ref{corol:residual-conversion}}
\begin{proof}[Proof of Corollary~\ref{corol:residual-conversion}]
For each state $i \in \X$,
\begin{equation}
h_\rho(p)_i - p_i = \rho_i\bigl(\mathcal G(p)_i -p_i \bigr).
\end{equation}
Hence, for each $k = 1, \dots, d-1$,
\begin{equation}
F_{h_\rho(p)_i}(\theta_k) - F_{p_i}(\theta_k) = \rho_i\bigl(F_{\mathcal G(p)_i}(\theta_k) - F_{p_i}(\theta_k) \bigr),
\end{equation}
and therefore
\begin{equation}
\ell^\Theta_\mathrm C\bigl(p_i, h_\rho(p)_i\bigr) = \rho_i \ell^\Theta_\mathrm C\bigl(p_i, \mathcal G(p)_i\bigr).
\end{equation}
Taking the maximum over $i$ yields
\begin{equation}
\ell^\Theta_{\mathrm C, \infty}\bigl(p, h_\rho(p)\bigr) = \max_{i \in \X} \rho_i \ell^\Theta_\mathrm C\bigl(p_i, \mathcal G(p)_i\bigr),
\end{equation}
from which the stated bounds follow immediately.
\end{proof}

\subsection{Proof of Theorem~\ref{thm:centered-iid}}
We first state the following proposition.
\begin{proposition}[Two-phase i.i.d.\ residual refinement]\label{prop:centered-iid-two-phase}
Under the assumptions of Theorem~\ref{thm:centered-iid}, fix $a_1 \in \bigl(\frac23,1\bigr)$ and set
\begin{equation}
\varepsilon_{a_1}:=\frac{3a_1-2}{6},\qquad \kappa_{a_1}:=\frac{2^{\varepsilon_{a_1}}}{\log 2}.
\end{equation}
Let $T$ be the largest integer such that
\begin{equation}\label{eq:two-phase-threshold}
(k+1)^{\varepsilon_{a_1}}\le \kappa_{a_1}\log(k+1),\qquad 1\le k\le T.
\end{equation}
If the centered SKM recursion uses the two-phase step size
\begin{equation}\label{eq:two-phase-stepsize}
\alpha_k =
\begin{cases}
(k+1)^{-a_1}, & 1\le k\le T,\\
(k+1)^{-2/3}, & k>T,
\end{cases}
\end{equation}
then there exists $C_{\mathrm{iid}} >0$ such that, for all $k\ge1$,
\begin{equation}
\E\Bigl[\ell^\Theta_{\mathrm C,\infty}\bigl(p_k,h_\rho(p_k)\bigr)\Bigr]\le C_{\mathrm{iid}}(k+1)^{-1/6}\min\Bigl\{\kappa_{a_1}\log(k+1),(k+1)^{\varepsilon_{a_1}}\Bigr\}.
\end{equation}
Consequently,
\begin{equation}
\E\Bigl[\ell^\Theta_{\mathrm C,\infty}\bigl(p_k,\mathcal G(p_k)\bigr)\Bigr]\le \frac{C_{\mathrm{iid}}}{\rho_{\min}}(k+1)^{-1/6}\min\Bigl\{\kappa_{a_1}\log(k+1),(k+1)^{\varepsilon_{a_1}}\Bigr\}.
\end{equation}
\end{proposition}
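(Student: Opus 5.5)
The plan is to treat the centered SKM recursion \eqref{eq:KM} as a stochastic Krasnosel'ski\u{\i}--Mann iteration $p_{k+1} = p_k + \alpha_k\bigl(h_\rho(p_k) - p_k + U_k\bigr)$, with martingale-difference noise $U_k := H(p_k,Y_k) - h_\rho(p_k)$. Proposition~\ref{prop:mean-field-fix} gives $\E[U_k\mid\mathcal F_k]=0$, and it also gives that $h_\rho$ is non-expansive in $\ell^\Theta_{\mathrm C,\infty}$ with $\mathrm{fix}(h_\rho)=\mathrm{fix}(\mathcal G)\neq\emptyset$ (Corollary~\ref{corol:fixed-points}). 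Since $\Delta^\X_d$ is compact, $\|U_k\|$ is bounded uniformly by a constant comparable to $\mathrm{diam}_{\ell^\Theta_{\mathrm C,\infty}}(\Delta^\X_d)=\sqrt{\theta_d-\theta_1}$. One caveat is that $\ell^\Theta_{\mathrm C,\infty}$ is the norm $\|p\|:=\max_i \bigl(\Delta\sum_k F_{p_i}(\theta_k)^2\bigr)^{1/2}$ applied to differences. This is a non-Hilbert (max-of-Euclidean) norm, and equivalence constants to $\ell_2$ enter through $|\X|$. I will therefore invoke the general-norm last-iterate residual bound of \citet{bravo2024stochastic} for bounded martingale noise in finite-dimensional normed spaces. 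Its conclusion has the form
\begin{equation*}
\E\bigl[\|p_k - h_\rho(p_k)\|\bigr] \le C\Bigl(\tfrac{1}{\sqrt{\tau_k}} + \sigma_k\Bigr)\ \text{up to the appropriate combination},\qquad \tau_k=\sum_{j\le k}\alpha_j(1-\alpha_j),\quad \sigma_k^2=\sum_{j\le k}\alpha_j^2 .
\end{equation*}

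The first step is to record the fixed-exponent profile. With $\alpha_k=(k+1)^{-a}$ and $a\in(2/3,1)$, the bound yields $\E[\text{res}_k]\lesssim (k+1)^{-1/6+\varepsilon_a}$ with $\varepsilon_a=(3a-2)/6$. With the critical exponent $a=2/3$, the same computation produces $(k+1)^{-1/6}\log(k+1)$; the log comes from the harmonic-type sum in the noise term. I would verify the constant in front of the log to be of the form $\kappa\log(k+1)$, with $\kappa$ absorbed or normalized to $\kappa_{a_1}=2^{\varepsilon_{a_1}}/\log 2$. That normalization makes the two bounds cross exactly at the threshold $T$ in \eqref{eq:two-phase-threshold}. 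At $k=1$ the two expressions coincide ($2^{\varepsilon_{a_1}}$), so the crossover is well defined.

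The second step handles the two phases. For $k\le T$, only the $a_1$ steps have been used, so the fixed-exponent bound $C(k+1)^{-1/6}(k+1)^{\varepsilon_{a_1}}$ applies directly. In this range, the definition of $T$ says this quantity is the minimum. For $k>T$, I would restart the analysis at time $T$, conditioning on $p_{T+1}$ and using only that the diameter bounds the initial distance to $\mathrm{fix}(h_\rho)$. Then I would apply the critical-exponent bound to the tail schedule $(j+1)^{-2/3}$, $j>T$. The main obstacle lies here. The bound after a restart depends on $\sum_{j=T+1}^k\alpha_j$ rather than on $\sum_{j\le k}\alpha_j$, and for $k$ only slightly larger than $T$ this is small. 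I would address this by either of two routes. The first is to run the Bravo--Cominetti recursive bound over the full concatenated schedule, which their analysis allows since it only needs $\alpha_j\in(0,1]$ and the sums $\tau_k,\sigma_k$. The second is to show directly that the concatenated $\tau_k \gtrsim (k+1)^{1/3}$ and $\sigma_k^2\lesssim \log(k+1)+\text{const}$. The key estimate is that $(k+1)^{-a_1}\le(k+1)^{-2/3}$ while the initial segment contributes at most $\sum_j (j+1)^{-2a_1}<\infty$ to $\sigma^2$. This yields $C(k+1)^{-1/6}\kappa_{a_1}\log(k+1)$ for $k>T$, after enlarging $C$ by constants depending on $a_1$.

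Finally, combining both phases gives the mean-field residual bound with a single $C_{\mathrm{iid}}$ depending on $a_1$, $|\X|$ and the diameter. Applying the left inequality in Corollary~\ref{corol:residual-conversion} pointwise and taking expectations then divides by $\rho_{\min}$ and gives the stated bound on $\E[\ell^\Theta_{\mathrm C,\infty}(p_k,\mathcal G(p_k))]$.
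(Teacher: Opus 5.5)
Your architecture matches the paper's. You use an inexact Krasnosel'ski\u{\i}--Mann reduction with martingale noise and run the Bravo--Cominetti estimate over the full concatenated schedule; your first route is exactly what the paper does. You read off the primary phase from the fixed-exponent bound because the iterates coincide up to $T$, and you finish with Corollary~\ref{corol:residual-conversion}. The gap is in the quantitative engine. The bound you attribute to \citet{bravo2024stochastic}, $C\bigl(\tau_k^{-1/2}+\sigma_k\bigr)$ with $\sigma_k^2=\sum_{j\le k}\alpha_j^2$, cannot be correct. Here $\sigma_k$ is a nondecreasing positive partial sum, so such a bound never decays; for $a>\frac12$ it even tends to a positive constant. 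Your second route therefore fails: $\tau_k\gtrsim(k+1)^{1/3}$ and $\sigma_k^2\lesssim\log(k+1)+\mathrm{const}$ only give $(k+1)^{-1/6}+\sqrt{\log(k+1)}$. The estimate actually available, and used in the paper via \citet[Theorem~2.10]{bravo2024stochastic}, is
\[
\E\bigl[\ell^\Theta_{\mathrm C,\infty}\bigl(p_k,h_\rho(p_k)\bigr)\bigr]\le\frac{D}{\sqrt{\pi\tau_k}}+\frac{2M_2}{\sqrt{\pi}}\sum_{t=1}^{k-1}\frac{\alpha_t^{3/2}}{\sqrt{\tau_k-\tau_t}}+6M_2\sqrt{\alpha_k},
\]
with $D=\sqrt{\theta_d-\theta_1}$ and $M_2=\sqrt{|\X|}\,D$. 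The noise enters through the averaged process $\overline U_k=(1-\alpha_k)\overline U_{k-1}+\alpha_kU_k$. Its size $\E\|\overline U_t\|\le M_2\sqrt{\alpha_t}$, measured in the Euclidean cumulative-coordinate norm (which dominates the block-sup norm), is convolved against the kernel $1/\sqrt{\tau_k-\tau_t}$. Your intuition that the logarithm comes from a harmonic-type sum is right, but that sum is $\sum_t\alpha_t^{3/2}/\sqrt{\tau_k-\tau_t}$. It is the kernel, invisible to $\tau_k$ and $\sigma_k$, that produces the $(k+1)^{-1/6}$ prefactor.

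What is then missing is precisely the computation that resolves the restart obstacle you identified. For $k>T$, split the kernel sum at $T$.
\begin{itemize}
\item For $t>T$ you have $\alpha_t^{3/2}=(t+1)^{-1}$ and $\tau_k-\tau_t\gtrsim(k+1)^{1/3}-(t+1)^{1/3}$. The substitution $u=(x+1)^{1/3}$ in the comparison integral then gives $O\bigl((k+1)^{-1/6}\log(k+1)\bigr)$.
\item The finitely many early terms $t\le T$, together with $D/\sqrt{\pi\tau_k}$, are nonincreasing in $k$ and need a case split. If $k+1\ge8(T+2)$, then $\tau_k-\tau_t\ge\tau_k-\tau_{T+1}\gtrsim(k+1)^{1/3}$. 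Since $\sum_{t\le T}\alpha_t^{3/2}$ is bounded independently of $T$, these terms are $O\bigl((k+1)^{-1/6}\bigr)$.
\item If $k+1<8(T+2)$, bound the early terms by their value at $k=T+1$. The primary-phase computation controls that value by $K(T+2)^{-(1-a_1)/2}\le K\,8^{1/6}(T+2)^{\varepsilon_{a_1}}(k+1)^{-1/6}$. This is a constant multiple of $(k+1)^{-1/6}$, because $T$ depends only on $a_1$.
\end{itemize}
Finally, the averaged-noise estimate is not automatic for an arbitrary schedule in $(0,1]$. It rests on $v_k:=\sum_{t\le k}\alpha_t^2\prod_{r=t+1}^k(1-\alpha_r)^2\le\alpha_k$, which the paper proves by induction from $\alpha_k^{-1}-\alpha_{k-1}^{-1}\le1$. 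Because this schedule jumps \emph{up} at $k=T+1$, the condition must be checked at the transition. It holds because $a_1>\frac23$ gives $(T+2)^{2/3}-(T+1)^{a_1}\le(T+2)^{2/3}-(T+1)^{2/3}<1$. Without these estimates the secondary-phase bound, which is the real content of the proposition, is not established.
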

\begin{proof}
Since the tail exponent in \eqref{eq:two-phase-stepsize} is the critical value $2/3$, the a.s.\ convergence assertion remains the separate $a>\frac23$ assertion in Theorem~\ref{thm:centered-iid}. The role of the two-phase schedule is to interpolate between the near-critical transient behavior of an exponent $a_1>\frac23$ and the critical last-iterate residual behavior.

We first record the constants used below. Let
\begin{equation}
D:=\sqrt{\theta_d-\theta_1},\qquad M_2:=\sqrt{|\X|}D,
\end{equation}
and define
\begin{equation}
S_{3a_1/2}:=\sum_{t=1}^{\infty}(t+1)^{-3a_1/2},\quad \omega_{a_1}:=\frac{1-2^{-a_1}}{2},\quad \nu_{a_1}:=\frac{1-2^{-a_1}}{1-a_1},\quad \eta_{a_1}:=1-(3/4)^{1-a_1}.
\end{equation}
Also let
\begin{equation}
R_{a_1}:=\frac{(4/3)^{a_1}}{1-4^{-a_1}},
\end{equation}
\begin{equation}
K_{a_1}:=\frac{D}{\sqrt{\pi\omega_{a_1}}} +\frac{2M_2}{\sqrt{\pi}}\left(\frac{S_{3a_1/2}}{\sqrt{\nu_{a_1}\eta_{a_1}}} +\frac{2^{1+a_1/2}R_{a_1}}{\sqrt{1-a_1}}\right),
\end{equation}
and
\begin{equation}
B_T:=\max\left\{\frac{2}{\sqrt3},\,8^{1/6}(T+2)^{\varepsilon_{a_1}}\right\}.
\end{equation}
The final constant can be chosen as
\begin{equation}\label{eq:two-phase-constant}
C_{\mathrm{iid}}:=\max\left\{K_{a_1}+6M_2,\; K_{a_1}B_T+\frac{2M_2\sqrt6}{\sqrt{\pi}}+6M_2\right\}.
\end{equation}

The threshold in \eqref{eq:two-phase-threshold} is well defined. Indeed, for $x\ge2$, the function $x^{\varepsilon_{a_1}}/\log x$ decreases until $x=\exp(1/\varepsilon_{a_1})$ and then increases to infinity. Its value at $x=2$ is $\kappa_{a_1}$. Hence the set of integers satisfying \eqref{eq:two-phase-threshold} is a nonempty initial interval, and after its last element $T$ the reverse inequality holds.

We now derive the residual estimate. Work in the cumulative-coordinate embedding associated with $\ell^\Theta_{\mathrm C,\infty}$. The block-sup Cram\'er metric is the block-sup norm of Euclidean cumulative-coordinate vectors, and this block-sup norm is bounded by the ambient Euclidean norm. Under the i.i.d.\ assumptions, the noise
\begin{equation}
U_k:=H(p_{k-1},Y_k)-h_\rho(p_{k-1})
\end{equation}
is a martingale difference and satisfies $\|U_k\|_{\mathrm C,2}\le M_2$. Define the averaged noise process
\begin{equation}
\overline U_k=(1-\alpha_k)\overline U_{k-1}+\alpha_k U_k,\qquad \overline U_0=0.
\end{equation}
By orthogonality of martingale differences in the Euclidean cumulative-coordinate space and Jensen's inequality,
\begin{equation}\label{eq:avg-noise-two-phase}
\E\bigl[\|\overline U_k\|_{\mathrm C,\infty}\bigr]\le \E\bigl[\|\overline U_k\|_{\mathrm C,2}\bigr] \le M_2\sqrt{\sum_{t=1}^k\alpha_t^2\prod_{r=t+1}^k(1-\alpha_r)^2}.
\end{equation}
Let
\begin{equation}
v_k:=\sum_{t=1}^k\alpha_t^2\prod_{r=t+1}^k(1-\alpha_r)^2.
\end{equation}
We claim that $v_k\le \alpha_k$ for all $k\ge1$. Since
\begin{equation}
v_k=(1-\alpha_k)^2v_{k-1}+\alpha_k^2,
\end{equation}
it is enough by induction to verify
\begin{equation}
\alpha_k^{-1}-\alpha_{k-1}^{-1}\le1.
\end{equation}
In the first phase this follows from the mean-value theorem:
\begin{equation}
(k+1)^{a_1}-k^{a_1}\le a_1 k^{a_1-1}<1.
\end{equation}
In the second phase,
\begin{equation}
(k+1)^{2/3}-k^{2/3}\le \frac23 k^{-1/3}<1.
\end{equation}
At the transition point,
\begin{equation}
\alpha_{T+1}^{-1}-\alpha_T^{-1} =(T+2)^{2/3}-(T+1)^{a_1} \le (T+1)^{2/3}-(T+1)^{a_1}+\frac23(T+1)^{-1/3}<1,
\end{equation}
because $a_1>2/3$. Therefore \eqref{eq:avg-noise-two-phase} gives
\begin{equation}\label{eq:avg-noise-final-two-phase}
\E\bigl[\|\overline U_k\|_{\mathrm C,\infty}\bigr]\le M_2\sqrt{\alpha_k}.
\end{equation}

Following the inexact Krasnosel'ski\u{\i}--Mann estimate of \citet[Theorem~2.10]{bravo2024stochastic}, with \eqref{eq:avg-noise-final-two-phase}, yields
\begin{equation}\label{eq:two-phase-core-bound}
\E\Bigl[\ell^\Theta_{\mathrm C,\infty}\bigl(p_k,h_\rho(p_k)\bigr)\Bigr]\le \frac{D}{\sqrt{\pi\tau_k}}+\frac{2M_2}{\sqrt{\pi}}\sum_{t=1}^{k-1}\frac{\alpha_t^{3/2}}{\sqrt{\tau_k-\tau_t}}+6M_2\sqrt{\alpha_k},
\end{equation}
where $\tau_k:=\sum_{t=1}^k\alpha_t(1-\alpha_t)$. It remains to bound the middle sum in the two phases.

\textbf{Primary phase.}
Assume $1\le k\le T$, so $\alpha_t=(t+1)^{-a_1}$. We use
\begin{equation}\label{eq:tau-primary-two-phase}
\omega_{a_1}(k+1)^{1-a_1}\le \tau_k\le \frac{1}{1-a_1}(k+1)^{1-a_1}.
\end{equation}
The upper bound is immediate by integral comparison. For the lower bound, $1-\alpha_t\ge1-2^{-a_1}$ and the standard estimate $\tau_k\ge \tau_1 k^{1-a_1}$ imply the displayed inequality.

Split the middle sum at $\lfloor k/2\rfloor$. For $t\le\lfloor k/2\rfloor$,
\begin{equation}
\tau_k-\tau_t\ge \nu_{a_1}\eta_{a_1}(k+1)^{1-a_1},
\end{equation}
and therefore
\begin{equation}
\sum_{t=1}^{\lfloor k/2\rfloor}\frac{\alpha_t^{3/2}}{\sqrt{\tau_k-\tau_t}} \le \frac{S_{3a_1/2}}{\sqrt{\nu_{a_1}\eta_{a_1}}}(k+1)^{-(1-a_1)/2}.
\end{equation}
For $t>\lfloor k/2\rfloor$,
\begin{equation}
\alpha_t^{1/2}\le 2^{a_1/2}(k+1)^{-a_1/2}.
\end{equation}
Using the left-shifted integral comparison
\begin{equation}
\frac{1}{\sqrt{\tau_k-\tau_t}}\le \frac{1}{\alpha_{t+1}(1-\alpha_{t+1})}\int_{\tau_t}^{\tau_{t+1}}\frac{\mathrm d\tau}{\sqrt{\tau_k-\tau}},
\end{equation}
and the ratio bound
\begin{equation}
\frac{\alpha_t}{\alpha_{t+1}(1-\alpha_{t+1})}\le R_{a_1},
\end{equation}
we get
\begin{equation}
\sum_{t=\lfloor k/2\rfloor+1}^{k-1}\frac{\alpha_t^{3/2}}{\sqrt{\tau_k-\tau_t}} \le \frac{2^{1+a_1/2}R_{a_1}}{\sqrt{1-a_1}}(k+1)^{-(1-a_1)/2}.
\end{equation}
Combining these estimates with \eqref{eq:two-phase-core-bound} and \eqref{eq:tau-primary-two-phase} gives
\begin{equation}\label{eq:primary-final-two-phase}
\E\Bigl[\ell^\Theta_{\mathrm C,\infty}\bigl(p_k,h_\rho(p_k)\bigr)\Bigr]\le (K_{a_1}+6M_2)(k+1)^{-(1-a_1)/2}.
\end{equation}
Since
\begin{equation}
(k+1)^{-(1-a_1)/2}=(k+1)^{-1/6}(k+1)^{\varepsilon_{a_1}},
\end{equation}
and \eqref{eq:two-phase-threshold} makes the minimum equal to $(k+1)^{\varepsilon_{a_1}}$ in this phase, \eqref{eq:primary-final-two-phase} proves the claimed bound for $k\le T$.

\textbf{Secondary phase.}
Now let $k>T$, so the tail stepsizes satisfy $\alpha_t=(t+1)^{-2/3}$. For any $t'\ge T$,
\begin{equation}\label{eq:tau-secondary-two-phase}
\tau_k-\tau_{t'}=\sum_{t=t'+1}^k\alpha_t(1-\alpha_t) \ge \frac12\sum_{t=t'+1}^k(t+1)^{-2/3}\ge \frac32\bigl((k+1)^{1/3}-(t'+1)^{1/3}\bigr).
\end{equation}
Split the middle sum in \eqref{eq:two-phase-core-bound} at $T$.

For the tail part $t>T$, we have $\alpha_t^{3/2}=(t+1)^{-1}$. By \eqref{eq:tau-secondary-two-phase},
\begin{equation}
\sum_{t=T+1}^{k-1}\frac{(t+1)^{-1}}{\sqrt{\tau_k-\tau_t}} \le \int_T^k\frac{(x+1)^{-1}}{\sqrt{\frac32((k+1)^{1/3}-(x+1)^{1/3})}}\,\mathrm dx.
\end{equation}
With $u=(x+1)^{1/3}$ and $c=(k+1)^{1/3}$, this integral equals
\begin{equation}
\sqrt6\int_{(T+1)^{1/3}}^{(k+1)^{1/3}}\frac{\mathrm d u}{u\sqrt{(k+1)^{1/3}-u}}
\end{equation}
and is bounded by
\begin{equation}\label{eq:tail-log-two-phase}
\sqrt6\,(k+1)^{-1/6}\log(k+1).
\end{equation}
To see the last step, write $x=((k+1)/(T+1))^{1/3}>1$ in the logarithmic expression obtained after integration:
\begin{equation}
\frac{\sqrt{x}+\sqrt{x-1}}{\sqrt{x}-\sqrt{x-1}} =2x-1+2\sqrt{x^2-x} \le 4x-1\le 3x^3\le (T+1)x^3=k+1.
\end{equation}

It remains to control the finitely many primary-phase terms $t\le T$. Evaluating the first two terms in \eqref{eq:two-phase-core-bound} at $T+1$ and using the primary-phase estimate gives
\begin{equation}
\frac{D}{\sqrt{\pi\tau_{T+1}}}+\frac{2M_2}{\sqrt{\pi}}\sum_{t=1}^{T}\frac{\alpha_t^{3/2}}{\sqrt{\tau_{T+1}-\tau_t}} \le K_{a_1}(T+2)^{-(1-a_1)/2}.
\end{equation}
If $k+1\ge 8(T+2)$, then \eqref{eq:tau-secondary-two-phase} with $t'=T+1$ gives
\begin{equation}
\tau_k-\tau_{T+1}\ge \frac34(k+1)^{1/3},
\end{equation}
and hence these early terms are bounded by
\begin{equation}
K_{a_1}\frac{2}{\sqrt3}(k+1)^{-1/6}.
\end{equation}
If instead $k+1<8(T+2)$, then
\begin{equation}
K_{a_1}(T+2)^{-(1-a_1)/2}\le K_{a_1}8^{1/6}(T+2)^{\varepsilon_{a_1}}(k+1)^{-1/6}.
\end{equation}
Thus, in both cases, the early terms are bounded by
\begin{equation}\label{eq:early-secondary-two-phase}
K_{a_1}B_T(k+1)^{-1/6}.
\end{equation}
Combining \eqref{eq:tail-log-two-phase}, \eqref{eq:early-secondary-two-phase}, and the terminal term $6M_2\sqrt{\alpha_k}=6M_2(k+1)^{-1/3}\le6M_2(k+1)^{-1/6}$, we obtain
\begin{equation}
\E\Bigl[\ell^\Theta_{\mathrm C,\infty}\bigl(p_k,h_\rho(p_k)\bigr)\Bigr] \le \left(K_{a_1}B_T+\frac{2M_2\sqrt6}{\sqrt{\pi}}+6M_2\right)(k+1)^{-1/6}\log(k+1).
\end{equation}
For $k>T$, the definition of the last crossing $T$ gives
\begin{equation}
(k+1)^{\varepsilon_{a_1}}>\kappa_{a_1}\log(k+1),
\end{equation}
so the minimum in the proposition is $\kappa_{a_1}\log(k+1)$. Since $\kappa_{a_1}\ge1$, the choice \eqref{eq:two-phase-constant} proves the residual bound in the secondary phase as well. The projected-operator residual follows from Corollary~\ref{corol:residual-conversion}.
\end{proof}
\begin{remark}
As a concrete example, $a_1=3/4$ gives $\varepsilon_{a_1}=1/24$, and the two-phase residual bound becomes
\begin{equation}
\E\Bigl[\ell^\Theta_{\mathrm C,\infty}\bigl(p_k,\mathcal G(p_k)\bigr)\Bigr]\le \frac{C_{\mathrm{iid}}}{\rho_{\min}}(k+1)^{-1/6} \min\left\{\frac{2^{1/24}}{\log2}\log(k+1),(k+1)^{1/24}\right\}.
\end{equation}
\end{remark}

We are now in a position to state the proof of Theorem~\ref{thm:centered-iid}.
\begin{proof}[Proof of Theorem~\ref{thm:centered-iid}]
Under the theorem assumptions, the mean-field map of the recursion is $h_\rho$ by Proposition~\ref{prop:mean-field-fix}. By Lemma~\ref{lem:one-sample-nonexp} the sampled map is samplewise $1$-Lipschitz, and by Proposition~\ref{prop:mean-field-fix} and Corollary~\ref{corol:fixed-points}, the mean-field map is non-expansive with nonempty fixed-point set. Compactness of $\Delta^\X_d$ implies bounded iterates and bounded martingale-difference noise. The almost sure convergence result of \citet[Theorem~2.5 and Example~2.7]{bravo2024stochastic} therefore applies to $h_\rho$ for every $\frac{2}{3}< a \le 1$.

For the expected residual, \citet[Theorem~4.4]{bravo2024stochastic} gives the following last-iterate profile for $\frac{1}{2}\le a \le 1$: there exist constants $C_1(a), C_2, C_3(a), C_4 >0$, depending only on $a$, the support diameter $\mathrm{diam}_{\ell^\Theta_{\mathrm C, \infty}}(\Delta^\X_d) = \sqrt{\theta_d - \theta_1}$, and any uniform bound on $\ell^\Theta_{\mathrm C, \infty}(H(p, y), p)$. Since both $H(p, y)$ and $p$ lie in $\Delta^\X_d$, this sampled-update term is itself bounded by the same diameter, so no additional problem-dependent constant is needed in our setting. Thus
\begin{equation}
\E\bigl[\ell^\Theta_{\mathrm C, \infty}\bigl(p_k, h_\rho(p_k)\bigr)\bigr] \le \begin{cases}
C_1(a)(k+1)^{-(a-\frac{1}{2})}, &\frac{1}{2}\le a < \frac{2}{3},\\
C_2\frac{\log(k+1)}{(k+1)^{1/6}}, &a=\frac{2}{3},\\
C_3(a)(k+1)^{-\frac{1-a}{2}}, &\frac{2}{3}<a<1,\\
C_4(\log(k+1))^{-\frac{1}{2}}, &a=1.
\end{cases}
\end{equation}
The two-phase residual assertion stated in the theorem is Proposition~\ref{prop:centered-iid-two-phase}. The projected-operator residual bound then follows from Corollary~\ref{corol:residual-conversion}.
\end{proof}

\subsection{Proof of Theorem~\ref{thm:centered-markov}}
We first state a lemma and then a proposition.
\begin{lemma}[Markovian inexact-KM reduction]\label{lem:markov-inexact-km}
Consider an SKM recursion on a compact subset of a finite-dimensional normed space, driven by a finite-state irreducible and aperiodic Markov chain, with samplewise non-expansive update map $H$ and non-expansive stationary mean-field map $h$. Let the step sizes be either $\alpha_k=(k+1)^{-a}$ with $a>\frac45$, or the matched two-phase schedule in Proposition~\ref{prop:centered-markov-two-phase}. Let
\begin{equation}
\tau_k:=\sum_{t=1}^k \alpha_t(1-\alpha_t),\qquad \sigma(0):=1,\qquad \sigma(u):=\min\{1,(\pi u)^{-1/2}\}\quad (u>0).
\end{equation}
There are constants $A_0,A_1,A_2>0$ such that the iterates satisfy
\begin{equation}\label{eq:markov-inexact-km}
\E\bigl[\|x_k-h(x_k)\|\bigr] \le A_0\sigma(\tau_k)\left(1+\sum_{t=2}^k \alpha_t\omega_{t-1}\right) +A_1\sum_{t=2}^k \alpha_t\sigma(\tau_k-\tau_t)\omega_{t-1} +A_1\omega_k,
\end{equation}
where the deterministic envelope $\omega_k$ obeys
\begin{equation}\label{eq:markov-poisson-envelope}
\omega_k\le A_2\tau_k\sqrt{\alpha_{k+1}}.
\end{equation}
\end{lemma}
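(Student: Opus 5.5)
The plan is to use the Poisson-equation technique for Markovian noise, which turns the recursion into an inexact KM iteration with a martingale part and a telescoping remainder. Then I would bound the resulting error sequence using the KM error-propagation estimate of \citet[Theorem~2.10]{bravo2024stochastic}, already used in \eqref{eq:two-phase-core-bound}.

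\textbf{Step 1: noise decomposition.} Write $x_{k+1}=x_k+\alpha_k(h(x_k)-x_k+U_{k+1})$ with $U_{k+1}:=H(x_k,Y_k)-h(x_k)$. Since $Y$ takes finitely many values and is irreducible and aperiodic, the Poisson equation $\hat H(x,y)-\sum_{y'}P_Y(y,y')\hat H(x,y')=H(x,y)-h(x)$ has a solution via the fundamental matrix. This solution satisfies $\|\hat H(x,y)\|\le C_P D$. Because $H(\cdot,y)$ is samplewise $1$-Lipschitz, it also satisfies $\|\hat H(x,y)-\hat H(x',y)\|\le C_P\|x-x'\|$. Then split
\[
U_{k+1}=M_{k+1}+\bigl(P_Y\hat H(x_k,Y_{k-1})-P_Y\hat H(x_{k+1},Y_k)\bigr)+\bigl(P_Y\hat H(x_{k+1},Y_k)-P_Y\hat H(x_k,Y_k)\bigr),
\]
with $M_{k+1}:=\hat H(x_k,Y_k)-P_Y\hat H(x_k,Y_{k-1})$ a bounded martingale difference. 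The telescoping term is handled by summation by parts after multiplying by $\alpha_k$. The last term is $O(C_P\|x_{k+1}-x_k\|)=O(\alpha_k D)$.

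\textbf{Step 2: error envelope.} I would write the recursion as inexact KM, $x_k=x_{k-1}+\alpha_k(h(x_{k-1})-x_{k-1}+e_k)$. The cumulative averaged error $\overline e_k=(1-\alpha_k)\overline e_{k-1}+\alpha_k e_k$ then has three pieces, which I would bound in expectation one at a time.
\begin{itemize}
\item The martingale part is bounded by $M\sqrt{v_k}\le M\sqrt{\alpha_k}$, using the induction $v_k\le\alpha_k$ from the i.i.d.\ proof.
\item The telescoping part is bounded by boundary terms plus $\sum_t|\alpha_t-\alpha_{t+1}|$-type products of order $\alpha_k$.
\item The drift part is bounded by $\sum_t \alpha_t^2\prod(1-\alpha_r)$.
\end{itemize}
This defines $\omega_k$. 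The Bravo--Cominetti error-propagation bound then gives \eqref{eq:markov-inexact-km} with kernel $\sigma(\tau_k-\tau_t)$, where $\sigma$ is truncated at $1$ because residuals are bounded by the diameter. The terms involving $\sigma(\tau_k)$ account for the initial-distance contribution plus accumulated errors.

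\textbf{Step 3: verifying \eqref{eq:markov-poisson-envelope}.} This is the main obstacle. The crude envelope is the sum of the martingale $O(\sqrt{\alpha_k})$ piece and the non-martingale pieces. The non-martingale pieces are not discounted well in norm, so I would bound them crudely by $\sum_{t\le k}\alpha_t\cdot\sqrt{\alpha_{k+1}}$-type sums, giving $\omega_k\lesssim\tau_k\sqrt{\alpha_{k+1}}$. To do this, I would compare $\alpha_t^2\le\alpha_t\sqrt{\alpha_t}$ and use monotonicity of $\alpha$ together with $\tau_k\ge c\sum\alpha_t$. For the two-phase schedule, the delicate point is the jump at $T$. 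I would check, as in the i.i.d.\ case, that $\alpha_{T+1}/\alpha_T$ and $\alpha_{T+1}^{-1}-\alpha_T^{-1}$ remain bounded, so the constants $A_0,A_1,A_2$ stay uniform. Finally, I would verify that the requirement $a>4/5$ is exactly what makes the right-hand side of \eqref{eq:markov-inexact-km} vanish; that check belongs to the subsequent rate computation rather than to this lemma.
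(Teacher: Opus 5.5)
Your argument follows the same route as the paper. The paper's proof simply invokes the reduction inside the proof of \citet[Theorem~3.1]{blaser2026asymptotic}: split the Markovian noise via the Poisson equation into a martingale difference plus telescoping and Lipschitz-drift terms, bound the resulting correction process by a deterministic envelope, and apply the inexact Krasnosel'ski\u{\i}--Mann estimate to $z_k=x_k-U_k$. Your Steps~1--2, with your averaged error $\overline e_k$ in the role of $U_k$, carry this out in more detail than the paper does.

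The one thing to fix is Step~3, whose reasoning is wrong as written.
\begin{itemize}
\item For $t\le k$, monotonicity gives $\sqrt{\alpha_t}\ge\sqrt{\alpha_{k+1}}$, which is the wrong direction for your comparison.
\item Undiscounted sums such as $\sum_{t\le k}\alpha_t^2$ do not tend to zero, whereas $\tau_k\sqrt{\alpha_{k+1}}\to0$. So no crude undiscounted bound can yield the envelope.
\end{itemize}
Fortunately, Step~3 is unnecessary. Your discounted estimates in Step~2 already bound the telescoping and drift parts by $O\bigl(\alpha_k+\sum_{t\le k}\alpha_t^2\prod_{r=t+1}^{k}(1-\alpha_r)\bigr)$. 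This is $O(\alpha_k)$, or $O(\alpha_k\log k)$ when $a=1$, so $\E\|\overline e_k\|\le C\sqrt{\alpha_k}$. Since $\tau_k\ge\tau_1>0$ and $\alpha_k/\alpha_{k+1}$ is bounded for both schedules, the choice $\omega_k:=C\sqrt{\alpha_k}$ satisfies $\omega_k\le A_2\tau_k\sqrt{\alpha_{k+1}}$.

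This envelope is in fact sharper than the one the paper imports. The extra factor $\tau_k$ is slack once the Poisson solution is uniformly bounded on a compact set. Also, the Markovian two-phase schedule is continuous at $T$ by the choice of $\gamma_T$, so there is no jump to check.
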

\begin{proof}
This is the estimate obtained in the proof of \citet[Theorem~3.1]{blaser2026asymptotic} before specializing the last step to a single polynomial step size. The only Markovian ingredient is the Poisson-equation decomposition of
\begin{equation}
H(x_k,Y_{k+1})-h(x_k),
\end{equation}
which writes the Markovian noise as a martingale difference plus three telescoping error terms. Since the chain is finite and the update map is samplewise non-expansive on a compact set, the Poisson solution is uniformly bounded and Lipschitz on the compact state space. The auxiliary correction process $U_k$ in that proof therefore satisfies the deterministic envelope \eqref{eq:markov-poisson-envelope}. Applying the inexact Krasnosel'ski\u{\i}--Mann residual estimate to $z_k=x_k-U_k$ gives \eqref{eq:markov-inexact-km}; the three displayed terms correspond respectively to the initial-distance/inexactness constant, the convolution with the correction process, and the final correction $U_k$.

The polynomial form is used in \citet{blaser2026asymptotic} only when converting \eqref{eq:markov-inexact-km} into an explicit rate. The preceding Poisson-equation reduction uses the displayed regularity properties of the step sizes, namely monotonicity and bounded adjacent ratios, to control the telescoping error terms. These properties hold for the two-phase schedule because the transition between the two phases is continuous and each phase is polynomial. The critical-tail summations needed to turn \eqref{eq:markov-inexact-km} into a rate are verified explicitly in Proposition~\ref{prop:centered-markov-two-phase} below.
\end{proof}

\begin{proposition}[Two-phase Markovian residual refinement]\label{prop:centered-markov-two-phase}
Under the assumptions of Theorem~\ref{thm:centered-markov}, fix $a_1 \in \bigl(\frac45,1\bigr)$ and set
\begin{equation}
\varepsilon_{a_1}:=\frac{5a_1-4}{10},\qquad \kappa_{a_1}:=\frac{2^{\varepsilon_{a_1}}}{\log 2}.
\end{equation}
Let $T$ be the largest integer such that
\begin{equation}\label{eq:markov-two-phase-threshold}
(k+1)^{\varepsilon_{a_1}}\le \kappa_{a_1}\log(k+1),\qquad 1\le k\le T.
\end{equation}
Set $\gamma_T:=(T+1)^{-a_1}(T+2)^{4/5}$, and use the two-phase step size
\begin{equation}\label{eq:markov-two-phase-stepsize}
\alpha_k =
\begin{cases}
(k+1)^{-a_1}, & 1\le k\le T,\\
\gamma_T(k+1)^{-4/5}, & k>T.
\end{cases}
\end{equation}
Then there exists $C_{\mathrm{mk}} >0$ such that, for all $k\ge1$,
\begin{equation}
\E\Bigl[\ell^\Theta_{\mathrm C,\infty}\bigl(p_k,h_\mu(p_k)\bigr)\Bigr] \le C_{\mathrm{mk}}(k+1)^{-1/10} \min\Bigl\{\kappa_{a_1}\log(k+1),(k+1)^{\varepsilon_{a_1}}\Bigr\}.
\end{equation}
Consequently,
\begin{equation}
\E\Bigl[\ell^\Theta_{\mathrm C,\infty}\bigl(p_k,\mathcal G(p_k)\bigr)\Bigr] \le \frac{C_{\mathrm{mk}}}{\mu_{\min}}(k+1)^{-1/10} \min\Bigl\{\kappa_{a_1}\log(k+1),(k+1)^{\varepsilon_{a_1}}\Bigr\}.
\end{equation}
\end{proposition}
\begin{proof}
The threshold is well defined by the same argument as in Proposition~\ref{prop:centered-iid-two-phase}: the function $x^{\varepsilon_{a_1}}/\log x$ decreases from its value at $x=2$ until $x=\exp(1/\varepsilon_{a_1})$ and then increases to infinity. Thus the set of integers satisfying \eqref{eq:markov-two-phase-threshold} is a nonempty initial interval. The factor $\gamma_T$ makes the schedule continuous at the transition, since $\alpha_{T+1}=\alpha_T$, and hence non-increasing.

For $k\le T$, the recursion has only used the polynomial schedule $\alpha_t=(t+1)^{-a_1}$. The polynomial Markovian residual bound of \citet[Theorem~3.1]{blaser2026asymptotic} therefore gives
\begin{equation}
\E\Bigl[\ell^\Theta_{\mathrm C,\infty}\bigl(p_k,h_\mu(p_k)\bigr)\Bigr] \le C(k+1)^{-(1-a_1)/2}.
\end{equation}
Since
\begin{equation}
(k+1)^{-(1-a_1)/2} =(k+1)^{-1/10}(k+1)^{\varepsilon_{a_1}},
\end{equation}
and the minimum in the proposition is $(k+1)^{\varepsilon_{a_1}}$ in this phase, the desired bound follows for $k\le T$.

It remains to consider $k>T$. We apply Lemma~\ref{lem:markov-inexact-km} in the cumulative-coordinate norm that represents the block-sup Cram\'er metric. Since $T$ is fixed once $a_1$ is fixed, the two-phase schedule gives constants $0<c_\tau<C_\tau<\infty$ such that, for all $k>T$,
\begin{equation}\label{eq:markov-tau-tail}
c_\tau(k+1)^{1/5}\le \tau_k\le C_\tau(k+1)^{1/5}.
\end{equation}
Indeed, the tail contribution is a constant multiple of $\sum_{t>T}(t+1)^{-4/5}$, and the finitely many transition values are absorbed into $c_\tau$ and $C_\tau$. Combining \eqref{eq:markov-tau-tail} with \eqref{eq:markov-poisson-envelope} yields
\begin{equation}\label{eq:markov-omega-tail}
\omega_k\le C(k+1)^{-1/5}.
\end{equation}
Consequently,
\begin{equation}\label{eq:markov-first-error-sum}
\sum_{t=2}^k \alpha_t\omega_{t-1} \le C\left(1+\sum_{t=T+1}^k (t+1)^{-1}\right) \le C\log(k+1).
\end{equation}
The first term in \eqref{eq:markov-inexact-km} is therefore bounded by $C(k+1)^{-1/10}\log(k+1)$, and the final correction term \eqref{eq:markov-omega-tail} is smaller than this bound.

It remains to bound the convolution term in \eqref{eq:markov-inexact-km}. The finitely many terms with $t\le T$ contribute at most $C(k+1)^{-1/10}$ after increasing $C$, again using \eqref{eq:markov-tau-tail}; the finitely many near-transition cases are absorbed into the same constant. For the tail terms $T<t<k$, \eqref{eq:markov-omega-tail} gives $\alpha_t\omega_{t-1}\le C(t+1)^{-1}$, and the tail lower bound gives
\begin{equation}
\tau_k-\tau_t\ge c\bigl((k+1)^{1/5}-(t+1)^{1/5}\bigr).
\end{equation}
Thus
\begin{equation}
\sum_{t=T+1}^{k-1}\alpha_t\sigma(\tau_k-\tau_t)\omega_{t-1} \le C\sum_{t=T+1}^{k-1} \frac{(t+1)^{-1}}{\sqrt{(k+1)^{1/5}-(t+1)^{1/5}}}.
\end{equation}
By the same integral comparison used in the i.i.d.\ proof, with the change of variables $u=(x+1)^{1/5}$, the last display is at most
\begin{equation}
C\int_{(T+1)^{1/5}}^{(k+1)^{1/5}} \frac{\mathrm d u}{u\sqrt{(k+1)^{1/5}-u}} \le C(k+1)^{-1/10}\log(k+1).
\end{equation}
The omitted $t=k$ endpoint is bounded by $C(k+1)^{-1}$ and is therefore harmless. Combining these bounds in \eqref{eq:markov-inexact-km} gives
\begin{equation}
\E\Bigl[\ell^\Theta_{\mathrm C,\infty}\bigl(p_k,h_\mu(p_k)\bigr)\Bigr] \le C(k+1)^{-1/10}\log(k+1),\qquad k>T.
\end{equation}
For $k>T$, the definition of the largest crossing $T$ implies
\begin{equation}
(k+1)^{\varepsilon_{a_1}}> \kappa_{a_1}\log(k+1),
\end{equation}
so the minimum in the proposition is $\kappa_{a_1}\log(k+1)$. Since $\kappa_{a_1}\ge1$, enlarging the constant proves the claimed residual bound in the secondary phase. The projected-operator residual bound follows from Corollary~\ref{corol:residual-conversion}.
\end{proof}

We are now in a position to state the proof of Theorem~\ref{thm:centered-markov}.
\begin{proof}[Proof of Theorem~\ref{thm:centered-markov}] 
Because $\X$ and $\A$ are finite and the reward function $R:\X\times\A\to[0,1]$ is deterministic, the centered reward $B_k$ is finite even after conditioning only on $(S_k,S_{k+1})$. Hence the feasible centered sample set
\begin{equation}
\{(i, b,j) \in \X \times \R \times \X: \mathbb P(S_k =i, B_k = b, S_{k+1}=j) >0 \}
\end{equation}
is finite. The process $(Y_k)_{k\ge0}$ is Markov because the conditional law of $(B_{k+1}, S_{k+2})$ depends only on $S_{k+1}$, which is the last coordinate of $Y_k = (S_k, B_k, S_{k+1})$. It is irreducible and aperiodic on its feasible support because from any feasible triple $(i, b,j)$ to any other feasible triple $(i', b', j')$, one may first follow the irreducible and aperiodic state chain from $j$ to $i'$ and then realize the feasible transition producing $(b', j')$. By Lemma~\ref{lem:one-sample-nonexp}, the sampled map is samplewise non-expansive, and by Proposition~\ref{prop:mean-field-fix} and Corollary~\ref{corol:fixed-points}, the averaged operator is $h_\mu$, which is non-expansive with non-empty fixed-point set. The hypotheses of \citet{blaser2026asymptotic} are therefore satisfied, giving the fixed-exponent a.s.\ convergence assertion and the residual bound $C/\sqrt{\tau_k}$ with $\tau_k=\sum_{t=0}^{k-1}\alpha_t(1-\alpha_t)$. The two-phase residual assertion is Proposition~\ref{prop:centered-markov-two-phase}. The projected residual bounds once again follow from Corollary~\ref{corol:residual-conversion}.
\end{proof}

\subsection{Proof of Proposition~\ref{prop:counterexample}}
\begin{proof}[Proof of Proposition~\ref{prop:counterexample}]
Consider a one-state deterministic MDP. The transition matrix is $(1)$, the reward is almost surely $r$, and therefore $\bar r^\pi = r$. For any categorical state $p \in \Delta_d$, the centered update uses the shift $r - \bar r^\pi =0$, so $\mathcal G(p) = L_0 p = p$. Hence, $\mathcal G = \mathrm{Id}$. Now replace the centered reward by the uncentered reward in the one-state sampled update. The resulting update sends $p$ to $L_r p$. If the support has at least two atoms and $r \ne 0$, then $L_r p \ne p$ in general. For example if $p$ is a point mass at an interior atom, translating by $r$ and projecting back to the same support changes the cumulative coordinates. Therefore, the uncentered update is not an unbiased sample of $\mathcal G$, and in general it does not define an SA of the projected quotient operator.
\end{proof}

\newpage
\section{Proofs of Section~\ref{sec:uncentered}}\label{app:C}
For convenience of notation, let us write
\begin{equation}
\widehat H\bigl((p, g), (s, r, s') \bigr) := \bigl(H_g(p, (s, r, s')), r \bigr)
\end{equation}
and
\begin{equation}
\hat h_\mu(p, g) := \bigl(h^{(g)}_\mu(p), \bar r^\pi \bigr).
\end{equation}
This notation is more intuitive in making the gain estimate $g$ explicitly an argument of the recursion.
\begin{lemma}\label{lem:appC-1}
For every $b, b' \in \R$ and all $p, q \in \Delta_d$,
\begin{equation}
\ell^\Theta_\mathrm C(L_b p, L_{b'}q) \le \ell^\Theta_\mathrm C(p, q) + \Delta^{-1/2}\lvert b - b'\rvert.
\end{equation}
\end{lemma}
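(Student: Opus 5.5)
The plan is to split the comparison with the triangle inequality through the intermediate vector $L_{b'}p$:
\begin{equation}
\ell^\Theta_\mathrm C(L_b p, L_{b'}q) \le \ell^\Theta_\mathrm C(L_{b'}p, L_{b'}q) + \ell^\Theta_\mathrm C(L_b p, L_{b'}p).
\end{equation}
The first term is at most $\ell^\Theta_\mathrm C(p,q)$ by the same argument as in Lemma~\ref{lem:one-sample-nonexp}. Concretely, Proposition~\ref{prop:cramer} turns it into a Cram\'er distance between projected laws. Lemma~\ref{lem:applemma2} lets us remove the projection. Translation invariance of $\ell_\mathrm C$ then removes the common shift $\tau_{b'}$. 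So the whole content of the lemma is a shift-Lipschitz estimate for a single $p$: $\ell^\Theta_\mathrm C(L_b p, L_{b'}p) \le \Delta^{-1/2}|b-b'|$.

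For this second term, I would first use the linearity of $\Pi^\Theta_\mathrm C$ on mixtures, already used in Proposition~\ref{prop:g-block-form}. This gives $L_b p = \sum_k p_k L_b e_k$, where $e_k$ is the $k$th unit vector. The cumulative vector $F_{L_b p}$ is therefore the same mixture of the $F_{L_b e_k}$. Convexity of the squared Euclidean norm, or Jensen as in Lemma~\ref{lem:applemma1}, then reduces the claim to point masses:
\begin{equation}
\ell^\Theta_\mathrm C(L_b e_k, L_{b'} e_k) \le \Delta^{-1/2}|b-b'|\quad\text{for each } k.
\end{equation}
The exact form of this reduction is
\begin{equation}
\ell^\Theta_\mathrm C(L_bp,L_{b'}p)^2 \le \sum_k p_k\, \ell^\Theta_\mathrm C(L_be_k,L_{b'}e_k)^2.
\end{equation}

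Now $L_b e_k$ is the projection of $\delta_{x}$ with $x=\theta_k+b$. Its cumulative function at atom $\theta_m$ is a clipped piecewise-linear function of $x$: it equals $\min\{1,\max\{0,(\theta_{m+1}-x)/\Delta\}\}$. This holds because the mass placed on atoms up to $\theta_m$ is $1$ if $x\le\theta_m$, is $(\theta_{m+1}-x)/\Delta$ if $\theta_m<x<\theta_{m+1}$, and is $0$ beyond that; edge clipping is consistent with this formula. Each coordinate $F_m$ is thus $\Delta^{-1}$-Lipschitz in $x$.

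More importantly, as $x$ moves from $\theta_k+b$ to $\theta_k+b'$, the ramps are supported on disjoint intervals $[\theta_m,\theta_{m+1}]$. Hence
\begin{equation}
\sum_m |F_m(x)-F_m(x')| \le |x-x'|/\Delta,
\end{equation}
since each cumulative coordinate changes only while $x$ crosses its own cell. Combining this with $|F_m(x)-F_m(x')|\le \min\{1,|x-x'|/\Delta\}$ gives
\begin{equation}
\Delta\sum_m (F_m(x)-F_m(x'))^2 \le \Delta\cdot \max_m|F_m(x)-F_m(x')|\cdot \sum_m|F_m(x)-F_m(x')|.
\end{equation}
The right-hand side is at most $\min\{1,|x-x'|/\Delta\}\,|x-x'|$. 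When $|x-x'|\le\Delta$ this is at most $|x-x'|^2/\Delta$, which gives exactly the claimed bound $\Delta^{-1/2}|b-b'|$.

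The main obstacle is the regime $|b-b'|>\Delta$. There the estimate above only yields $\sqrt{|b-b'|}$. I would handle this by subdividing the segment from $b$ to $b'$ into pieces of length at most $\Delta$ and applying the triangle inequality for $\ell^\Theta_\mathrm C$. Each piece contributes at most $\Delta^{-1/2}$ times its length, and these sum to $\Delta^{-1/2}|b-b'|$. I expect the only care needed here is checking the clipping cases at the grid ends, where the coordinates saturate and only help.
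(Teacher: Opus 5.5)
Your proof is correct and follows essentially the paper's route. It uses the same triangle split through $L_{b'}p$, the same non-expansiveness of $L_{b'}$, mixture linearity to reduce to unit vectors $e_k$, and the explicit piecewise-linear cumulative coordinates of $L_t e_k$. The paper finishes by integrating the derivative bound $\bigl\lVert \frac{\mathrm d}{\mathrm dt}v_m(t)\bigr\rVert_2\le\Delta^{-1}$, while you finish with a max-times-sum estimate.

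The obstacle you flag for $|b-b'|>\Delta$ does not exist. Since $\min\{1,u/\Delta\}\,u\le u^2/\Delta$ for every $u\ge 0$, your single-step bound already gives $\Delta^{-1/2}|b-b'|$ in every regime. The subdivision step is valid but unnecessary.
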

\begin{proof}
By the triangle inequality and the non-expansiveness of $L_{b'}$,
\begin{equation}
\ell^\Theta_{\mathrm C}(L_b p, L_{b'}q) \le \ell^\Theta_\mathrm C(L_b p, L_{b'}p) + \ell^\Theta_\mathrm C(L_{b'} p, L_{b'} q) \le \ell^\Theta_\mathrm C(L_b p, L_{b'}p) +\ell^\Theta_\mathrm C(p, q).
\end{equation}
It therefore suffices to bound $\ell^\Theta_\mathrm C(L_b p, L_{b'}p)$. Write $p = \sum_{m=1}^d p_m e_m$, where $e_m$ is the $m$th canonical basis vector. Since scalar categorical projection is linear on mixtures,
\begin{equation}
L_t p = \sum_{m=1}^d p_m L_t e_m\qquad\text{for every } t \in \R.
\end{equation}
Applying triangle inequality to the vector of cumulative masses yields
\begin{equation}\label{eq:sub-into-this}
\ell^\Theta_\mathrm C(L_b p, L_{b'}p) \le \sum_{m=1}^d p_m\ell^\Theta_\mathrm C(L_b e_m, L_{b'}e_m).
\end{equation}
Fix $m$ and define
\begin{equation}
v_m(t) := \bigl(F_{L_t e_m}(\theta_k)\bigr)_{k=1}^{d-1} \in \R^{d-1}.
\end{equation}
Since $\ell^\Theta_\mathrm C(u,v) = \sqrt{\Delta}\bigl\lVert (F_u(\theta_k) - F_v(\theta_k))_{k=1}^{d-1}\bigr\rVert_2$, it is enough to control $t \mapsto v_m(t)$. Let $x = \theta_{m+t}$ be the translated atom location. If $x \le \theta_1$, then $L_t e_m = e_1$ and $v_m(t) = \mathbf 1$. If $x \ ge \theta_d$, then $L_t e_m = e_d$ and $v_m(t) = 0$. If instead $x \in [\theta_k, \theta_{k+1}]$ for some $k \in \{1, \dots, d-1\}$, scalar categorical projection places mass
\begin{equation}
\lambda_k(x) := \frac{\theta_{k+1} -x}{\Delta}
\end{equation}
on $\theta_k$ and mass $1 - \lambda_k(x)$ on $\theta_{k+1}$. Hence
\begin{equation}
v_m(t) = \bigl(\underbrace{0,\dots,0}_{k-1 \text{ entries}},\lambda_k(x),\underbrace{1, \dots, 1}_{d-1-k\text{ entries}}\bigr),
\end{equation}
so on this interval $v_m$ is affine and
\begin{equation}
\frac{\mathrm d}{\mathrm dt} v_m(t) = -\frac{1}{\Delta}e_k.
\end{equation}
Therefore $\bigl\lVert\frac{\mathrm d}{\mathrm dt} v_m(t) \bigr\rVert_2 \le \Delta^{-1}$ wherever the derivative exists. Integrating this bound and multiplying by $\sqrt{\Delta}$ from the definition of $\ell^\Theta_\mathrm C$ gives
\begin{equation}
\ell^\Theta_\mathrm C(L_b e_m, L_{b'}e_m) \le \Delta^{-1/2}\lvert b- b'\rvert.
\end{equation}
Substituting this into \eqref{eq:sub-into-this} yields
\begin{equation}
\ell^\Theta_\mathrm C(L_b p, L_{b'}p) \le \Delta^{-1/2}\lvert b- b'\rvert
\end{equation}
\end{proof}

\begin{lemma}\label{lem:appC-2}
For all $g, g' \in \R$ and all $p, q \in \Delta^\X_d$,
\begin{equation}
\ell^\Theta_{\mathrm C, \infty}\bigl(\mathcal G_g(p), \mathcal G_{g'}(q) \bigr) \le \ell^\Theta_{\mathrm C, \infty}\bigl(p, q\bigr) + \Delta^{-1/2}\lvert g - g'\rvert.
\end{equation}
\end{lemma}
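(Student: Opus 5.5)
The plan is to work blockwise, using the coefficient form of $\mathcal G_g$ from the proof of Proposition~\ref{prop:g-block-form}, and then apply Lemma~\ref{lem:appC-1} pointwise in the reward. For each state $i$, that proof gives
\begin{equation}
\mathcal G_g(p)_i = \sum_{j\in\X} P_{ij}\,\E\bigl[L_{R_{ij}-g}\,p_j\bigr],\qquad \mathcal G_{g'}(q)_i = \sum_{j\in\X} P_{ij}\,\E\bigl[L_{R_{ij}-g'}\,q_j\bigr].
\end{equation}
The key point is that both expectations can be taken against the same random variable $R_{ij}$. This couples the two backups sample by sample, and the only difference between them is then the deterministic shift $g-g'$.

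Next I will use that $\ell^\Theta_{\mathrm C}(u,v)=\sqrt{\Delta}\,\lVert F_u-F_v\rVert_2$. Here the vector of cumulative masses at $\theta_1,\dots,\theta_{d-1}$ is linear in the coefficient vector. So the cumulative-coordinate difference for block $i$ is the $P_{ij}$-weighted expectation of the differences $F_{L_{R_{ij}-g}p_j}-F_{L_{R_{ij}-g'}q_j}$. By the triangle inequality (convexity of the Euclidean norm, as in the proof of Proposition~\ref{prop:mean-field-fix}),
\begin{equation}
\ell^\Theta_{\mathrm C}\bigl(\mathcal G_g(p)_i,\mathcal G_{g'}(q)_i\bigr)\le \sum_{j\in\X}P_{ij}\,\E\Bigl[\ell^\Theta_{\mathrm C}\bigl(L_{R_{ij}-g}p_j,\,L_{R_{ij}-g'}q_j\bigr)\Bigr].
\end{equation}
Applying Lemma~\ref{lem:appC-1} inside the expectation with $b=R_{ij}-g$ and $b'=R_{ij}-g'$ bounds each integrand by $\ell^\Theta_{\mathrm C}(p_j,q_j)+\Delta^{-1/2}\lvert g-g'\rvert$. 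The randomness cancels in $\lvert b-b'\rvert$, so this bound is deterministic.

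Finally, I bound $\ell^\Theta_{\mathrm C}(p_j,q_j)\le\ell^\Theta_{\mathrm C,\infty}(p,q)$ and use $\sum_j P_{ij}=1$. This gives the stated bound for each block $i$, and taking the maximum over $i$ concludes.

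The only delicate step is justifying that the coefficient-level expectation may be pulled outside the metric. This relies on linearity of the cumulative embedding and of $\Pi^\Theta_{\mathrm C}$ on mixtures (already used in Proposition~\ref{prop:g-block-form}), together with Jensen/convexity of the norm. Measurability is trivial because the reward laws are finitely supported.
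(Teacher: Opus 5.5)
Your proposal is correct and follows the paper's proof essentially step for step. Like the paper, you use the coefficient form of $\mathcal G_g$ from Proposition~\ref{prop:g-block-form} with the same $R_{ij}$ in both backups, then apply the triangle inequality on the linear cumulative-coordinate embedding and Lemma~\ref{lem:appC-1} inside the expectation, and finish with $\ell^\Theta_{\mathrm C}(p_j,q_j)\le \ell^\Theta_{\mathrm C,\infty}(p,q)$, $\sum_j P_{ij}=1$, and a maximum over $i$.
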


\begin{proof}
Fix $i \in \X$. By Proposition~\ref{prop:g-block-form},
\begin{equation}
\mathcal G_g(p)_i = \sum_{j\in \X} P_{ij} \E[L_{R_{ij} -g}p_j],\qquad \mathcal G_{g'}(q)_i = \sum_{j\in \X} P_{ij} \E[L_{R_{ij} -g'}q_j].
\end{equation}
For each $j$, let
\begin{equation}
u_j := \bigl( F_{L_{R_{ij} -g}p_j}(\theta_k) - F_{L_{R_{ij} -g'}q_j}(\theta_k)\bigr)^{d-1}_{k=1} \in \R^{d-1}.
\end{equation}
Since cumulative masses are linear in the coefficient vector, using triangle inequality,
\begin{equation}
\begin{aligned}
\ell^\Theta_\mathrm C\bigl( \mathcal G_g(p)_i, \mathcal G_{g'}(q)_i\bigr) &\le \sqrt{\Delta} \bigl\lVert\sum_{j \in \X}P_{ij}\E[u_j]\bigr\rVert_2 \\
&\le \sum_{j \in \X} P_{ij} \E \bigl[\ell^\Theta_\mathrm C(L_{R_{ij} - g}p_j, L_{R_{ij}-g'}q_j) \bigr] \\
&\le \sum_{j\in \X} P_{ij} \E\bigl[\ell^\Theta_\mathrm C(p_j, q_j) + \Delta^{-1/2}\lvert g- g' \rvert \bigr] \\
&\le \ell^\Theta_{\mathrm C, \infty}(p, q) + \Delta^{-1/2}\lvert g - g'\rvert.
\end{aligned}
\end{equation}
Taking the maximum over $i \in \X$ proves the result.
\end{proof}

\subsection{Proof of Proposition~\ref{prop:uncentered-nonexpansion}}
\begin{proof}[Proof of Proposition~\ref{prop:uncentered-nonexpansion}]
Let $y = (s, r, s')$. The second coordinate of $\widehat H$ is the common reward sample $r$, so $\lambda\lvert r - r \rvert = 0$. For the first coordinate, only the block $s$ is updated, and Lemma~\ref{lem:appC-1} gives
\begin{equation}
\ell^\Theta_\mathrm C\bigl(H_g(p, y)_s, H_{g'}(q, y)_s \bigr) = \ell^\Theta_\mathrm C(L_{r -g}p_{s'}, L_{r-g'}q_{s'}) \le \ell^\Theta_\mathrm C(p_{s'}, q_{s'}) + \Delta^{-1/2}\lvert g -g' \rvert.
\end{equation}
Taking the maximum over states,
\begin{equation}
\ell^\Theta_{\mathrm C, \infty}\bigl(H_g(p, y), H_{g'}(q,y) \bigr) \le \ell^\Theta_{\mathrm C, \infty}(p, q) + \Delta^{-1/2}\lvert g- g'\rvert.
\end{equation}
Since $\lambda \ge \Delta^{-1/2}$ by definition,
\begin{equation}
d_\lambda\bigl(\widehat H\bigl((p, g), y\bigr), \widehat H\bigl((q, g'), y\bigr)\bigr) \le \ell^\Theta_{\mathrm C, \infty}(p, q) + \Delta^{-1/2}\lvert g - g'\rvert \le d_\lambda \bigl((p,g),(q,g') \bigr).
\end{equation}
\end{proof}

\subsection{Proof of Proposition~\ref{prop:mean-field-online}}
\begin{proof}[Proof of Proposition~\ref{prop:mean-field-online}]
The expectation identity follows from Proposition~\ref{prop:g-block-form} applied under the stationary one-step law:
\begin{equation}
\E\bigl[H_g(p, Y) \bigr] = h^{(g)}_\mu(p),\qquad \E[R] = \bar r^\pi.
\end{equation}
For non-expansiveness, use the same cumulative-distribution argument as in the proof of Proposition~\ref{prop:mean-field-fix}. Since
\begin{equation}
h^{(g)}_\mu(p)_i = (1-\mu_i)p_i + \mu_i\mathcal G_g(p)_i,\qquad h^{(g')}_\mu(q)_i = (1-\mu_i)q_i + \mu_i\mathcal G_{g'}(q)_i,
\end{equation}
we have for each $k=1,\dots,d-1$,
\begin{equation}
F_{h^{(g)}_\mu(p)_i}(\theta_k) - F_{h^{(g')}_\mu(q)_i}(\theta_k) = (1-\mu_i)\bigl(F_{p_i}(\theta_k) - F_{q_i}(\theta_k)\bigr) + \mu_i \bigl(F_{\mathcal G_g(p)_i}(\theta_k) - F_{\mathcal G_{g'}(q)_i}(\theta_k)\bigr).
\end{equation}
Triangle inequality gives, for each $i \in \X$,
\begin{equation}
\begin{aligned}
\ell^\Theta_\mathrm C\bigl(h^{(g)}_\mu(p)_i, h^{(g')}_\mu(q)_i\bigr) &\le (1-\mu_i) \ell^\Theta_\mathrm C(p_i, q_i) + \mu_i \ell^\Theta_\mathrm C\bigl(\mathcal G_g(p)_i, \mathcal G_{g'}(q)_i \bigr) \\
&\le (1-\mu_i)\ell^\Theta_{\mathrm C, \infty}(p, q) + \mu_i \bigl(\ell^\Theta_{\mathrm C, \infty}(p,q) + \Delta^{-1/2}\lvert g -g'\rvert \bigr) \\
&\le \ell^\Theta_{\mathrm C, \infty}(p, q) + \Delta^{-1/2}\lvert g- g' \rvert.
\end{aligned}
\end{equation}
Taking the maximum over $i \in \X$ yields 
\begin{equation}
\ell^\Theta_{\mathrm C, \infty}\bigl(h^{(g)}_\mu(p), h^{(g')}_\mu(q) \bigr) \le \ell^\Theta_{\mathrm C, \infty}(p, q) + \Delta^{-1/2}\lvert g-g'\rvert,
\end{equation}
and therefore
\begin{equation}
d_\lambda\bigl(\hat h_\mu(p, g), \hat h_\mu(q, g')\bigr) \le \ell^\Theta_{\mathrm C, \infty}(p, q) + \Delta^{-1/2}\lvert g - g' \rvert \le d_\lambda\bigl((p, g), (q,g')\bigr).
\end{equation}
For the statement about fixed points, if $(p, g) \in \mathrm{fix}(\hat h_\mu)$, then necessarily $g = \bar r^\pi$. The first coordinate then reads $p = h_\mu(p)$, and $\mu_{\min} >0$ implies $p \in \mathrm{fix}(\mathcal G)$ by Proposition~\ref{prop:mean-field-fix}. The converse implication is immediate.
\end{proof}

\subsection{Proof of Corollary~\ref{corol:residual-conv-online}}
\begin{proof}[Proof of Corollary~\ref{corol:residual-conv-online}]
For $i \in \X$,
\begin{equation}
\mu_i \ell^\Theta_{\mathrm C}\bigl(p_i, \mathcal G_g(p)_i\bigr) = \ell^\Theta_{\mathrm C}\bigl(p_i, h^{(g)}_\mu (p)_i\bigr).
\end{equation}
Also,
\begin{equation}
\ell^\Theta_{\mathrm C}\bigl(\mathcal G_g(p)_i, \mathcal G(p)_i\bigr) \le \Delta^{-1/2}\lvert g - \bar r^\pi \rvert
\end{equation}
by Lemma~\ref{lem:appC-2} with $p = q$. Hence
\begin{equation}
\mu_i \ell^\Theta_{\mathrm C}\bigl(p_i, \mathcal G(p)_i\bigr) \le \ell^\Theta_{\mathrm C}\bigl(p_i, h^{(g)}_\mu(p)_i\bigr) + \Delta^{-1/2}\lvert g - \bar r ^\pi\rvert.
\end{equation}
Taking the maximum over $i \in \X$,
\begin{equation}
\mu_{\min}\ell^\Theta_{\mathrm C,\infty}\bigl(p, \mathcal G(p)\bigr) \le \ell^\Theta_{\mathrm C, \infty}\bigl(p, h^{(g)}_\mu(p)\bigr) + \Delta^{-1/2}\lvert g - \bar r^\pi\rvert \le d_\lambda\bigl((p, g), \hat h_\mu(p, g)\bigr),
\end{equation}
where the last step uses $\lambda \ge \Delta^{-1/2}$. The gain-error bound is immediate from the definition of $d_\lambda$.
\end{proof}

\subsection{Proof of Theorem~\ref{thm:online-main}}
\begin{proof}[Proof of Theorem~\ref{thm:online-main}]
Let $z_k = (p_k, g_k) \in \mathcal Z$. The coupled SKM recursion \eqref{eq:coupled-skm} can be written as
\begin{equation}
z_{k+1} = z_k + \alpha_k\bigl( \widehat H(z_k, Y_k) - z_k\bigr).
\end{equation}
View each categorical block $p_i$ through its cumulative masses $(F_{p_i}(\theta_1),\ldots,F_{p_i}(\theta_{d-1}))$, so that $\ell^\Theta_{\mathrm C,\infty}$ becomes the block-sup norm of these cumulative-coordinate vectors. On the augmented coordinates $(u,g)$, use the product norm $\lVert(u, g)\rVert_\lambda := \lVert u \rVert_{\mathrm{C},\infty} + \lambda \lvert g \rvert$. Under this identification, $d_\lambda$ is the induced distance on the compact convex set $\mathcal Z$. By Proposition~\ref{prop:uncentered-nonexpansion}, the sampled map is samplewise non-expansive in this norm. By Proposition~\ref{prop:mean-field-online} and Corollary~\ref{corol:fixed-points}, the mean-field map induced by the stationary one-step law is non-expansive with a non-empty set of fixed-points. Since rewards are finite-valued, the uncentered chain $(Y_k)_{k\ge0}$ is finite-state on its feasible support. The fixed-exponent Markovian theorem of \citet{blaser2026asymptotic} therefore applies. The final two bounds then follow from Corollary~\ref{corol:residual-conv-online}. The two-phase residual bound follows by applying Proposition~\ref{prop:centered-markov-two-phase}'s argument on the compact product space $(\mathcal Z, d_\lambda)$, with $\widehat H$ in place of $H$ and $\hat{h}_\mu$ in place of $h_\mu$. The only constants that change are the product-space diameter and the finite-state Poisson constant of the uncentered chain $(Y_k)_{k \ge 0}$. Corollary~\ref{corol:residual-conv-online} then gives the projected-operator and gain-error bounds.
\end{proof}

\newpage
\section{Synchronous gain-free structure}\label{sec:synchronous}

The previous section handles raw rewards by estimating the gain online. We close with a complementary exact-law observation: if one forms a full synchronous sampled backup before choosing categorical representatives, then the centering constant disappears already at the quotient level. This is a stronger, samplewise invariance statement than the fixed-point invariance of the exact quotient operator, but it lives in the idealized space of quotient distributions rather than in the practical fixed-grid categorical representation. To make the statement precise, let
\begin{equation}
\mathbf Y = \bigl((S'_i, R_i)\bigr)_{i \in \X}
\end{equation}
be a synchronous one-step sample such that, for each $i \in \X$,
\begin{equation}
\mathbb P(S'_i = j) = P_{ij},\qquad R_i \mid (S'_i=j) \sim \mathrm{Law}(R_{ij}),
\end{equation}
with the coordinates sampled independently across $i$. Thus $\mathbf Y$ contains one sampled successor and reward for every state. For $\eta \in \mathcal F^\X$ and $g \in \R$, define the exact synchronous sample map by
\begin{equation}
\bigl(\mathcal H_g(\eta, \mathbf Y)\bigr)_i := (\tau_{R_i -g})_\#\eta_{S'_i},\qquad i\in \X.
\end{equation}
Changing $g$ changes each centered reward $R_i-g$ by the same additive amount. Since the update is synchronous, that same shift appears in every output block.
\begin{proposition}\label{prop:sync-gain-free}
For every synchronous sample $\mathbf Y$, every $g, g' \in \R$, and every $\eta \in \mathcal F^\X$,
\begin{equation}
\mathcal H_{g'}(\eta, \mathbf Y) = (\tau_{g-g'})_\# \mathcal H_{g}(\eta, \mathbf Y).
\end{equation}
Consequently,
\begin{equation}
[\mathcal H_g(\eta, \mathbf Y)] = [\mathcal H_{g'}(\eta, \mathbf Y)]\quad \text{in } \mathcal F^\X / \approx.
\end{equation}
Thus the exact synchronous quotient sample is independent of the centering constant.
\end{proposition}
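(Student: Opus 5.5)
The argument is a direct blockwise computation with pushforwards, in the spirit of the proof of Proposition~\ref{prop:tg}, followed by a one-line passage to the quotient. Fix a synchronous sample $\mathbf Y = ((S'_i, R_i))_{i\in\X}$, centering constants $g, g' \in \R$, and $\eta \in \mathcal F^\X$.

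The key observation is that translations compose additively. For all $a, b \in \R$ we have $\tau_a \circ \tau_b = \tau_{a+b}$, and hence $(\tau_a)_\#(\tau_b)_\# = (\tau_{a+b})_\#$ on $\mathcal P_2(\R)$. Translation preserves finite second moments, so every block remains in $\mathcal P_2(\R)$ and the map $\mathcal H_g$ indeed takes values in $\mathcal F^\X$.

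Now fix a state $i \in \X$ and write the centered reward as
\begin{equation}
R_i - g' = (R_i - g) + (g - g').
\end{equation}
Combining this with the composition rule gives
\begin{equation}
\bigl(\mathcal H_{g'}(\eta,\mathbf Y)\bigr)_i = (\tau_{R_i - g'})_\#\eta_{S'_i} = (\tau_{g-g'})_\#(\tau_{R_i-g})_\#\eta_{S'_i} = (\tau_{g-g'})_\#\bigl(\mathcal H_g(\eta,\mathbf Y)\bigr)_i .
\end{equation}
The shift $g - g'$ does not depend on $i$, on the sampled successor $S'_i$, or on the sampled reward $R_i$. The identity above therefore holds for every block with the same constant $c = g - g'$. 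By the definition of common translation on $\mathcal F^\X$, this is exactly the equality
\begin{equation}
\mathcal H_{g'}(\eta,\mathbf Y) = (\tau_{g-g'})_\#\mathcal H_g(\eta,\mathbf Y).
\end{equation}

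The quotient statement follows immediately from the definition of $\approx$, taking $c = g - g'$. There is no genuine obstacle in this argument. The only point needing care is that the shift is common to all blocks, and this holds precisely because the update is synchronous: every block is updated with the same $g$. In an asynchronous one-state update, by contrast, unchanged blocks would not be translated, and the same computation would fail.
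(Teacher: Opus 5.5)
Your proof is correct and matches the paper's own argument: you split $R_i - g' = (R_i - g) + (g - g')$, compose the translations blockwise, and note that the shift $g - g'$ is common to every block, so the two families differ by a common translation and define the same class in $\mathcal F^\X/\approx$. Your extra checks (translation preserves finite second moments, and $c = g - g'$ fits the direction of $\approx$) are correct details that the paper leaves implicit.
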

\begin{proof}
For every state $i$, $R_i - g' = (R_i - g) + (g-g')$, so $(\tau_{R_i - g'})_\# \eta_{S'_i} = (\tau_{g-g'})_\# \bigl((\tau_{R_i -g})_\# \eta_{S'_i} \bigr)$. The translation $(\tau_{g-g'})_\#$ is common to every state block, so the two output families represent the same element in the equivalence class.
\end{proof}
Taking the representative with $g=\bar r^\pi$ connects this gain-free quotient sample to the exact centered operator:
\begin{equation}
\E[\mathcal H_{\bar r^\pi}(\eta, \mathbf Y)] = \mathcal T \eta.
\end{equation}
To see this, condition on the successor state $S'_i = j$. The $i$th block of $\mathcal H_{\bar r ^\pi}(\eta, \mathbf Y)$  is then $(\tau_{R_i - \bar r^\pi})_\# \eta_j$ whose conditional law is $\nu_{ij}\,\ast\,\eta_j$. Averaging over $j$ with weights $P_{ij}$ gives exactly the $i$th component of $\mathcal T\eta$ from \eqref{eq:avg-operator}.

Thus synchronous exact samples provide a gain-free sample representation of the quotient-law target. This does not contradict the need for the online-gain categorical recursion above: once we choose a fixed categorical grid, projecting a translated representative can change the coefficients, so the exact quotient symmetry is not automatically preserved by a practical fixed-grid categorical update.

\newpage
\section{Neural function approximation results}\label{sec:pendulum}

\begin{figure}[!htb]
\centering
\begin{subfigure}{0.49\textwidth}
    \centering
    \includegraphics[width=\textwidth]{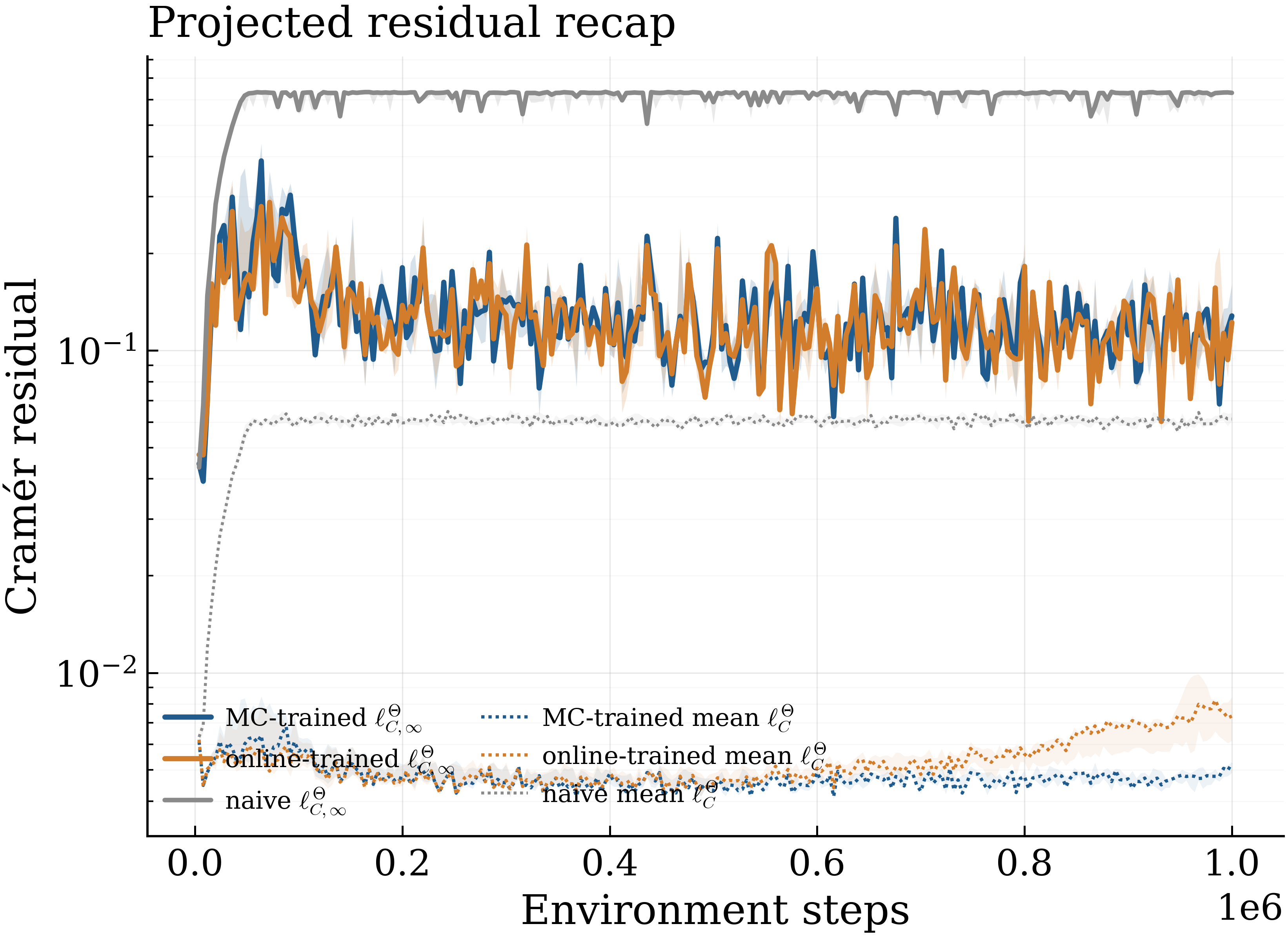}
    \caption{Projected-operator residuals.}
    \label{fig:pendulum_projected_residual}
\end{subfigure}
\hfill
\begin{subfigure}{0.49\textwidth}
    \centering
    \includegraphics[width=\textwidth]{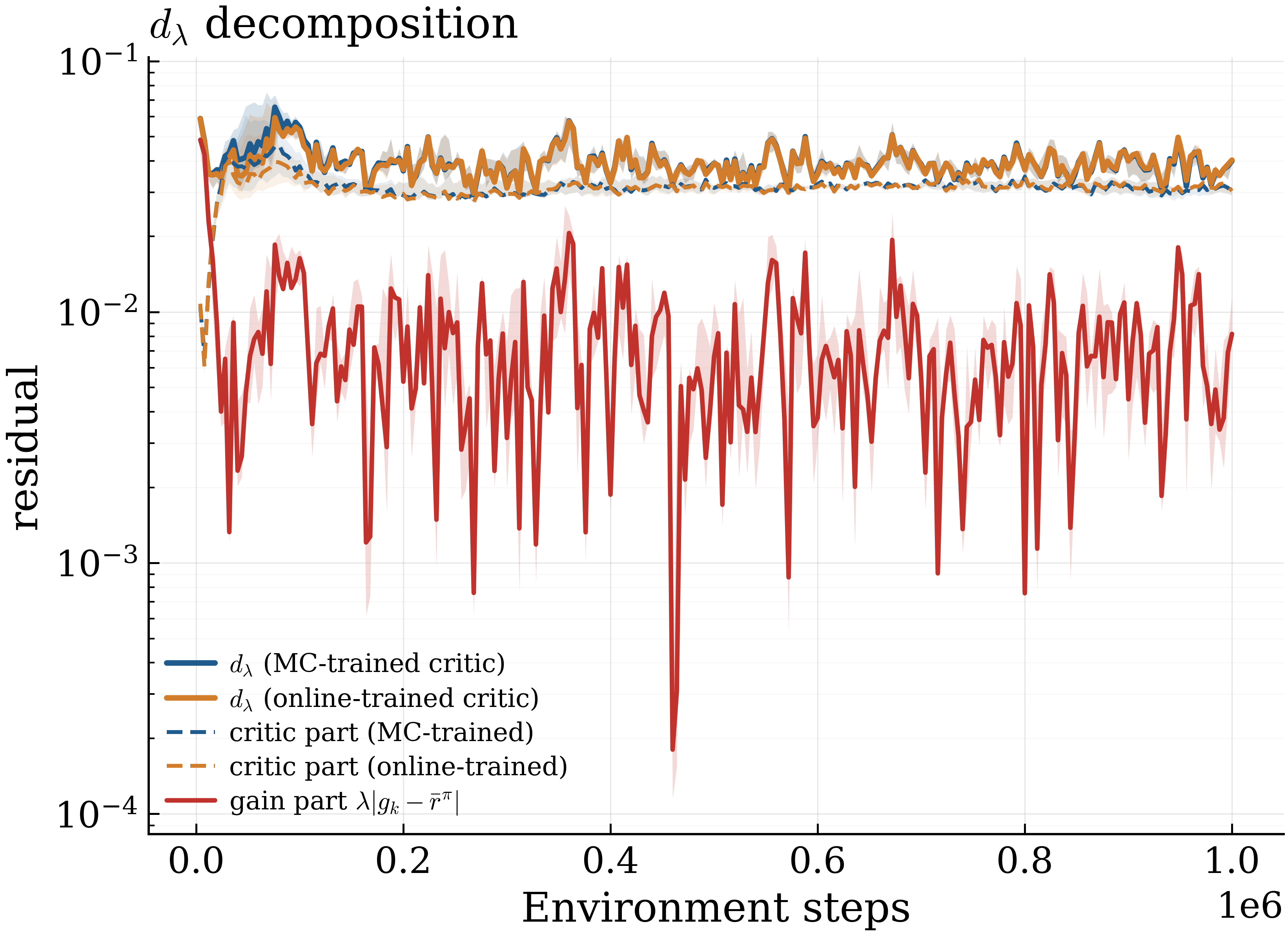}
    \caption{Product residual decomposition.}
    \label{fig:pendulum_product_residual}
\end{subfigure}
\caption{
Continuous-state \texttt{Pendulum-v1} evaluation.
\textbf{(a)} We evaluate the MC-trained critic, the online estimated gain critic, and the naive raw-reward critic against the correctly centered projected operator $\mathcal G$. Solid curves show the empirical supremum residual over validation states, and dotted curves show the residual averaged over states.
\textbf{(b)} For the online-gain critic, we report the product residual $d_\lambda$ together with its categorical and gain-estimation components. This corresponds to the decomposition used in Corollary~\ref{corol:residual-conv-online}.
}
\label{fig:pendulum_residuals}
\end{figure}

In this section, we consider a continuous-state fixed-policy evaluation problem based on \texttt{Pendulum-v1} \citep{gym}. This environment is a pendulum swing-up task with observation space $(\cos\theta,\sin\theta,\dot\theta)$ and a one dimensional bounded torque action $a\in[-2,2]$. The raw reward is
\begin{equation}
r_{\mathrm{raw}}(\theta,\dot\theta,a) = -\theta^2 - 0.1\dot\theta^2 - 0.001a^2,
\end{equation}
which penalizes angle error, angular velocity, and control effort. In the experiment we normalize this reward to $[0,1]$. The goal of this control problem is to keep the pendulum close to the upright position with small velocity and control cost. Since the state space is continuous, the tabular recursions from the previous experiment cannot be applied directly. Therefore, we use neural networks to parameterize the quotient-categorical critic. Data is collected using a fixed stochastic energy-based policy
\begin{equation}
a_t = \mathrm{clip}\!\left(K_E \dot\theta_t \cos\theta_t + K_D \dot\theta_t + \sigma\epsilon_t, -2,2 \right), \qquad \epsilon_t\sim\mathcal N(0,1).
\end{equation}
We compare the proposed gain-centered quotient-categorical critic with a naive categorical critic trained with raw rewards. We also train a scalar average-reward TD critic, which is used only as a reference for the learned categorical mean. Since the transition matrix is not available, the projected operator residual $\ell^\Theta_{C,\infty}(p_k,G(p_k))$ is estimated on held-out validation states using Monte Carlo (MC) one-step rollouts.

Figure~\ref{fig:pendulum_residuals} reports the main residual quantities in the continuous-state \texttt{Pendulum-v1} experiment. In Figure~\ref{fig:pendulum_residuals}(a), we evaluate the critic with MC-trained gain, the online-gain critic, and the raw-reward critic against the correctly-centered projected operator $\mathcal G$. The solid curves report the empirical supremum distance, corresponding to $\ell^\Theta_{\mathrm C,\infty}(p_k,\mathcal G(p_k))$, and the dotted curves report the averaged Cram\'er residual over states. Both proposed critics obtain substantially smaller residuals than the naive critic, which is consistent with Proposition~\ref{prop:counterexample}: the raw-reward categorical backup is not an SA of the correctly centered operator $\mathcal G$. Note that since this experiment is using neural function approximation, we don't expect the exact theoretical results to hold verbatim, and this experiment should be interpreted as a qualitative illustration of the arguments of the present work.

Figure~\ref{fig:pendulum_residuals}(b) evaluates the coupled online-gain recursion from Section~\ref{sec:uncentered}. Here, the critic is trained with the centered shift $r_t-g_t$, while the gain estimate $g_t$ is updated from the same reward samples. We report the product residual induced by the metric $d_\lambda$ in \eqref{eq:product-metric}, together with its categorical component and the gain component $\lambda |g_t-\bar r^\pi|$, where $\bar r^\pi$ is approximated by the MC gain estimate. The decrease of these terms is consistent with Theorem~\ref{thm:online-main}. Moreover, Corollary~\ref{corol:residual-conv-online} shows that this product residual controls both the projected-operator residual and the gain error. All of the experiments in this section were run on a cluster with four A100 GPUs over an estimated fifteen hours.
\end{document}